\newtheorem{theorem}{Theorem}[section]
\newtheorem{lemma}[theorem]{Lemma}
\newtheorem{remark}{Remark}[section]
\newtheorem{proposition}{Proposition}
\definecolor{lemon}{HTML}{FDFFCC}
\definecolor{Gainsboro}{rgb}{0.86, 0.86, 0.86}
\definecolor{Gray}{gray}{0.95}
\definecolor{LightCyan}{rgb}{0.88,1,1}
\definecolor{dm-blue-500}{RGB}{0, 69, 177}
\definecolor{dm-purple-500}{RGB}{105,50,230}
\definecolor{dm-red-500}{RGB}{255,122,122}
\definecolor{backred}{RGB}{255, 190, 190}
\definecolor{backblue}{RGB}{220, 230, 250}
\definecolor{backgreen}{RGB}{220, 250, 230}
\newtcbox{\bluetab}{on line, rounded corners, box align=base, colback=backblue, colframe=white, size=fbox, arc=3pt, before upper=\strut, top=-2pt, bottom=-4pt, left=-2pt, right=-2pt, boxrule=0pt}
\newtcbox{\redtab}{on line, box align=base, colback=backred, colframe=white, size=fbox, arc=3pt, before upper=\strut, top=-2pt, bottom=-4pt, left=-2pt, right=-2pt, boxrule=0pt}
\newtcbox{\greentab}{on line, box align=base, colback=backgreen, colframe=white, size=fbox, arc=3pt, before upper=\strut, top=-2pt, bottom=-4pt, left=-2pt, right=-2pt, boxrule=0pt}
\newcommand{\redtextblock}[1]{{\redtab{#1}}}
\newcommand{\bluetextblock}[1]{{\bluetab{#1}}}
\newcommand{\greentextblock}[1]{{\greentab{#1}}}
\newcommand{\ours}{\textsc{ExSearch}\xspace}
\newcommand{\estep}{E-step\xspace}
\newcommand{\mstep}{M-step\xspace}
\newcommand{\first}{\textit{thinking}\xspace}
\newcommand{\second}{\textit{search}\xspace}
\newcommand{\third}{\textit{recording}\xspace}
\newcommand{\elbo}{\text{ELBO}}
\newcommand{\z}{\boldsymbol{z}}
\newcommand{\doc}{\boldsymbol{d}}
\acrodef{rag}{retrieval-augmented generation}
\acrodef{RAG}{Retrieval-augmented Generation}
\newcommand{\blue}[1]{{\textcolor{blue}{{#1}}}}
\definecolor{lightblue}{RGB}{220,235,250}
\title{Iterative Self-Incentivization Empowers Large Language Models as Agentic Searchers}
\author{%
  Zhengliang Shi${^1}$\space\space\space\space
  Lingyong Yan${^2}$\space\space\space\space
  Dawei Yin${^2}$\\
  {\bf
    Suzan Verberne${^3}$\space\space\space\space
      Maarten de Rijke${^4}$\space\space\space\space
      Zhaochun Ren${^3}$\thanks{~Corresponding author.}
  }\\
  $^{1}$Shandong University, Qingdao, China\space\space
  $^{2}$Baidu. Inc, Beijing, China\\
  $^{3}$Leiden University, Leiden, The Netherland \\
  $^{4}$University of Amsterdam, Amsterdam, The Netherland \\
  \texttt{\{zhengliang.shii, lingyongy\}@gmail.com\space\space
  yindawei@acm.org}\\
  \texttt{\{s.verberne, z.ren\}@liacs.leidenuniv.nl\space\space
  M.deRijke@uva.nl}
}
\begin{document}
\maketitle

\doparttoc 
\faketableofcontents 

\begin{abstract}
Large language models (LLMs) have been widely integrated into information retrieval to advance traditional techniques.  
However, effectively enabling LLMs to seek accurate knowledge in complex tasks remains a challenge due to the complexity of multi-hop queries as well as the irrelevant retrieved content.  
To address these limitations, we propose \ours, an agentic search framework, where the LLM learns to retrieve useful information as the reasoning unfolds through a self-incentivized process.
At each step, the LLM decides what to retrieve (\first), triggers an external retriever (\second), and extracts fine-grained evidence (\third) to support next-step reasoning.  
To enable LLM with this capability, \ours adopts a \textbf{G}eneralized \textbf{E}xpectation-\textbf{M}aximization algorithm.
In the E-step, the LLM generates multiple search trajectories and assigns an importance weight to each; the M-step trains the LLM on them with a re-weighted loss function.
This creates a self-incentivized loop, where the LLM iteratively learns from its own generated data, progressively improving itself for search.
We further theoretically analyze this training process, establishing convergence guarantees. Extensive experiments on four knowledge-intensive benchmarks show that \ours substantially outperforms baselines, e.g., +7.8\% improvement on exact match score.
Motivated by these promising results, we introduce \ours-Zoo, an extension that extends our method to broader scenarios, to facilitate future work.
\end{abstract}

\section{Introduction}\label{sec:intro}

Information Retrieval (IR), one of the most fundamental data mining techniques, aims to understand complex queries and extract relevant information from external sources for users~\cite{kobayashi2000information, fan2024survey}.  
Recently, large language models (LLMs), which exhibit remarkable language understanding and reasoning abilities, have been widely integrated into IR to enhance traditional techniques~\cite{yue2024retrieval, huang2023survey}.  
For example, some work uses LLMs for query rewriting~\cite{shao2023enhancing} or document re-ranking~\cite{sun2023chatgpt} to augment search engines, while other work uses LLMs to summarize information and generate accurate responses~\cite{xu2023recomp}.

Despite the progress made by LLMs in IR applications, enabling LLMs to effectively seek accurate knowledge in complex downstream tasks 
 remains a challenge~\cite{gao2023retrieval, wang2024richrag, asai2023self}.
Specifically, many real-world tasks, such as multi-hop question answering, require iterative and dynamic retrieval, where directly issuing a complex query composed of multiple sub-queries often results in low retrieval coverage~\cite{kang2025improving, kim2024learning}.
As retrieval is typically imperfect~\cite{yoran2023making, yan2024corrective}, even for a simple one-hop query, the retrieved results contain irrelevant content, leading to a misleading context~\cite{asai2023self, yu2024rankrag}.
Thus, it is crucial to teach LLMs how to interact with retrievers and reflect on retrieved content as reasoning unfolds.
To address these limitations, previous work typically cascades information-seeking pipelines (e.g., query decomposition~\cite{ma-etal-2023-query} or knowledge extraction~\cite{qi2024adelie}), and trains LLMs for specific stages with synthetic data independently~\cite{li2024rag,yu2024rankrag}.  
However, aligning different retrieval stages with end-to-end supervision remains an active research area.

\begin{figure}[!t]
	\includegraphics[width=0.98\linewidth]{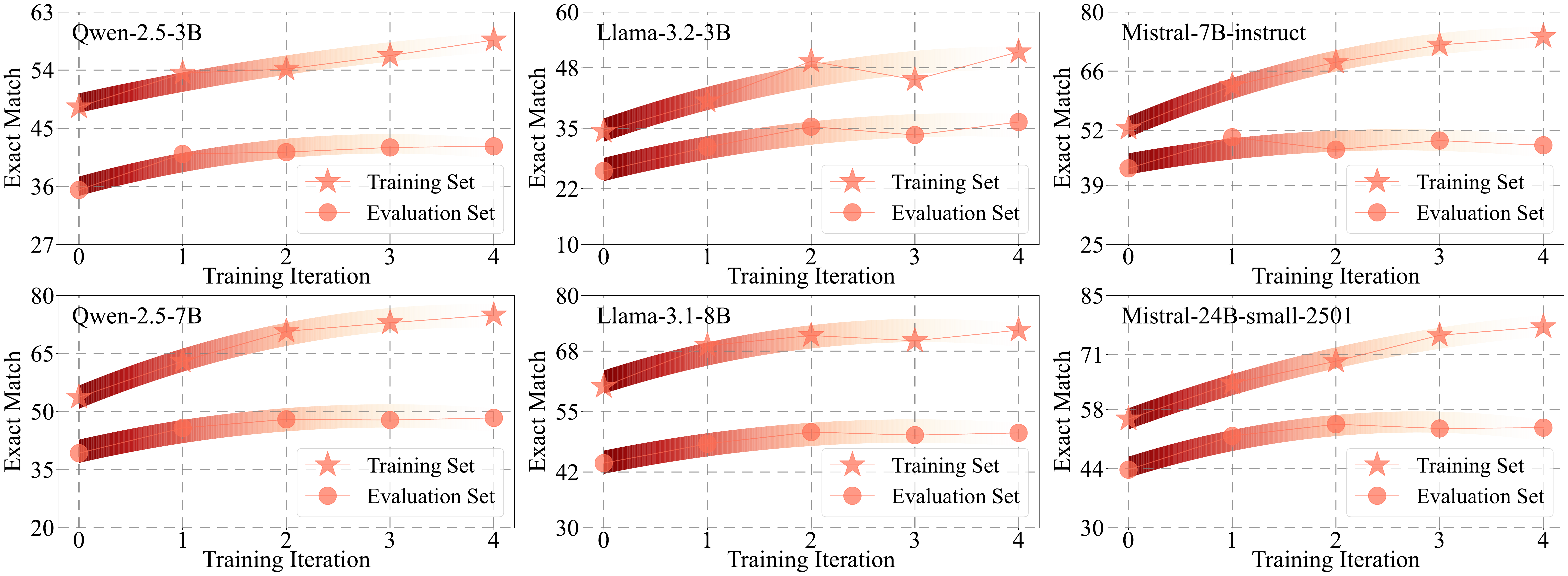}
    \caption{Performance on HotpotQA dataset when applying our \ours to different LLMs.}
    \label{fig:intro}
\end{figure}
\vspace{-1mm}

In this paper, we propose \ours, an \textbf{ex}ploratory \textbf{search} framework that empowers an LLM as an agent with fine-grained reasoning and search capabilities through a self-incentivized process.
Specifically, \ours formalizes three core actions:
(i) \first: generating a query based on the evolving search trajectory;
(ii) \second: triggering a retriever; and
(iii) \third: extracting fine-grained evidence to support subsequent reasoning.
Given an input task, the LLM interleaves these actions to progressively explore a search trajectory, which is sequentially composed of sub-queries, retrieved documents, and supporting evidence.
The final answer is generated based on the entire trajectory, effectively grounding the generated response in external knowledge.

To incentivize the LLM with this agentic search, \ours adopts the Generalized Expectation-Maximization (GEM) algorithm, which alternates between two main steps: \textit{trajectory exploration} (E-step) and \textit{re-weighted trajectory learning} (M-step).
The key innovation is treating search trajectories as latent variables, training the LLM to reason and search end-to-end.
In the E-step, we approximate the current distribution over search trajectories through importance sampling~\cite{tokdar2010importance}.
For each input task, the LLM generates candidate trajectories, each automatically assigned an importance weight based on how well it supports the correct answer.
In the M-step, we reweight these trajectories to construct an evidence lower bound (ELBO), which is then maximized to update the LLM parameters.
This step enables the LLM to learn from its own generated data, encouraging it to generate more supportive trajectories and accurate answers.
By interleaving the EM steps, we form a self-incentivized loop that progressively optimizes the model to search relevant knowledge and reason over it.

Our method tightly integrates LLM with IR and diverges from prior work by enabling a unified LLM for dynamic document retrieval, evidence extraction, and answer aggregation through a self-improving process.
Extensive experiments on a wide range of knowledge-intensive benchmarks demonstrate the improvement of \ours over strong baselines.
To further understand its advantages, we provide a theoretical analysis, which illustrates that the proposed self-incentivized framework ensures stable convergence, as briefly shown in Figure~\ref{fig:intro}.
These promising results motivate us to extend \ours to more diverse scenarios. We therefore introduce \ours-Zoo, a comprehensive resource that extends \ours by two dimensions:
(i) diverse backbone LLMs across different model families (LLaMA, Qwen) and scales (7B, 24B parameters); and (ii) extended actions, i.e., document re-ranking, to enrich the action space in vanilla \ours.

Our contributions are summarized as follows:
\begin{enumerate*}[label=(\roman*)] 
    \item We propose \ours, an exploratory search framework that empowers an LLM for agentic retrieval, evidence extraction, and answer aggregation through a self-incentivization process.
    \item We provide theoretical analysis on our training process and establish convergence guarantees;
    \item We conduct extensive experiments on four benchmarks, demonstrating the superiority of the proposed method; and
    \item We introduce \ours-Zoo, an extended resource that generalizes \ours to more scenarios, facilitating future research.
    \footnote{~Code is available on \href{https://github.com/mangopy/SearchLM/tree/v1}{\faGithub\textbf{ }\ours}.}
\end{enumerate*}

\section{Method: \ours}\label{sec:method}

\begin{figure}[!t]
    \centering
	\includegraphics[width=\linewidth]{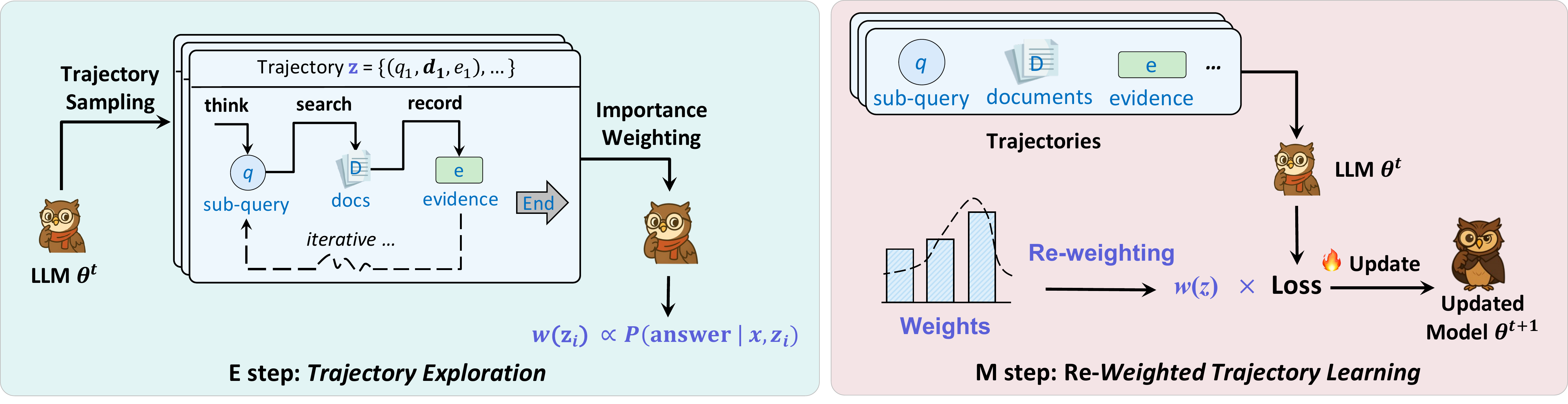}
    \caption{Overall framework of the Expectation-Maximization process in \ours. The E-step samples search trajectories and assigns each a weight based on its likelihood leading to a correct answer. The special token \texttt{End} marks the completion of reasoning. The M-step trains the LLM, encouraging the LLM to generate more supportive search trajectories and accurate answers.}
    \label{fig:method}
\end{figure}

In this section, we first introduce how the proposed method, \ours, models the iterative and dynamic search strategy as an agentic \textit{information-seeking}~\cite{case2016looking,zhang2024agentic} process.
Next, we explain how the LLM-based search agent in \ours can be incentivized to learn this search pattern through a self-improving process.
The theoretical analysis of this learning process is presented in \S~\ref{sec:theory}.

\subsection{Search Process in \ours}\label{sec:inference}

Our method draws inspiration from \textit{Exploratory Search}~\citep{exploratory}, a well-established paradigm that conceptualizes searching as a dynamic, iterative process, involving the continuous refinement of queries based on retrieved intermediate results. 
Accordingly, \ours simulates this process through an agentic flow, interleaving several actions: \first, \second, and \third, which progressively explore a search trajectory.
Specifically, in \first, \ours determines the next retrieval action by formulating a query $x_i$ based on the current trajectory.
In \second, an external retrieval module $\mathcal{R}$ is triggered to obtain the top-$K$ relevant documents $\doc_i = \mathcal{R}(x_i) = {d_{i,j} \mid j \in [K]}$ for the query $x_i$.
Finally, in \third, \ours reflects on the retrieved documents and extracts fine-grained evidence $e_i$ to support the subsequent retrieval stage.
Formally, we model the LLM-explored search trajectory $\z$ as a sequence $\z = \{(x_i, \doc_i, e_i) \mid i \in [|\z|]\}$, with a joint likelihood:
\begin{equation}
    p(\z \mid x; \theta) = \prod\nolimits_{i=1}^{|\z|} p\left((x_i, \doc_i, e_i) | x, \z_{<i}; \theta \right) = \prod\nolimits_{i=1}^{|\z|} p\left(x_i \mid x, \z_{<i}; \theta \right)\cdot p\left(e_i \mid x_i, \mathcal{R}(x_i); \theta \right),
    \label{eq:joint}
\end{equation}
where the $\theta$ denotes the LLM parameters.
After such an interleaved search and reasoning process, the LLM generates the final answer $y$ by aggregating the information from $\z$, denoted as: $y \sim p(y \mid x,\z; \theta)$.
The system prompt used in \ours is provided in Appendix~\ref{sec:app:system-prompt}.

\subsection{Iterative Self-Incentivization for \ours}\label{sec:train}

Figure~\ref{fig:method} presents the overall framework.
\ours empowers the LLM with agentic search capabilities by alternating between the \estep and \mstep, leading to progressive improvements.
We begin by deriving the training objective for \ours.
Next, we detail the optimization process and demonstrate how the LLM learns through a self-improving loop from the perspective of the \textbf{G}eneralized \textbf{E}xpectation-\textbf{M}aximization algorithm~\cite{moon1996expectation}.

\noindent\textbf{Deriving the Learning Objective.}
Given the user's input task $x$, we define the training objective in \ours as optimizing the LLM to generate the correct answer $y$ after gathering useful information, which can be modeled as the joint probability of the search and generation process:
\begin{equation}
        \log p(y \mid x; \theta)  =  \log \sum\nolimits_{\z} p(y, \z \mid x; \theta).
        \label{eq:margin}
\end{equation}
Here $\z = \{(x_i, \doc_i, e_i) \mid i \in [|\z|]\}$ which denotes a trajectory interleaving the \first, \second and \third introduced in \S~\ref{sec:inference}.
However, marginalizing over all possible trajectories $\z$ is generally intractable. 
To address this, we treat $\z$ as a latent variable and derive a variational lower bound.
Specifically, we introduce a proposal distribution $q(\z \mid x)$ to estimate the sampling space of $\z$ and apply \textit{Jensen's inequality} to the marginal log-likelihood in Eq.~\eqref{eq:margin}:
\begin{equation}
     \log \sum_{\z}q(\z | x)\frac{p(y, \z | x; \theta)}{q(\z | x)} \geq \sum_{\z} q(\z | x) \log \frac{p(y, \z | x; \theta)}{q(\z | x)} 
        =\mathbb{E}_{\z \sim q(\z | x)} \bigg[ \log \frac{p(y, \z | x; \theta)}{q(\z | x)}\bigg].
    \label{eq:elbo}
\end{equation}
We refer to this expectation $\mathbb{E}_{\z \sim q(\z \mid x)}[\log \frac{p(y, \z \mid x; \theta)}{q(\z \mid x)}]$ as the variational evidence lower bound (ELBO) of $\log p(y|x; \theta)$. 
The bound ($\geq$) becomes tight if and only if the proposal distribution matches the true posterior: $q(\z | x) \approx p(\z \mid y, x;\theta)$.
Therefore, we can naturally improve the $\log p(y|x; \theta)$ from the perspective of Expectation-Maximization.
In each iteration, the \estep estimates the distribution $p(\z\ | y,x;\theta)$ over trajectory $\z$ while the subsequent \mstep updates the LLM parameters $\theta$ by maximizing the $\elbo$ under the sampled trajectories.

\noindent\textbf{\estep: Trajectory Exploration.}
In the $t$th training, to achieve the boundedness of Eq.~\eqref{eq:elbo}, we assume $q(\z|x) \approx p(\z \mid x, y; \theta^{t})$ and rewrite the $\elbo$ as follows. The $\mathcal{H}(\cdot)$ denotes the entropy.
\begin{equation}
    \begin{aligned}
        \text{ELBO} = \mathbb{E}_{\z \sim p(\z \mid x, y; \theta^t)}\big[\log p(y, \z\mid x;\theta)\big] + \underbrace{\mathcal{H}(p(\z \mid x, y; \theta^t))}_{\text{entropy of $p(\z \mid x, y; \theta^t)$}}.
    \end{aligned}\label{eq:elbo-posterior}
\end{equation}
The entropy $\mathcal{H}(p(\z | x; \theta^{t}))$ is non-related to $\theta$ and can be denoted as a constant $c$.
As pointed by prior work~\cite{ren2022variational,zhang2023variational,tokdar2010importance}, sampling from the exact posterior (e.g., $p(\z | x, y; \theta^{t})$) is typically intractable.
Therefore, we apply an \textit{importance sampling} strategy~\cite{elvira2021advances,tokdar2010importance}. The main conception is instead sampling from a simple distribution (i.e., $p(\z | x;\theta^{t})$ in this work), and assigning each sample with an importance weight.
This corresponds to producing the trajectory $\z$ as formulated in Eq.~\eqref{eq:joint} by the current LLM with parameters $\theta^{t}$.
With this strategy, we rewrite Eq.~\eqref{eq:elbo-posterior} as:
\begin{equation}
    \begin{aligned}
        \text{ELBO} &= \mathbb{E}_{\z \sim p(\z \mid x; \theta^{t})}\left[w(\z) \log p(y, \z\mid x;\theta)\right] + c,
        \end{aligned}
    \label{eq:elbo-prior}
\end{equation}
where $w(\z) \coloneqq p(\z \mid x, y; \theta^{t}) / p(\z \mid x; \theta^{t})$ denotes the importance weight.
Using Bayes' theorem, we can decompose the posterior as: $p(\z \mid x, y; \theta^t) \propto p(\z \mid x; \theta^t) \times p(y \mid x, \z; \theta^t)$, which allows the importance weight to be further simplified as $w(\z) \propto p(y \mid \z, x; \theta^{t})$. 
This weight directly reflects how well the search trajectory $\z$ supports the LLM in generating the correct answer $y$.

\noindent\textbf{\mstep: Re-weighted Trajectory Learning.}
In the \mstep, we update the model parameters $\theta$ by maximizing the $\elbo$ in Eq.~\eqref{eq:elbo-prior}, formulated as: $\theta = \arg \max_{\theta} \elbo$.
By applying the product rule to decompose the joint likelihood $p(y,\z | x; \theta)$, we can split the $\elbo$ into two terms as follows, which corresponds to two objectives: (i) learning to search via $\mathcal{L_R}$; (ii) learning to answer via $\mathcal{L_A}$.
\begin{equation}
\begin{aligned}
\theta &= \arg \max_{\theta} \mathbb{E}_{\z \sim p(\z \mid x; \theta^{t})}\big[ w(\z) \log p(y, \z \mid x;\theta) \big] \\
     & = \arg \max_{\theta} \mathbb{E}_{\z \sim p(\z \mid x; \theta^{t})}\bigg[ \underbrace{w(\z) \log p(\z \mid x; \theta)}_{\text{learning to reason}} +  \underbrace{w(\z) \log p(y \mid x, \z;\theta)}_{\text{learning to answer}} \bigg].
\end{aligned}\label{eq:loss}
\end{equation}
Therefore, we can understand and interpret the training process in the \mstep from both retrieval and answer aggregation aspects.
In more details, the term $\mathcal{L_R} \coloneqq  w(\z) \log p(\z \mid x; \theta)$ encourages the model to generate high-quality search trajectory $\z = \{(x_i, \doc_i, e_i) \mid i \in [|\z|]\}$ to gather relevant documents.
This can be presented as $\mathcal{L_R} = \sum\nolimits_{i=0}^{|\z|}\log p(x_i \mid x, \z_{<i}; \theta) + \log p(e_i \mid \doc_i, x_i; \theta)$, where $\z_{<i} = \{(x_{<i}, \doc_{<i}, e_{<i})\}$ denotes the trajectory up to $i$th step.
Similarly, the $\mathcal{L_A} \coloneqq \log p(y \mid x, \z; \theta)$ trains the model to aggregate all useful information from $\z$ to produce the final answer.
The overall optimization in this \mstep can be achieved via stochastic gradient descent, which is highly compatible to existing computation library like Pytorch.
The gradient $\nabla_{\theta}$ with respect to $\theta$ can be computed as: 
\begin{equation}
    \nabla_{\theta}\elbo(\theta) = - \mathbb{E}_{\z \sim p(\z \mid x; \theta^i)} \left[ w(\z) \nabla_{\theta} (\mathcal{L_R} +\mathcal{L_A})\right].
    \label{eq:elbo-gradient}
\end{equation}

\newcommand{\CommentRight}[1]{\hfill \textcolor{blue}{$\triangleright$~\texttt{#1}}}
\RestyleAlgo{ruled}
\begin{algorithm}[t]
\KwIn{Initial LLM $\theta^{0}$; Training data $\mathcal{D} = \{(x^{i}, y^{i})\}_{i=1}^{N}$; Training iteration $N$; Maximal step $T$.}
\For{$t = 0$ to $N$}{
    \textcolor{dm-red-500}{\texttt{// E-step: Trajectory Exploration}} \\
    \For{example $(x, y)$ \textbf{in} training set}{
        \While{no \text{end} and $i < T$}{
                Generate sub-query $x_i \sim p(x_i \mid x, \z; \theta^{t})$ \CommentRight{\first: generate sub-query} \\
                Retrieve document candidates $\doc_i = \mathcal{R}(x_i)$ \CommentRight{\second: retrieve documents} \\
                Extract evidence $e_i \sim p(e_i \mid x_i, \doc_i; \theta^t)$ \CommentRight{\third: extract key evidence} \\
                $\z \leftarrow \z \cup (x_i, \doc_i, e_i)$ \CommentRight{append actions into trajectory}\\
        }
        Compute importance weight: $w(\z) = p(y \mid x, \z ; \theta^{t})$\\
    }
    \textcolor{dm-red-500}{\texttt{// M-step: Re-Weighted Trajectory Learning}} \\
    $\mathcal{L_{R}}(x,y;\theta) :=  \mathbb{E}_{\z \sim p(\z \mid x; \theta^{t})}\left[w(\z) \log p(\z \mid x; \theta)\right]$ \CommentRight{define the loss for reasoning} \\
    $\mathcal{L_{A}}(x,y;\theta) :=  \mathbb{E}_{\z \sim p(\z \mid x; \theta^{t})}\left[w(\z) \log p(y \mid x, \z; \theta) \right] $ \CommentRight{define the loss for answer}\\
    $\theta^{t+1} = \arg \max_{\theta} \mathbb{E}_{(x,y)\sim \mathcal{D}}\left[\mathcal{L_{R}}(x,y;\theta) + \mathcal{L_{A}}(x,y;\theta) \right] $ \CommentRight{optimize through gradient} \\
    \If{no improvement on validation set}{
        Stop training  \CommentRight{Early Stop} \\
    }
}
\KwOut{$\theta$}
\caption{Training process in \ours, which alternates between the \texttt{E}-step and \texttt{M}-step.}
\label{algo:opt}
\end{algorithm}

\noindent\textbf{Pseudo Algorithm.}
To clarify the overview procedure in \ours, we present the pseudo-algorithm in Algorithm~\ref{algo:opt}.
In the \estep, the LLM generates search trajectories on its own and evaluates each one with an importance weight $w(\z)$, reflecting how well the trajectory supports generating the correct answer.
In the \mstep, the LLM is trained on these trajectories using a weighted loss, learning to generate supportive search paths and accurate answers.

\vspace{-1mm}
\section{Theoretical Analysis}\label{sec:theory}
\vspace{-1mm}

During the training of \ours, we alternate between the \estep and \mstep to progressively improve the LLM.
Below, we prove the training convergence by showing the non-decreasing optimization after each iteration and analyzing the upper-bounded property of the learning objective.

\begin{lemma}[\textbf{Non-decreasing Optimization}]
After training in the $t{\text{th}}$ ($t \in \mathbb{Z}^{+}$) iteration, the overall learning objective $\log p(y \mid x, \theta^{t+1})$ satisfies $\log p(y \mid x, \theta^{t+1}) \geq \log p(y \mid x, \theta^{t})$.
\vspace{-3mm}
\end{lemma}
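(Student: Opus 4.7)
The plan is to follow the classical monotonicity argument for (generalized) EM, adapted to the derivation given in \S\ref{sec:train}. The starting point is the well-known identity that for any proposal $q(\z \mid x)$,
\begin{equation*}
\log p(y \mid x; \theta) \;=\; \elbo(q, \theta) \;+\; \mathrm{KL}\!\left(q(\z \mid x) \,\Vert\, p(\z \mid x, y; \theta)\right),
\end{equation*}
which is exactly the difference between the left- and right-hand sides of Eq.~\eqref{eq:elbo} rewritten using Bayes' rule. Since the KL term is non-negative, $\log p(y \mid x; \theta) \geq \elbo(q, \theta)$ with equality if and only if $q(\z \mid x) = p(\z \mid x, y; \theta)$.

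First, I would instantiate this identity at iteration $t$ with the proposal $q^{t}(\z \mid x) \coloneqq p(\z \mid x, y; \theta^{t})$ chosen in the \estep. This makes the KL term vanish, giving the tightness relation $\log p(y \mid x; \theta^{t}) = \elbo(q^{t}, \theta^{t})$. Second, I would invoke the M-step update rule in Eq.~\eqref{eq:loss}: because $\theta^{t+1}$ is obtained by (at least one step of) gradient ascent on $\elbo(q^{t}, \theta)$, we have $\elbo(q^{t}, \theta^{t+1}) \geq \elbo(q^{t}, \theta^{t})$. This is the \emph{generalized} part of GEM --- we do not need a global maximizer, only a non-decrease, which is guaranteed by a sufficiently small step size (or an early-stopping safeguard as in Algorithm~\ref{algo:opt}). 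Third, I would apply the same decomposition identity at $\theta^{t+1}$ with the \emph{same} proposal $q^{t}$ to obtain $\log p(y \mid x; \theta^{t+1}) \geq \elbo(q^{t}, \theta^{t+1})$, since the KL term evaluated at the new parameters is still non-negative. Chaining the three inequalities yields $\log p(y \mid x; \theta^{t+1}) \geq \elbo(q^{t}, \theta^{t+1}) \geq \elbo(q^{t}, \theta^{t}) = \log p(y \mid x; \theta^{t})$, which is the claim.

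The main obstacle I anticipate is that \ours does not draw from the true posterior $p(\z \mid x, y; \theta^{t})$ but from the prior $p(\z \mid x; \theta^{t})$ via importance sampling with weights $w(\z)$. Consequently the E-step identification $q^{t} = p(\z \mid x, y; \theta^{t})$ holds only in expectation, and in the finite-sample regime the estimator of $\elbo(q^{t}, \theta)$ is stochastic. I would address this either (i) by stating the lemma in expectation over trajectories, so that the importance-weighted estimator is unbiased for the true $\elbo$ and the above chain of inequalities carries through in expectation; or (ii) by absorbing the sampling error into a regularity assumption (bounded importance weights and sufficiently small learning rate) so that a single gradient step on the Monte Carlo surrogate is still an ascent direction for the true $\elbo$. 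A secondary, minor issue is ensuring the M-step actually improves (rather than merely not decreases) the objective; this is handled by the early-stopping clause in Algorithm~\ref{algo:opt}, which terminates training precisely when the non-decrease can no longer be certified on validation data.
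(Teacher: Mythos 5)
Your proof is correct and follows essentially the same route as the paper's: tightness of the bound at $(q^{t},\theta^{t})$, improvement of the $\elbo$ in the M-step, and the $\elbo$ lower-bounding $\log p(y \mid x; \theta^{t+1})$, chained into the identical three-term inequality $\log p(y \mid x; \theta^{t+1}) \geq \elbo(q^{t},\theta^{t+1}) \geq \elbo(q^{t},\theta^{t}) = \log p(y \mid x; \theta^{t})$. You are in fact more careful than the paper on two points it silently elides --- that a generalized M-step need only not decrease the $\elbo$ rather than globally maximize it, and that the importance-sampled Monte Carlo surrogate of the $\elbo$ matches the posterior expectation only in expectation --- but these refinements do not change the structure of the argument.
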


\begin{proof}
At $t$th iteration, we start with an \estep to sample trajectories $\z$ from the current LLM $\theta^{t}$ via an importance sampling strategy.
As described in Eq.~\eqref{eq:elbo-prior}, the ELBO can be written as:
\begin{equation*}
    \begin{aligned}
        \elbo(\theta, \theta^{t}) &:= \mathbb{E}_{\z \sim p(\z \mid x;\theta^{t})}\left[w(\z) \log p(y, \z \mid x ; \theta)\right] + c.
    \end{aligned}
\end{equation*}
In the subsequent \mstep, we update $\theta$ by maximizing the $\elbo$: $\theta^{t+1} = \arg \max_{\theta} \elbo(\theta, \theta^{t})$.
By construction, this ensures $ \elbo(\theta^{t+1}, \theta^{t}) \geq \elbo(\theta^{t}, \theta^{t})$.
Furthermore, using the definition of ELBO from Eq.~\eqref{eq:elbo}, we observe:
\begin{equation}
        \log p(y \mid x; \theta^{t+1}) \geq \elbo(\theta^{t+1}, \theta^{t}) \geq \elbo(\theta^{t}, \theta^{t}) = \log p(y \mid x; \theta^{t}).
\end{equation}
Thus, we establish that $\log p(y | x; \theta^{t+1}) \geq \log p(y|x; \theta^{t})$, demonstrating the non-decreasing nature of the optimization process.
Next, we analyze the boundedness of the learning objective. 
Since $p(y|x; \theta) \in [0, 1]$, it follows $\log p(y|x; \theta) \in  (-\infty, 0]$. Therefore, the sequence $\{\log p(y| x; \theta^{t})\}_{t=1}^\infty$ is non-decreasing and upper-bounded. 
We then apply the following convergence theorem from~\cite{Bibby_1974}:
\begin{theorem}
[\textbf{Monotone Convergence Theorem}] If a sequence $\{a_n\}$ is monotonic (either non-decreasing or non-increasing) and bounded, then it converges to a finite limit.
\label{thm:convergence}
\end{theorem}
\vspace{-3mm}
Applying Theorem~\ref{thm:convergence}, the non-decreasing and upper-bounded nature of $\{\log p(y | x; \theta^t)\}_{t=1}^\infty$ ensures that the sequence converges to a finite value, proving the training convergence of \ours.
We also provide more analysis in Appendix~\ref{sec:app:rethink} to rethink and understand the training process.
\end{proof}

\section{Experiment Setup}\label{sec:experiment-setup}

\noindent\textbf{Datasets and Evaluation Metrics.}
Following prior work~\citep{Lewis2020RetrievalAugmentedGF, xu2023searchinthechain, zamani2024stochastic, yu2024rankrag, ren2023investigating}, we conduct experiments on a range of well-established benchmarks: Natural Questions (NQ)~\cite{kwiatkowski-etal-2019-natural}, HotpotQA~\citep{yang2018hotpotqa}, MuSiQue~\citep{trivedi2021musique}, and 2WikiMultihopQA (2WikiQA)~\citep{xanh2020_2wikimultihop}.
Table~\ref{tab:dataset} in Appendix~\ref{sec:app:details} summarizes their key statistics.
We use three metrics from the KILT~\cite{petroni-etal-2021-kilt}: \textit{F1}, \textit{Exact Match} (EM), and \textit{Accuracy} (Acc).

 \noindent\textbf{Baselines.}
We compare \ours with a range of baselines, categorized into three groups based on their use of retrieval strategies:
\begin{enumerate*}[label=(\roman*)]
\item \textbf{Direct Reasoning without Retrieval}: These methods produce the answer to the input query by prompting the LLM to reason over its parametric internal knowledge, without an external retriever. 
This includes few-shot prompting off-the-shelf LLMs, i.e., DeepSeek-R1~\cite{guo2025deepseek}, GPT-4o~\cite{gpt4}, GPT-3.5~\cite{chatgpt}, Qwen2.5~\cite{yang2024qwen2}, QwQ-32B~\cite{qwq-32b-preview}, LLaMA-3.3-70B~\citep{dubey2024llama}, and Mistral-8x7B~\cite{mistral}.  
\item \textbf{Advanced RAG}: These methods retrieve relevant documents, followed by filtering or re-ranking, and incorporate useful documents into the LLM’s context for answer generation. 
We include:
RankRAG~\cite{yu2024rankrag}, ChatQA~\cite{liu2024chatqa}, RetRobust~\cite{yoran2023making}, RAG-DDR~\cite{li2024rag} (trained by DPO~\cite{rafailov2024direct}), and InstructRAG~\cite{wei2024instructrag}.
\item \textbf{Iterative RAG}: These methods allow LLMs to interact with the retriever iteratively. We include the widely-used methods: GenGround~\cite{shi2024generate}, DSPy~\cite{khattab2023dspy}, SearchChain~\cite{xu2023searchinthechain}, Iter-RetGen~\cite{shao2023enhancing}, Verify-and-Edit~\cite{zhao2023verify}, Gen-Ret-Gen~\cite{abdallah2023generator}, Search-o1~\cite{li2025search}, Search-R1~\cite{jin2025search} (trained by DPO~\cite{schulman2017proximal}), and Self-RAG~\cite{asai2023self}.
\end{enumerate*}
We implement the above baselines following their official code. Full model descriptions are included in Appendix~\ref{sec:app:baseline}.
Following the most commonly used recipe for baselines, we set the size of retrieval documents to 10 for all advanced RAG baselines and 5 for all agentic RAG baselines (since they can iteratively retrieve), as well as for our method.
Following RankRAG~\citep{yu2024rankrag}, for baselines without publicly available code, we report results from their original papers but only for reference (marked as "-" for metrics that were not reported in the original paper).

\noindent\textbf{Implementation Details.}\label{sec:implement}
We apply the proposed method to various backbone LLMs, such as Qwen-2.5-7B-Instruct~\cite{yang2024qwen2}.
Following prior work~\cite{xu2023searchinthechain, shi2024generate}, we use the Wikipedia passage dump from December 20, 2018, as the retrieval corpus and adopt ColBERTv2.0~\cite{colbertv2} for document retrieval.
To equip the LLM with the basic skills required to follow our iterative \textit{think-search-record} pattern, we initially generate 1,000 pseudo-examples and fine-tune the model on them, following a cold-start setup as in prior work~\cite{song2025r1, zhu2024atm, chen2025improving}.
Data collection details are provided in Appendix~\ref{sec:app:warmup}.
We also investigate the effect of data scale in the cold start on final performance (see \S~\ref{sec:cold}).
For the subsequent EM training, we set the number of iterations $N$ to 5 and report model performance at each iteration.
We use DeepSpeed ZeRO 3~\cite{deepspeed} with a learning rate of $2\times10^{-6}$.
More details are provided in \S~\ref{sec:app:hyper}.

\begin{table*}[!t]
\centering
\caption{\bluetextblock{Comparison} between our proposed \ours and baselines, with the \textbf{best} and \textbf{second best} results in bold. $^*$ indicates cases where the model produces long-form answers, which struggle to align with the short-span ground truth format. In these cases, we suggest using Acc. as a more reliable metric.
\greentextblock{\textbf{Green lines}} represent analysis results for applying our method to various LLMs.}
\label{tab:main}
\begin{adjustbox}{width=\columnwidth,center}
\setlength\tabcolsep{4.0pt}

\begin{tabular}{@{}l | ccc ccc ccc ccc | ccc@{}}
\toprule
\textbf{Tasks}
& \multicolumn{3}{c}{\textbf{NQ}} 
& \multicolumn{3}{c}{\textbf{HotpotQA}} 
& \multicolumn{3}{c}{\textbf{MuSiQue}} 
& \multicolumn{3}{c}{\textbf{2WikiQA}} 
& \multicolumn{3}{c}{\textbf{Avg.}}   \\
\cmidrule(l){2-4} \cmidrule(l){5-7} \cmidrule(l){8-10} \cmidrule(l){11-13} \cmidrule(l){14-16}
\textbf{Metrics} & F1  & EM  & Acc.
& F1  & EM  & Acc.
& F1  & EM  & Acc.
& F1  & EM  & Acc.
& F1  & EM  & Acc.
\\ 
\midrule
\multicolumn{16}{@{}l}{\textit{Direct reasoning without retrieval-augmented generation}} \\
\midrule

Deepseek-R1-671B~\cite{guo2025deepseek}
&  49.45  &  35.71  &  43.83 
&  46.98  &  35.83  &  37.80
&  17.34  &  10.22  &  12.69 
&  52.18  &  43.83  &  50.66 
&  41.49  &  31.40  &  36.24 \\

GPT-4o~\cite{gpt4} 
&  48.76  &  35.75  &  43.03 
&  54.13  &  36.52  &  51.59 
&  29.07  &  18.92  &  22.97 
&  51.31  &  40.45  &  53.07 
&  45.82  &  32.90  &  42.66 \\

GPT-3.5-turbo~\cite{chatgpt} 
&  42.11  &  38.60  &  40.60 
&  34.90  &  24.57  &  31.86 
&  22.73  &  14.14  &  16.29 
&  33.90  &  30.40  &  32.45
&  33.41  &  26.93  &  30.30 \\

Qwen2.5-72B 
& 45.68   &  30.12   &  45.67
&  38.80  &  29.20  & 32.00
&  20.40  &  11.40  & 12.54
&  42.70 &  34.40  & 33.61 
&    36.90  &  26.28  &  30.96 \\

Llama-3.3-70B~\cite{dubey2024llama}  
&  48.70  &  36.00  &  36.00
&  49.10  &  37.80  &  39.20 
&  23.60  &  14.80  &  16.00
&  54.20  &  46.00  &  50.60 
&   43.90  &  33.65  &  35.45 \\

Mistral-8x7B~\cite{jiang2024mixtral} 
&  40.87  &  40.10  &  39.60  
&  25.19  &  16.40  &  25.80 
&  11.60  &  6.80  &  14.80  
&  30.21  &  27.05  &  29.67 
&  26.97  &  22.59  &  27.47 \\

QwQ-32B~\cite{qwq-32b-preview}
& 33.09 &  23.00  &  32.20
& 33.34 & 25.40 & 26.60
& 18.85 &  9.00 &   9.00
& 40.90 &  34.40  & 36.40 
& 31.55 &  22.95  &  26.05 \\

Qwen-2.5-32B~\cite{yang2024qwen2}
&  33.10  &  23.00  &  34.20 
& 34.74  &  25.40  & 27.60 
&  18.90  &  8.50 &   10.03  
&  36.34  &  29.80  &  34.40
&  30.77  &  21.68  &  26.58  \\

\midrule
\multicolumn{16}{@{}l}{\textit{Advanced retrieval-augmented generation}} \\
\midrule

RankRAG (llama-3.1-70B)~\cite{yu2024rankrag}  
&  --  &  \textbf{54.20}  &  -- 
&  55.40  &  42.70  &  --  
&  --  &  --  &  --
&  43.90  &  38.20  &  --  
&  -- &  -- &  -- \\

ChatQA (llama-2-70B)~\cite{liu2024chatqa} 
&  34.54  &  23.64  &  37.41 
&  44.60  &  33.40  &  33.40
&  17.05  &  16.64  &  19.24
&  31.90  &  26.80  &  32.56 
&  32.02  &  25.12  &  30.65 \\

Recomp (Flan-UL2-20B)~\cite{xu2023recomp}  
&  42.67  &  37.47  &  40.32 
&  42.72  &  38.72  &  41.55 
&  24.96  &  17.34  &  21.46 
&  38.26  &  32.17  &  36.12
&  37.15  &  31.43  &  34.86 \\

Retrobust (llama-2-13B)~\cite{yoran2023making}  
&  43.82  &  37.03  &  39.56  
&  40.54  &  35.59  &  38.79 
&  18.16  &  18.11  &  19.61 
&  39.11  &  38.65  &  39.77 
&  35.41  &  32.34  &  34.43 \\

InstructRAG (llama-3.1-8B)~\cite{wei2024instructrag}  
&  39.21  &  37.82  &  37.58 
&  37.31  &  36.77  &  35.31 
&  25.88  &  14.94  &  20.45
&  40.01  &  44.57  &  38.91  
&  35.60  &  33.52  &  33.06 \\

RAG-DDR (llama-3.1-8B)~\cite{li2024rag}
&  40.74  & 28.76 & 30.51
&  31.71 & 40.04  & 42.41
& 13.54 & 10.57 & 14.21
& 38.40   & 35.44 & 37.41 
& 31.10  &  28.70  &  31.14  \\

RankRAG (llama-3.1-8B)~\cite{yu2024rankrag}  
&  --  &  50.60  &  -- 
&  46.70  &  35.30  &  -- 
& --  &  --  &  -- 
&  36.90  &  31.40  &  -- 
&  --   &  --  &  --\\

ChatQA (llama-2-7B)~\cite{liu2024chatqa} 
&  34.54  &  23.64  &  37.41 
&  44.60  &  33.40  &  33.40 
&  17.05  &  16.64  &  19.24 
&  31.90  &  26.80  &  32.56 
&  32.02  &  25.12  &  30.65 \\

\midrule
\multicolumn{16}{@{}l}{\textit{Iterative retrieval-augmented generation}} \\
\midrule

GenGround (GPT-3.5)~\cite{shi2024generate} 
&  50.31  &  40.24  &  43.60 
&  52.26  &  45.31  &  47.27 
&  27.36  &  18.34  &  20.24 
&  50.21  &  42.31  &  45.61
&  45.04  &  36.55  &  39.18  \\

IRCoT (GPT-3.5) ~\cite{trivedi-etal-2023-interleaving}
&  45.42  &  42.41  & 43.21
&  58.40 &  45.50 &  46.32
&  \textbf{30.50} &  19.01  &  22.87
&  45.10 &  35.40 & 36.54
&  44.86 &  35.58 &  37.24 \\

DSPy (GPT-3.5)~\cite{khattab2023dspy}
&  42.25  &  29.10 &  42.00
&  47.10 &  34.67  &  42.73
&  19.88  &  10.80  &  13.40 
&  44.52  &  39.64  &  44.43 
&  38.44  &  28.55  &  35.64 \\

SearChain (GPT-3.5)~\cite{xu2023searchinthechain} 
&  8.25$^*$  &  0.00$^*$  &  45.43 
&  6.18$^*$  &  0.00$^*$  &  47.64 
&  2.51$^*$  &  0.00$^*$  &  9.22 
&  6.05$^*$  &  0.00$^*$  &  43.69  
&  5.75$^*$  &  0.00$^*$  &  36.49 \\

Iter-RetGen (GPT-3.5)~\cite{shao2023enhancing} 
&  28.30 & -- &  41.04
&  44.10 & -- &  21.04
&  17.69 & -- &  20.19
&  36.00 & -- &  42.17
&  31.52 & -- &  31.11
\\

Verify-and-Edit (GPT-3.5)~\cite{zhao2023verify} 
&  39.73$^*$  &  26.68$^*$  &  40.34  
&  12.44$^*$  &  0.00$^*$  &  27.43  
&  5.87$^*$   &  0.00$^*$  &  10.01  
&  13.39$^*$  &  0.00$^*$  &  32.68
&  17.86$^*$  &  6.67$^*$ &   27.62  \\

Gen-Ret-Gen (GPT-3.5)~\cite{abdallah2023generator}
&  46.66  &  38.06  &  48.88
&  49.59  &  37.69  &  45.03
&  25.94  &  13.24  &  17.82
&  40.26  &  29.43  &  39.23
&  40.61  &  29.60  &  37.73
\\

Search-o1 (QwQ-32B)~\cite{li2025search}  
&  47.52  &  32.41  &  40.34
&  53.31  &  43.51  &  45.31
&  25.41  &  16.64  &  19.42
&  50.31  &  42.61  &  45.41
&  44.14  &  33.79 &  37.62
\\

Search-R1 (Qwen-2.5-7B)~\cite{jin2025search} 
&  54.26  &  42.21  &  \textbf{51.35}
&  58.04  &  46.51  &  52.46
&  30.03  &  \textbf{21.21}  &  23.37
&  52.62  &  \textbf{49.64}  &  50.43
&  48.74 &  39.89 &  44.02
\\

SELF-RAG (llama-2-7B)~\cite{asai2023self} 
&  49.70  &  41.60  &  42.50
&  21.50  &  9.40  &  29.20
&  21.50  &  9.43  &  7.10
&  27.33  &  23.52  &  20.80
&  30.01  &  20.99  &  24.90 \\

\rowcolor{blue!12}
Ours-Qwen-2.5-7B 
&  \textbf{56.37} & \textbf{47.07}  &  \textbf{51.75}
&  \textbf{62.59}  &  \textbf{50.35}  &  \textbf{54.32}
&  29.68  & \textbf{22.03}  &  \textbf{24.34}
&  \textbf{57.14}  &  \textbf{52.62} & \textbf{54.37}
&  \textbf{51.45}  &  \textbf{43.02} & \textbf{46.20} \\

\rowcolor{blue!12}
Ours-Llama-3.1-8B 
&  \textbf{55.21} &  43.71  &  50.76
&  \textbf{60.72}  &  \textbf{47.59}  &  \textbf{53.59}
&  \textbf{30.83}  &   20.98  &  \textbf{24.65}
&  \textbf{54.62}  &  47.48  &  \textbf{54.21}
&  \textbf{50.35}  &  \textbf{39.94} &  \textbf{45.80}
\\

\midrule

\midrule

\rowcolor{green!12}
Ours-Qwen-2.5-3B
&  46.23  &  36.76  &  39.12
&  54.32  &  42.22  &  46.08
&  19.44  &  13.76  &  13.94
&  43.39  &  37.24  &  44.78
&  40.85  &  32.50  &  35.98
\\

\rowcolor{green!12}
Ours-Llama-3.3-3B
&  41.42  &  33.49  &  35.17
&  44.12  &  33.53  &  36.14
&  17.64  &  11.23  &  11.82
&  41.73  &  36.28  &  42.22
&  36.23 &  28.63 &  31.34
\\

\rowcolor{green!12}
Ours-Mistral-7B-instruct 
&  56.83  &  45.13  &  52.05
&  59.65  &  50.35  &  54.78
&  30.32  &  23.47  &  24.98
&  53.93  &  47.38  &  54.82
&  50.18  &  41.58  &  46.66
\\

\rowcolor{green!12}
Ours-Mistral-2501-24B
&  59.89  &  47.62  &  56.39
&  67.03  &  54.51  &  59.98
&  35.84  &  23.68  &  28.54
&  60.81  &  53.19  &  61.59
&  55.89  &  44.75  &  51.63
\\
											
\bottomrule
\end{tabular}
\end{adjustbox}
\end{table*}
\vspace{-1mm}
\section{Experiment Results}\label{sec:experiment-result}
\vspace{-1mm}

\subsection{Overall Evaluation}\label{sec:overall}

\noindent\textbf{Surpasses Reasoning LLMs Without Retrieval.}
As shown in \greentextblock{Table~\ref{tab:main}}, \ours substantially outperforms large-scale LLMs that rely solely on internal knowledge across all benchmarks.  
Compared with GPT-4o and LLaMA-3.3-70B (both using large-scale parameters), our method achieves improvements of +5.63 and +7.55 in average F1, respectively, despite using a much smaller 7B backbone model.  
These results highlight the limitations of closed-book reasoning and demonstrate the effectiveness of our interleaved search and reasoning approach in leveraging external knowledge.

\noindent\textbf{Outperforms RAG Baselines.}
\ours substantially outperforms strong RAG baselines, including both advanced RAG (e.g., RankRAG) and iterative variants (e.g., Search-o1).  
For example, on HotpotQA, it improves F1 from 55.40 (RankRAG-70B) and 53.31 (Search-o1-32B) to 62.59, using the 7B Qwen2.5 model.  
Even compared to Search-R1, trained by the cost-intensive PPO algorithm, our method achieves slightly higher performance (Avg. F1: 51.45 vs. 48.74) while using a much simpler self-improving loop based on the expectation-maximization framework.  
One reason for these improvements is that \ours enables the LLM to flexibly refine the query (\first) and explicitly extract fine-grained evidence (\third), enabling more effective knowledge utilization.

\noindent\textbf{Improves Retrieval Performance.}
To better understand the benefits of our agentic search method, we further evaluate the retrieval performance.
We conduct experiments on four benchmarks and report \textit{recall scores} in Table~\ref{tab:retrieval}, which reflects how well the retrieved documents cover the ground truth.  
We follow prior work~\cite{karpukhin2020dense, kwiatkowski-etal-2019-natural} and define a retrieved document as correct if it contains the answer to the input question. 
The results reveal several key findings:
(i) using the retriever alone yields low recall;
(ii) conventional re-ranking methods like MonoT5 show relatively limited improvement in multi-hop QA scenarios; and
(iii) our method, through iterative reasoning, achieves substantially higher recall.
These results indicate that \ours, by expanding the search as reasoning exploration, can enhance retrieval performance in addition to improving end-to-end answer generation correctness.

\input{table/ablation-recall}

\subsection{Ablation Studies}

\ours consists of three core actions: \first, \second, and \third. To validate the effectiveness of each component, we independently remove each action and evaluate the resulting variant. Table~\ref{tab:ablation} presents the results for the following:
\begin{enumerate*}[label=(\roman*)] 
    \item \textbf{w/o \first}: Removing \first reduces our method to a standard retrieve-then-generate pipeline, leading to a substantial performance drop across all datasets. This validates the importance of reasoning and iterative query refinement in enabling more effective retrieval.
    \item \textbf{w/o \second}: Without \second, the method becomes equivalent to an iterative self-ask process without external knowledge. We observe a consistent decline across all datasets, validating the necessity of integrating retrieval into the reasoning process.
    \item \textbf{w/o \third}: Disabling \third prevents the model from extracting key evidence for each sub-query, resulting in an average performance drop of approximately 8\%, as shown in Table~\ref{tab:ablation}. This suggests that deliberately and explicitly recording fine-grained evidence can enhance the model's understanding of external documents, which is consistent with prior findings in interpretability-focused studies, such as Physical LLMs~\cite{allen2023physics}.
    \item \textbf{w/o $w(\z)$}: Removing the weighting function $w(\z)$ reduces our method to a naive self-training setup, where the model is directly trained on its generated data. Results in Table~\ref{tab:ablation} show a substantial performance drop, e.g., the F1 score on the HotpotQA dataset drops from 60.72 to 54.45. This highlights the critical role of $w(\z)$ in guiding the model to prioritize trajectories that contribute to correct answers, rather than treating all sampled actions equally.
\end{enumerate*}

\subsection{Training Convergence}\label{sec:convergenve}

The theoretical analysis in \S~\ref{sec:theory} demonstrates the non-decreasing improvement property during our training process.
To empirically verify this property, we evaluate model checkpoints after each training iteration on four experimental benchmarks.
Figure~\ref{fig:convergence} shows the results, with more details provided in Appendix~\ref{sec:app:results}.
We observe that the performance on the training set consistently improves over iterations and typically stabilizes, validating the expected non-decreasing convergence behavior.
On the evaluation sets, the model typically reaches peak performance within the second or third iteration, indicating rapid convergence in practice.
These findings align well with our theoretical expectations and further demonstrate the practical efficiency of our optimization strategy.

\begin{figure}[!t]
    \centering
	\includegraphics[width=\linewidth]{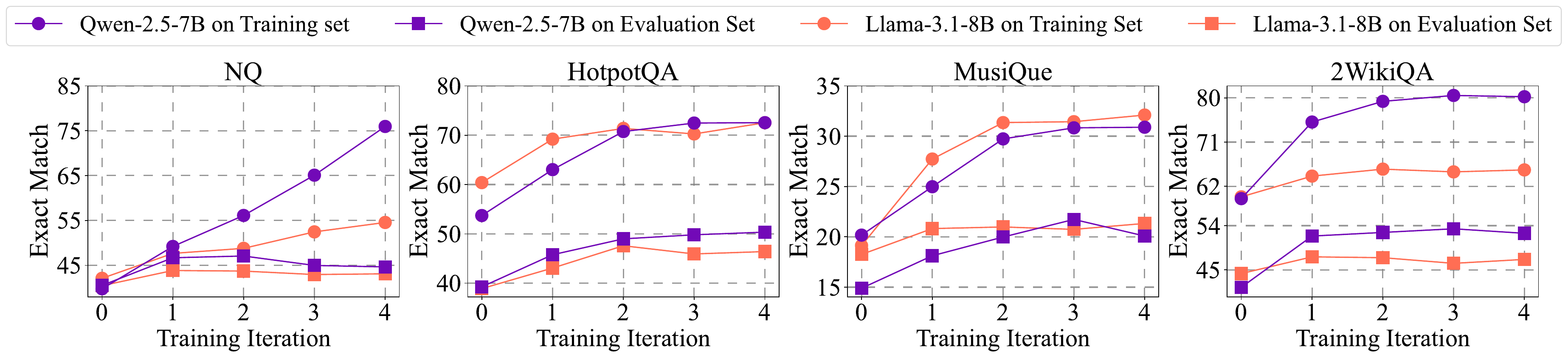}
    \caption{Training convergence of \bluetextblock{Qwen-2.5-7B} and \redtextblock{Llama-3.1-8B}, where we report the \textit{Exact Match} score for checkpoints in each iteration.}
    \label{fig:convergence}
\end{figure}

\subsection{Case Studies}\label{sec:case}

To gain deeper insights, we conduct qualitative case studies using Qwen2.5-7B, with concrete cases provided in Appendix~\ref{sec:app:runtime-case}.
We observe two key strengths:  
(i) The model integrates reasoning and retrieval steps, dynamically selecting relevant evidence as part of its reasoning process;  
(ii) By interleaving reasoning with retrieval, our method helps the model ground its generated answers in factually relevant knowledge.
We also manually examine 200 random failure cases and identify the following two errors:
\textbf{Under-searching}: In 3.5\% of cases, the model is misled by plausible-looking but incomplete evidence, retrieves fewer documents than needed, and stops too early; and
\textbf{Over-searching}: In 7.5\% of cases, the model retrieves more documents than necessary but still misses key information, failing to recognize when to stop.
These observations highlight the need for better search-depth control and suggest improving trajectory-level stopping criteria in future work.

\subsection{Investigating the Cold Start: Is More Warm Up Necessary?}\label{sec:cold}

\begin{wrapfigure}{r}{0.55\textwidth}
\centering
    \includegraphics[width=\linewidth]{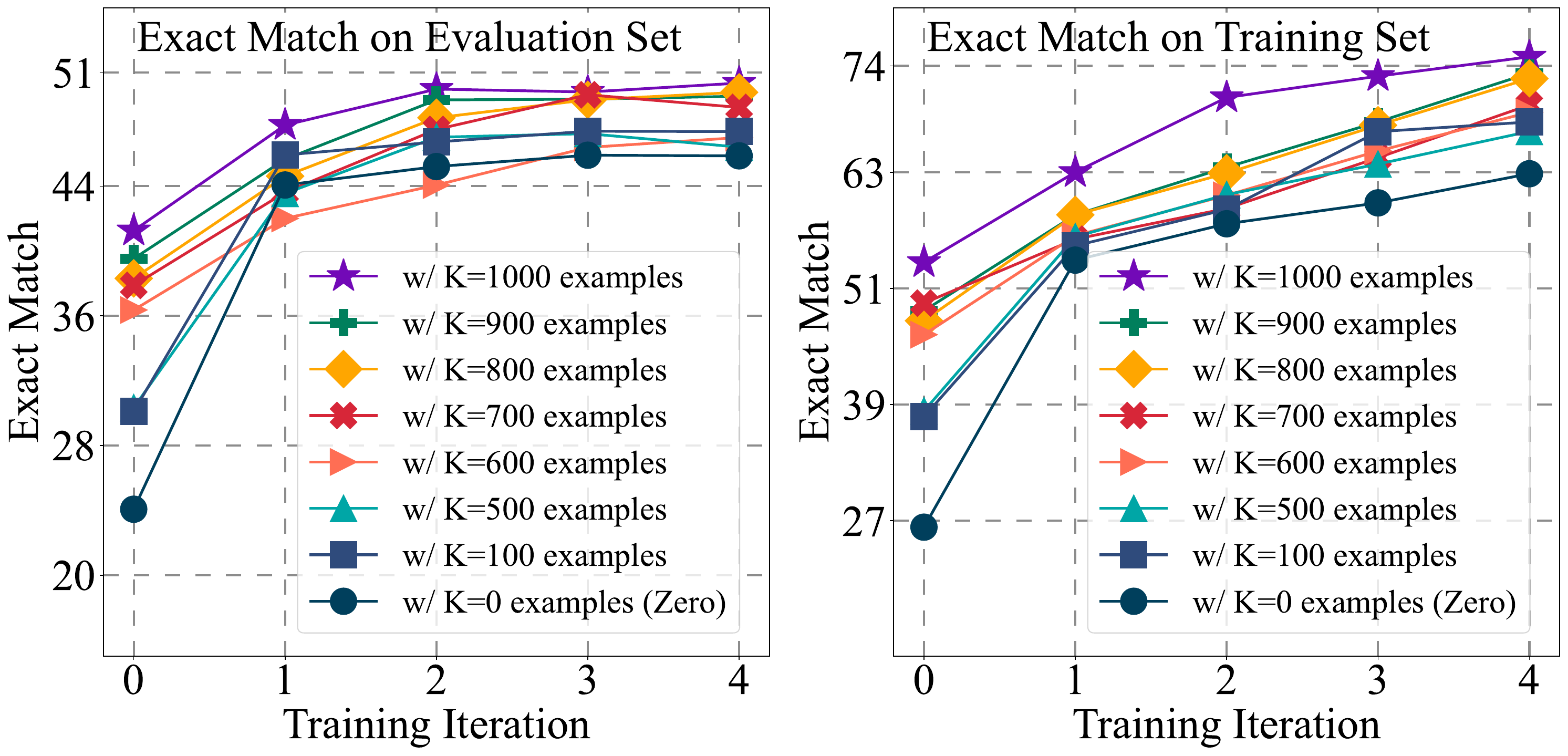}
    \caption{Performance for Qwen2.5-7B that is initially empowered by different amounts of warm-up data.}
    \label{fig:warmup_em}
    \vspace{-2mm}
\end{wrapfigure}

In our main experiments (Table~\ref{tab:main}), \ours is initialized with 1,000 synthetic examples for warm-up, following prior work~\cite{zhu2024atm, chen2025improving}.
To explore the necessity and impact of warm-up data, we train separate models with varying amounts of supervised fine-tuning (SFT) data (denoted as $K$) and apply \ours to each model independently.
As shown in Figure~\ref{fig:warmup_em}, we observe three key trends:
(i) \textbf{Warm-up helps}: Initial training equips the model with basic reasoning capabilities and improves final performance;
(ii) \textbf{More data is not always necessary}: Qwen2.5-7B, even without any warm-up (\textit{zero}), already outperforms strong baselines;
(iii) \textbf{1,00 examples offer a good trade-off}: Using 1,00 SFT examples strikes a balance between performance and the cost of data synthesis (approximately 2\$).
We provide additional results and analysis in Appendix~\ref{sec:app:cold}.

\section{\ours-Zoo: An Extended Suite}\label{sec:extension}

So far, we have validated the effectiveness of \ours.
The promising results in \S~\ref{sec:experiment-result} motivate us to extend our method to more scenarios. 
We therefore introduce \ours-Zoo, a resource extending our method in two dimensions: \textit{diverse backbone models} and \textit{extendable retrieval strategy}.

\noindent\textbf{Diverse Backbone Models.}
We apply \ours to a wide range of LLMs and analyze the generalizability across LLMs of varying sizes and model families.
\greentextblock{Table~\ref{tab:main}} and Figure~\ref{fig:intro} show the experimental results, which exhibit a clear scaling-law pattern.  
In more details, we highlight two key observations:  
(i) there is a consistent performance improvement as the model size increases; and (ii) even models as small as 3B parameters achieve competitive performance when trained by our method.  
These suggest that \ours is broadly applicable and scales across different models.

\begin{wrapfigure}{t!}{0.5\textwidth}
    \centering
    \includegraphics[width=\linewidth]{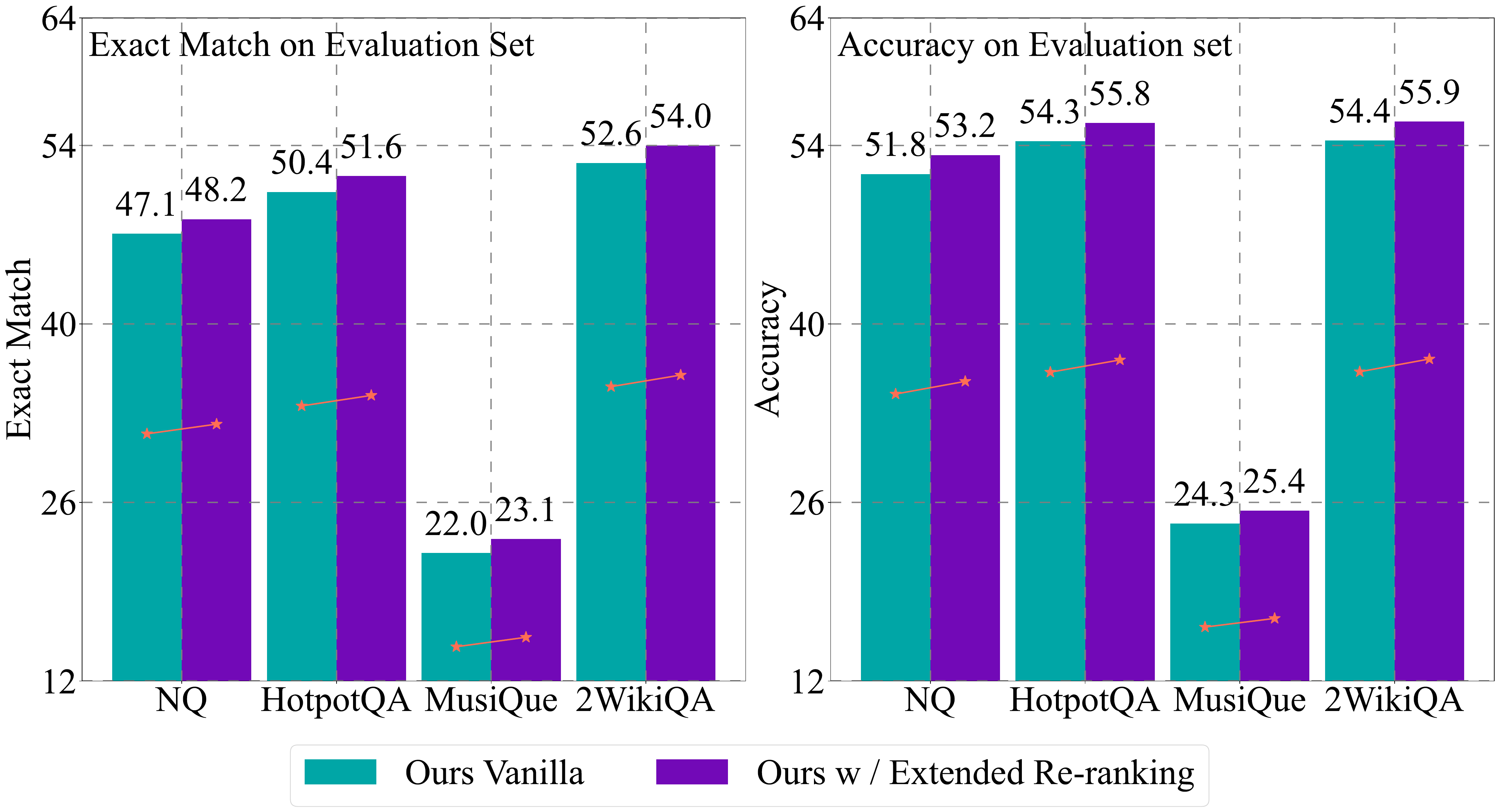}
    \caption{Performance of \ours when extended with the document re-ranking action.}\label{fig:rank_em}
\end{wrapfigure}

\noindent\textbf{Extended Retrieval Strategy.}
In \ours, the LLM explores a search trajectory by iteratively thinking, searching, and recording.  
However, to answer more complex queries, we may need to customize a more specific retrieval strategy beyond these three actions.  
To demonstrate the extensibility, we introduce \textit{an example by adding an additional document selection} action, extending the retrieval process into the \textit{think} $\rightarrow$ \textit{search} $\rightarrow$ \textit{select} $\rightarrow$ \textit{record} pattern.  
Document selection allows the model to prioritize relevant evidence and discard distracting content, a technique widely used in the IR field. 
Specifically, the LLM reads the retrieved documents and autoregressively generates a ranked list of selected identifiers (e.g., \texttt{[1] > [2] > [3]}), similar to \textbf{list-wise re-ranking}~\cite{sun2023chatgpt, pradeep2023rankvicuna}. 
This variant modifies only the E-step by adding the selection action to the trajectory, while the overall EM training remains a self-improving process.  
Figure~\ref{fig:rank_em} shows consistent improvements, with accuracy gains of +1.1 and +1.5 on MuSiQue and 2WikiQA, respectively.  
These results suggest that more fine-grained or customized actions can be seamlessly incorporated into \ours.  
Detailed derivation and experimental results are provided in Appendix~\ref{sec:extension}.

\vspace{-1mm}
\section{Related Work}\label{sec:related-work}
\vspace{-1mm}

\noindent\textbf{Information Retrieval with LLMs.}  
As one of the most fundamental data mining techniques, information retrieval (IR) primarily focuses on understanding input queries, extracting relevant information from external data sources, and providing accurate responses to users~\cite{hambarde2023information, kobayashi2000information}.  
In recent years, large language models (LLMs), pre-trained on large-scale corpora, have been integrated into IR systems to enhance conventional retrieval methods~\cite{xu2024unsupervised, xu2024large, fan2024survey}, leveraging their language processing abilities, such as LLM-based query rewriting.  
More recent work also uses LLMs to aggregate long-context retrieved information and generate concise answers for users, a process known as retrieval-augmented generation (RAG)~\cite{fan2024survey}.  
However, most existing approaches integrate LLMs into IR by cascading traditional retrieval pipelines and optimizing LLMs for specific stages.  
For example, Self-RAG~\cite{asai2023self} fine-tune the LLMs by document relevance judgment and answer generation; ADEIE~\cite{qi2024adelie} fine-tunes LLMs for knowledge extraction; and other studies align LLMs for improved query decomposition~\cite{li2024rag,jin2025search}.  
In contrast, we model the \textit{entire retrieval} process through a dynamic and iterative agentic framework, where LLMs are self-incentivized using the  expectation-maximization algorithm.
\textit{We also discuss further distinctions in Appendix~\ref{app:sec:related}.}

\noindent\textbf{Reasoning by LLMs.}
LLMs have demonstrated strong capabilities in long-term planning~\cite{yeo2025demystifying, chen2025towards}, often decomposing complex tasks into a sequence of sub-tasks and solving them step by step using their internal parametric knowledge~\cite{Wei2022ChainOT}.  
This ability, typically referred to as \textit{reasoning}~\cite{qiao2022reasoning, xu2025towards, plaat2024reasoning} or \textit{deliberate thinking}~\cite{kahneman2011thinking, ji2025learning, yao2023tree}, underpins many recent advancements in LLM performance.  
For example, models such as GPT-4o~\cite{gpt4} and DeepSeek-R1~\cite{guo2025deepseek}, which are post-trained using reinforcement learning (e.g., PPO~\cite{schulman2017proximal} or GRPO~\cite{shao2024deepseekmath, chen2022program}), have achieved impressive results on structured tasks like mathematical reasoning~\cite{li2025torl} and code generation~\cite{zheng2024makes, jiang2024survey}.  
However, LLMs suffer from outdated internal knowledge and the tendency to hallucinate factually incorrect content.  
Such closed-book reasoning often underperforms in knowledge-intensive tasks like multi-hop question answering, where models need to access and integrate factual information~\cite{li2025search, gao2023retrieval}.  
In this work, we tightly integrate retrieval with the LLM reasoning process.  
Unlike prior methods that directly augment LLMs with documents relevant to input queries~\cite{wei2024instructrag, ram2023context}, our approach allows the model to acquire external evidence dynamically as reasoning unfolds and learns this pattern autonomously.

\section{Conclusion and Future Work}\label{sec:conclusion}

In this work, we propose \ours, a novel framework that empowers an LLM to become an exploratory search agent, capable of iteratively reasoning over context, searching for target information, and reflecting on retrieved documents.
\ours incorporates a self-incentivized loop, where the LLM learns to reason and search from its own generated data, progressively enhancing its expertise as an effective search agent.
We provide a theoretical analysis of the advantages of \ours and conduct experiments on a wide range of datasets to demonstrate its effectiveness.
Motivated by its strong performance, we introduce \ours-Zoo, a comprehensive resource that extends our vanilla method in two directions: (i) backbone LLM diversity and (ii) enriched action spaces (e.g., document re-ranking). Further experiments also validate the extensibility of our framework.
We suggest future work to (i) extend \ours beyond text-only reasoning to include multi-modal reasoning with vision inputs, and (ii) apply \ours to tool-augmented agents, thereby supporting more real-world tasks.

\bibliographystyle{unsrtnat}
\bibliography{custom}

\clearpage
\newpage

\appendix

\part{}
\section*{\centering {\textbf{Appendix}}}
\mtcsettitle{parttoc}{Contents}
\parttoc

\section{Practical Considerations and Societal Impact}\label{sec:app:practical}

\subsection{Limitations}\label{sec:app:limitation}
Despite our efforts to improve the LLM's ability to reason and search, our work has several limitations. Some of these limitations are shared across many existing RAG systems.  
We highlight them not only as directions for future work but also in the hope of inspiring broader exploration within the community to advance the development of more effective retrieval-augmented reasoning systems.

\textbf{First}, our method currently focuses on textual inputs and outputs. Extending reasoning-augmented search to multimodal scenarios, e.g., incorporating images or structured tables, remains an important direction for future work. We plan to integrate \ours with large vision-language models and extend the text-only retrieval documents to a multimodal corpus.

\textbf{Second}, in line with prior work, we deliberately avoid hardcoding heuristics such as fixed query decomposition rules or predefined in-context learning demonstrations. However, similar to most existing approaches, our model performs retrieval at every reasoning step, regardless of necessity. Since LLMs pre-trained on large web data have parameterized extensive world knowledge, for some simpler queries, this fixed retrieval strategy may be redundant.  
An important next step is to develop more adaptive strategies that allow the model to decide \textit{when} to retrieve based on context, rather than always retrieving at each step.

\textbf{Third}, our training and evaluation are based on rule-based, answer-centric metrics such as Exact Match and F1 score. While effective for benchmark evaluation, these metrics may not fully capture performance in more open-ended or exploratory tasks, especially for long-form generation.  
As mentioned in our paper, in future work, we aim to explore more open-domain setups and alternative supervision signals beyond gold-standard answers, such as LLM-as-the-judge.

Overall, the proposed \ours makes progress in the RAG research area by effectively teaching LLMs to interleave reasoning and search within a self-improving process.  
However, it remains an initial step. We believe addressing the above limitations will be crucial for building more general and robust search-augmented reasoning systems.

\subsection{Ethics Statement}\label{sec:app:ethics}
The research conducted in this paper centers around the development of a reasoning-augmented search framework. The proposed method enables Language Models (LLMs) to dynamically retrieve and reason over external information.  
In the process of conducting this research, we have adhered to ethical standards to ensure the integrity and validity of our work. All the questions used in this study were obtained from existing benchmarks, ensuring a high level of transparency and reproducibility in our experimental procedure.
To support our retrieval system, we used an open-source corpus, specifically Wikipedia. This ensures that our research utilizes publicly accessible and freely available data, minimizing potential bias and promoting fairness.

We have made every effort to ensure that our study does not involve human subjects, private data, or any content that may cause harm to individuals or social groups. No part of this work includes deceptive practices or intentional misuse of information. We are committed to conducting and presenting this research with integrity and social responsibility.
We intend to release our code and implementation details to support open research, following the NeurIPS submission policy, and aim to facilitate further study in the information retrieval and retrieval-augmented generation areas.

\subsection{Societal Impacts}\label{sec:app:social}
Our work introduces \ours, a reasoning-augmented search framework that interleaves multi-step reasoning with dynamic document retrieval. The primary goal is to improve the factual accuracy and interpretability of LLMs in open-domain question answering and knowledge-intensive tasks.  
The model presented in this work requires explicit grounding in external sources, and our framework emphasizes verifiable retrieval and intermediate reasoning steps, which improve transparency. These properties support downstream mitigation efforts, such as traceable generation or retrieval auditing. Additionally, we note that our current implementation does not involve user data or personalized profiles, thus mitigating privacy concerns.

This approach offers several key benefits.
\textit{Improved Transparency}: By grounding answers in external sources and providing intermediate reasoning steps, \ours allows for better traceability of the model’s decision-making process, which is crucial for understanding and verifying outputs;
\textit{Enhanced Trustworthiness}: The framework’s reliance on verifiable external retrieval reduces the risk of generating hallucinated information, contributing to more reliable and factually accurate responses; and
\textit{Broader Applicability}: With its focus on factual grounding and reasoning, \ours can be applied to a wide range of knowledge-intensive applications, including scientific research, legal analysis, and education, where accuracy and clarity are essential.

While this research is foundational and not tied to specific applications or deployments, we acknowledge the broader risks associated with enhancing the factual accuracy and coherence of LLMs. Specifically, the improved text generation capabilities of LLMs may be misused in malicious contexts or scenarios, such as:
(i) generating more convincing disinformation;
(ii) fabricating plausible but incorrect content by selectively retrieving or combining real documents; and
(iii) enabling better-targeted persuasive text (e.g., phishing, political propaganda).
In summary, while \ours could be misused in unintended settings, its design principles, combined with the potential for verifiable and auditable generation, offer avenues for responsible deployment. While we do not foresee immediate or direct societal harm, we encourage future work to explore safeguards, such as automated monitoring and ethical auditing mechanisms, in high-stakes applications.

\section{Comparison with Prior Work}\label{app:sec:related}

In this work, the proposed method enables LLMs to perform interleaved reasoning and search (also referred to as agentic search in this work) and optimizes their capability for this pattern through a self-incentivized framework.  
Below, we compare our method with previous iterative retrieval methods and self-learning methods in detail.

\subsection{Comparing with Previous Iterative Retrieval Method.}\label{sec:app:compare-retrieval}
In this paragraph, we systematically compare \ours with previous work that incorporates iterative retrieval techniques, especially those integrated with LLMs, highlighting our fundamental innovations.
Most previous iterative retrieval approaches rely on a modular multi-stage pipeline, where distinct models are trained separately for sub-tasks such as query rewriting, retrieval, and evidence aggregation. 
For example, For example, Self-RAG~\cite{asai2023self} and RankRAG~\cite{yu2024rankrag} train LLMs for document relevance judgment; ADELIE~\cite{qi2024adelie} adopts LLMs for knowledge extraction; while other work~\citep{ma-etal-2023-query,adjali2024multi} cascades multiple specialized components in sequence.
Despite their progress, this modular design overlooks the end-to-end optimization, leading to potential misalignment among the training objectives in different stages.
More recently, several works attempt to prompt powerful, extremely large models (i.e., GPT-3.5 or Qwen-32B) to iteratively interact with retrieval engines through in-context learning~\cite{shao2023enhancing,li2025search}.
However, they are limited by using predefined demonstrations or in-context learning examples.
While simplifying system design, these approaches overlook improving the LLM’s intrinsic reasoning capability for adaptive retrieval and generation. 
In contrast, \ours trains a \textit{unified} LLM to reason over the evolving context, adaptively decide what information to retrieve, optionally re-rank retrieved documents, extract fine-grained supporting evidence, and generate the final answer, all within an end-to-end learning framework. 
Unlike previous work, \ours aligns the training signals of these actions through a coherent learning objective, improving the LLM in an end-to-end manner.

\textit{The most contemporary work to \ours is Search-R1~\cite{jin2025search}, developed independently around the same time}. Search-R1 applies the proximal policy optimization (PPO~\cite{schulman2017proximal}) algorithm to encourage LLMs to issue multiple search queries during reasoning. However, \ours differs from Search-R1 in three critical aspects: (i) \textit{Reasoning Process}: \ours enables richer retrieval actions, including query generation, optional re-ranking of retrieved documents, and fine-grained evidence extraction, while Search-R1 only conducts iterative query decomposition without reflection or re-ranking; (ii) \textit{Training Algorithm}: \ours treats search trajectories as latent variables and optimizes a variational evidence lower bound via a Generalized Expectation-Maximization  algorithm~\cite{moon1996expectation}, achieving stable and progressive self-improvement. In contrast, Search-R1 adopts an online on-policy reinforcement learning framework, which often suffers from sample inefficiency and training instability~\cite{henderson2018did,huang202237,hu2021rethinking}; and (iii) \textit{Reward Signal}: \ours introduces a trajectory-level training signal, evaluating the quality of the entire search trajectory based on the likelihood of generating the correct answer, rather than relying solely on a binary outcome-based reward as in Search-R1.

\subsection{Comparison with Previous Self-training Methods.}\label{sec:app:self}
Recent studies have explored self-training frameworks where a model is iteratively trained on its generated data~\cite{chen2024language, wu2024meta}.  
For example, some work~\cite{Singh2023BeyondHD, chen2024language} allows the model to first generate a solution and fine-tune it on the generated trajectories, showing promising results on mathematical reasoning tasks.  
Similar ideas have also been applied in combination with REINFORCE Leave-One-Out (RLOO) methods~\cite{chen2024language, kool2019buy, reinforce}.  
More recently, other work~\cite{wu2024meta, chen2024self} proposes self-rewarding methods, where the LLM itself is used via LLM-as-a-Judge prompting to provide its own rewards during an iterative DPO~\cite{rafailov2024direct} training process.  
While effective in closed-book settings, these approaches overlook a critical limitation: \textbf{the inherent limitation of parametric knowledge in LLMs}~\cite{gao2023retrieval, fan2024survey}.  
Without external retrieval, the model cannot dynamically access relevant information when it is missing from its internal memory.  
In contrast, this work focuses on agentic search, which aims to enable the LLM to interleave dynamic retrieval within the reasoning process and further reflect on the retrieved content at a fine-grained level (See the \S~2.1 in the main body of the paper).  
Additionally, we provide a theoretically grounded analysis of the advantages of our method, as discussed in this Appendix~\ref{sec:app:convergence-analysis} and Appendix~\ref{sec:app:rethink}.
We also introduce an extended resource, \ours-Zoo, which supports multiple model families and richer reasoning actions.

\begin{table*}[!t]
\caption{Main notation used in this work.}
\label{table:notation}
\begin{tabular}{cp{12.2cm}}
    \toprule
    \textbf{Symbol} & \multicolumn{1}{c}{\textbf{Description}} \\
    \midrule
    $x$ & The initial input query. \\
    $y$ & The ground-truth answer corresponding to the initial query $x$. \\
    $\theta$ & The parameters of the large language model (LLM). \\
    $\mathcal{R}$ & The external retriever. \\
    $i$ & The index of the $i$-th step in the reasoning and retrieval process. \\
    $x_i$ & The sub-query generated at the $i$-th step. \\
    $\doc_i$ & The set of documents retrieved in the $i$-th step, i.e., $\doc_i = \mathcal{R}(x_i) = \{\doc_{i,j} \mid j \in [K]\}$. \\
    $e_i$ & The fine-grained evidence extracted from the retrieved documents $\doc_i$. \\
    $t$ & The index of the $t$-th training iteration. \\
    $\z$ & The full reasoning trajectory, consisting of interleaved sub-queries, retrieved documents, and extracted evidence, i.e., $\z  = \{(x_i, \doc_i, e_i) \mid i \in [|\z|]\}$ \\
    \bottomrule
\end{tabular}
\end{table*}

\section{Detailed Theoretic Analysis}\label{sec:app:theroy}

Due to space constraints in the main body of this paper, we include the detailed version of the theoretical derivations and analyses of \ours here.  
Below, we first formulate the search and reasoning process in \ours.  
We then introduce how the generalized expectation–maximization technique is leveraged to improve the LLM's capability in \ours through a self-improving loop (\S~\ref{sec:app:derivation}).  
Additionally, we prove that the resulting training procedure is convergent.  
Table~\ref{table:notation} lists the main notation used in the paper.

\subsection{Skeleton Derivations for \ours}\label{sec:app:derivation}

\paragraph{Reviewing Agentic Search Procedure}  
In this work, the proposed \ours is inspired by the \textit{Exploratory Search} paradigm~\citep{exploratory}, which models information-seeking as a dynamically unfolding process where search queries are iteratively refined based on intermediate results. \ours simulates this by interleaving three core actions:
\begin{enumerate}[label=(\roman*)] 
    \item \textbf{\first}: The LLM generates a query $x_i$ based on the current context $x$ and the accumulated search trajectory $\z_{<i}$, formulated as:
    \begin{equation}
        x_i = p(x_i \mid x, \z_{<i}; \theta)
    \end{equation}
    \item \textbf{\second}: A retrieval module $\mathcal{R}$ retrieves the top-$K$ documents $\doc_i$ relevant to the query $x_i$:
    \begin{equation}
        \doc_i = \mathcal{R}(x_i)
    \end{equation}
    \item \textbf{\third}: The LLM reflects on the retrieved documents $\doc_i$ and extracts evidence $e_i$ conditioned on $x$ and $\doc_i$:
    \begin{equation}
        e_i = p(e_i \mid x, \doc_i; \theta)
    \end{equation}
    In this step, the model focuses solely on the current sub-query and its associated documents, reducing computational cost by limiting the context to the most relevant information.
\end{enumerate}

Formally, the reasoning-augmented search process is modeled as a sequence $\z = \{(x_i, \doc_i, e_i) \mid i \in [|\z|]\}$, with the joint likelihood:
\begin{equation}
    p(\z \mid x; \theta) = \prod_{i=1}^{|\z|} p\left( (x_i, \doc_i, e_i) \mid x, \z_{<i}; \theta \right)
\end{equation}
After the interleaved search and reasoning process, the LLM aggregates information from $\z$ to generate the final answer $y \sim p(y \mid x, \z; \theta)$.

\paragraph{Training Objective.}  
The goal of \ours is to improve the LLM’s ability to generate the correct answer $y$ after reasoning. The training objective is formulated as:
\begin{equation}
    \log p(y \mid x; \theta) = \log \sum\nolimits_{\z} p(y, \z \mid x; \theta)
    \label{eq:margin1}
\end{equation}
Here, $\z = \{(x_i, \doc_i, e_i) \mid i \in [|\z|]\}$ represents a sequence of the \first, \second, and \third actions.  
In \ours, we introduce a proposal distribution $q(\z \mid x)$ to approximate the sampling space of $\z$ and apply \textbf{Jensen’s inequality} to the marginal log-likelihood in Eq.~\ref{eq:margin1}:
\begin{equation}
\begin{aligned}
    & \log \sum\nolimits_{\z} q(\z \mid x) \frac{p(y, \z \mid x; \theta)}{q(\z \mid x)} \\
    \geq & \sum\nolimits_{\z} q(\z \mid x) \log \frac{p(y, \z \mid x; \theta)}{q(\z \mid x)} \\
    = & \mathbb{E}_{\z \sim q(\z \mid x)} \left[ \log \frac{p(y, \z \mid x; \theta)}{q(\z \mid x)} \right]
\end{aligned}
\end{equation}
The right-hand side represents the variational evidence lower bound (ELBO) of $\log p(y \mid x; \theta)$, and the bound becomes tight if $q(\z \mid x)$ approximates the true posterior distribution $p(\z \mid y, x; \theta)$.  
Thus, we can iteratively estimate $p(\z \mid y, x; \theta)$ and maximize this ELBO using the generalized expectation-maximization algorithm to progressively improve the LLM.

\paragraph{\estep: Trajectory Exploration.}  
In the \estep, we estimate the distribution over reasoning trajectories $\z$ by sampling from the LLM $\theta$. In the $t$-th iteration, we approximate the distribution $q(\z \mid x) \approx p(\z \mid x, y; \theta)$, yielding the following form for the ELBO:
\begin{equation}
    \text{ELBO} = \mathbb{E}_{\z \sim p(\z \mid x, y; \theta^t)}\left[\log p(y, \z \mid x; \theta)\right] + \mathcal{H}(p(\z \mid x, y; \theta^t))
\end{equation}
Here, the entropy term $\mathcal{H}(p(\z \mid x, y; \theta^t))$ is constant with respect to $\theta$. Since direct sampling from the posterior is intractable, we apply importance sampling~\cite{elvira2021advances,tokdar2010importance}, where the distribution $p(\z \mid x; \theta^t)$ is easier to sample from, and each sample is assigned an importance weight:
\begin{equation}
    w(\z) = \frac{p(\z \mid x, y; \theta^t)}{p(\z \mid x; \theta^t)}
\end{equation}
This allows us to rewrite the ELBO as:
\begin{equation}
    \text{ELBO} = \mathbb{E}_{\z \sim p(\z \mid x; \theta^t)} \left[w(\z) \log p(y, \z \mid x; \theta)\right] + c,
    \label{eq:elbo-prior1}
\end{equation}
where $c$ is a constant.

\paragraph{\mstep: Re-weighted Trajectory Learning.}  
In the \mstep, we update the model parameters $\theta$ by maximizing the ELBO from Eq.~\ref{eq:elbo-prior1}. The objective becomes:
\begin{equation}
    \theta = \arg \max_{\theta} \mathbb{E}_{\z \sim p(\z \mid x; \theta^t)} \left[w(\z) \log p(y, \z \mid x; \theta)\right]
    \label{eq:app:loss}
\end{equation}
The overall training process is performed using stochastic gradient descent, with gradients computed as:
\begin{equation}
    \nabla_{\theta}\text{ELBO}(\theta) = - \mathbb{E}_{\z \sim p(\z \mid x; \theta^i)} \left[w(\z) \nabla_{\theta} (\mathcal{L_R} + \mathcal{L_A})\right]
\end{equation}

\subsection{Convergence Analysis}\label{sec:app:convergence-analysis}
Below, we analyze the convergence behavior of \ours using the generalized expectation-maximization algorithm, providing a more detailed explanation than in the main body of the paper.
We show that the training objective $\log p(y \mid x; \theta)$ is non-decreasing after each training iteration and progressively converges to a stationary point due to its upper-bounded property.  
To provide a tighter characterization of convergence, we interpret the optimization gap as a KL divergence between importance-weighted sampling and the true posterior.

\begin{lemma}[\textbf{Monotonic Improvement}]
At each iteration $t \in \mathbb{Z}^+$, the training objective of the LLM satisfies:
\begin{equation}
\log p(y \mid x; \theta^{t+1}) \ge \log p(y \mid x; \theta^t).
\end{equation}
\label{lemma:1}
\end{lemma}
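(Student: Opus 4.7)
The plan is to assemble the standard two-line generalized-EM monotonicity argument, instantiated for the particular ELBO used in \ours. I would define $\elbo(\theta, \theta^{t})$ as the variational lower bound constructed in the \estep at iteration $t$, i.e., with the proposal $q(\z\mid x)$ fixed to the current posterior $p(\z\mid x, y; \theta^{t})$. Three ingredients then need to be verified in order: (i) a general one-sided bound $\log p(y\mid x;\theta)\geq \elbo(\theta,\theta^{t})$ for every $\theta$; (ii) tightness at the current iterate, $\elbo(\theta^{t},\theta^{t})=\log p(y\mid x;\theta^{t})$; and (iii) a non-decrease guarantee $\elbo(\theta^{t+1},\theta^{t})\geq \elbo(\theta^{t},\theta^{t})$ from the \mstep update $\theta^{t+1}=\arg\max_{\theta}\elbo(\theta,\theta^{t})$. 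Chaining these three yields the lemma in one line.

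For (i), I would invoke Jensen's inequality exactly as in Eq.~\eqref{eq:elbo} of the main text. For (iii), the bound is immediate from the definition of $\theta^{t+1}$ as an argmax. The only nontrivial step is (ii): the tightness of Jensen's inequality. I would handle it by plugging $q(\z\mid x)=p(\z\mid x,y;\theta^{t})$ into the ratio $p(y,\z\mid x;\theta^{t})/q(\z\mid x)$ and observing that by Bayes' rule this ratio equals $p(y\mid x;\theta^{t})$, a quantity independent of $\z$. The logarithm therefore leaves the expectation, which collapses to $\log p(y\mid x;\theta^{t})$ and matches the left-hand side of Eq.~\eqref{eq:elbo}. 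Combining (i)–(iii), I would conclude
\begin{equation*}
    \log p(y\mid x;\theta^{t+1}) \;\geq\; \elbo(\theta^{t+1},\theta^{t}) \;\geq\; \elbo(\theta^{t},\theta^{t}) \;=\; \log p(y\mid x;\theta^{t}).
\end{equation*}

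The main obstacle I expect is not the derivation itself but the mismatch between the clean posterior-based ELBO used above and the importance-sampled form actually implemented in the \estep (Eq.~\eqref{eq:elbo-prior}, with weights $w(\z)\propto p(y\mid \z,x;\theta^{t})$). I would address this by pointing out that importance sampling from $p(\z\mid x;\theta^{t})$ with those weights is, at the population level, an unbiased reformulation of the posterior expectation, so both ELBO expressions coincide and the tightness claim in step (ii) is preserved. A secondary caveat worth flagging explicitly is that the \mstep requires only $\elbo(\theta^{t+1},\theta^{t})\geq \elbo(\theta^{t},\theta^{t})$ rather than an exact argmax, so even a partial gradient ascent step (as in Algorithm~\ref{algo:opt}) suffices for the monotonicity conclusion, which is precisely why the result is stated for the \emph{generalized} EM algorithm.
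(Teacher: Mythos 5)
Your proposal is correct and follows essentially the same route as the paper's own proof: Jensen's inequality for the one-sided bound, tightness of the ELBO at $\theta^{t}$ when the proposal equals the posterior $p(\z \mid x, y; \theta^{t})$, and the M-step improvement, chained into $\log p(y \mid x; \theta^{t+1}) \geq \elbo(\theta^{t+1}, \theta^{t}) \geq \elbo(\theta^{t}, \theta^{t}) = \log p(y \mid x; \theta^{t})$. Your explicit Bayes-rule verification of the tightness step (the ratio $p(y,\z \mid x;\theta^{t})/p(\z \mid x,y;\theta^{t}) = p(y \mid x;\theta^{t})$ being constant in $\z$) and your remark that only $\elbo(\theta^{t+1},\theta^{t}) \geq \elbo(\theta^{t},\theta^{t})$ rather than an exact argmax is needed are details the paper asserts without proof, so your write-up is, if anything, slightly more complete.
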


\begin{proof}
Reviewing the EM-style training in our method, the main concept involves introducing a tractable evidence lower bound (ELBO) and progressively improving it to optimize $\log p(y|x;\theta)$.  
In the E-step of the $t$-th iteration, we sample trajectories $\z$ from the current model as $\z \sim p(\z \mid x; \theta^t)$ and assign each a weight $w(\z) = \frac{p(\z \mid x,y; \theta^t)}{p(\z \mid x; \theta^t)} \propto p(y \mid x, \z; \theta^t)$.  
We define the ELBO as:
\begin{equation}
\elbo(\theta, \theta^t) := \mathbb{E}_{\z \sim p(\z \mid x; \theta^t)} \left[ w(\z) \log p(y, \z \mid x; \theta) \right] + c.
\end{equation}
In the M-step, we update the model by maximizing this ELBO:
\begin{equation}
\theta^{t+1} = \arg\max_{\theta} \elbo(\theta, \theta^t),
\end{equation}
which guarantees that: $\elbo(\theta^{t+1}, \theta^t) \ge \elbo(\theta^t, \theta^t)$.  
Meanwhile, since the ELBO indicates the evidence lower bound for the marginal distribution $\log p(y \mid x; \theta^{t+1})$, it holds that: $\log p(y \mid x; \theta^{t+1}) \ge \elbo(\theta^{t+1}, \theta^t)$, and  
$\elbo(\theta^t, \theta^t) = \log p(y \mid x; \theta^t)$.  
By combining these two equations, we have:
\begin{equation}
\log p(y \mid x; \theta^{t+1}) \ge \log p(y \mid x; \theta^t).
\end{equation}
\end{proof}
Therefore, we have completed the proof of non-decreasing improvement for each training iteration.

\begin{lemma}[\textbf{Boundedness}]
The sequence $\{\log p(y \mid x; \theta^t)\}_{t=1}^\infty$ is upper-bounded.
\label{lemma:2}
\end{lemma}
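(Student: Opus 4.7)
The plan is to exploit the fact that $p(y \mid x; \theta^{t})$ is, by construction, a conditional probability (obtained by marginalising the joint likelihood $p(y, \z \mid x; \theta^t)$ over all trajectories $\z$, as in Eq.~\eqref{eq:margin}). Hence for every $t$ it lies in the interval $[0,1]$, regardless of the particular value of the parameters $\theta^t$. Taking logarithms monotonically preserves this inequality, giving
\begin{equation*}
\log p(y \mid x; \theta^{t}) \in (-\infty, 0] \qquad \text{for all } t \in \mathbb{Z}^{+},
\end{equation*}
so $0$ serves as a uniform upper bound for the sequence. This establishes the claim of Lemma~\ref{lemma:2}.

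The main (essentially only) subtlety to flag is that the statement depends on $y$ being treated as a fixed event in a properly normalised conditional distribution, rather than as a free variable over which we are optimising; I would spell this out explicitly to avoid confusion with sampling-based approximations used in the \estep. I would also remark that the bound $0$ is tight in the sense that $\log p(y \mid x; \theta^{t}) = 0$ would correspond to the LLM assigning probability one to the gold answer, which is the optimum any learner could hope to reach.

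Having established boundedness, I would then combine Lemma~\ref{lemma:1} (non-decreasing optimisation) with Lemma~\ref{lemma:2} and invoke Theorem~\ref{thm:convergence} (Monotone Convergence Theorem) to conclude that $\{\log p(y \mid x; \theta^{t})\}_{t=1}^{\infty}$ converges to a finite limit $L \le 0$. This directly implies the convergence of the training procedure of \ours, closing the theoretical analysis. There is no real obstacle to overcome here: the whole argument rests on the elementary fact that probabilities are at most $1$, so the proof is essentially a one-line observation followed by an application of the already-cited monotone convergence result.
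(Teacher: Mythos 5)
Your proof is correct and is essentially identical to the paper's own one-line argument: since $p(y \mid x; \theta) \in [0,1]$ as a conditional probability, $\log p(y \mid x; \theta) \le 0$ for all $\theta$, so $0$ is a uniform upper bound. The additional remarks on tightness and on combining with the monotonicity lemma and the Monotone Convergence Theorem match how the paper subsequently uses this lemma.
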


\begin{proof}
Since $p(y \mid x; \theta) \in [0, 1]$, we naturally have $\log p(y \mid x; \theta) \le 0$ for all $\theta$.
\end{proof}

\begin{theorem}[\textbf{Convergence of \ours}]
By Lemma~\ref{lemma:1} and Lemma~\ref{lemma:2}, the sequence $\{ \log p(y \mid x; \theta^t) \}$ is non-decreasing and upper-bounded. By the Monotone Convergence Theorem~\cite{Bibby_1974}, it converges to a finite limit.
\end{theorem}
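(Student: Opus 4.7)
The plan is to combine the two preceding lemmas with a classical analysis fact in a short argument. First I would invoke Lemma~\ref{lemma:1} to establish that the real-valued sequence $a_t := \log p(y \mid x; \theta^t)$ is monotonically non-decreasing across training iterations; this follows directly from the generalized EM construction, where the M-step maximizes the ELBO and the ELBO is a tight lower bound on the marginal log-likelihood at the current $\theta^t$. Next I would invoke Lemma~\ref{lemma:2} to observe that $a_t \le 0$ for all $t$, since $p(y \mid x; \theta) \in [0,1]$ as a probability.

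Having established that $\{a_t\}$ is non-decreasing and bounded above by $0$, the Monotone Convergence Theorem applies directly and yields convergence to a finite limit $L = \sup_t a_t \in (-\infty, 0]$. The one technical detail worth checking is that the sequence does not diverge to $-\infty$ at its starting point: this is ruled out by monotonicity provided that the initial value $a_0 = \log p(y \mid x; \theta^{0})$ is itself finite, which holds whenever the initial policy assigns strictly positive probability to the ground-truth answer, an assumption implicit in the cold-start initialization described in Section~\ref{sec:implement}.

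I expect no real obstacle in the argument itself, since the two lemmas already carry all the mathematical weight and the theorem is essentially a one-line corollary of the Monotone Convergence Theorem. The only care worth taking is conceptual: I would state clearly what the theorem does and does not deliver. Convergence of the log-likelihood sequence does not by itself guarantee convergence of the parameter iterates $\{\theta^t\}$, nor that the limit is a global optimum of $\log p(y \mid x; \theta)$; it guarantees only that the GEM fixed-point iteration drives the marginal log-likelihood to a stationary value. I would make this caveat explicit at the end of the proof to avoid overclaiming the scope of the result while still delivering the intended conclusion that the training procedure of \ours is provably convergent.
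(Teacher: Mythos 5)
Your proposal is correct and follows essentially the same route as the paper: Lemma~\ref{lemma:1} gives monotonicity, Lemma~\ref{lemma:2} gives the upper bound of $0$, and the Monotone Convergence Theorem closes the argument. Your two additions --- requiring $p(y \mid x; \theta^{0}) > 0$ so the initial term is finite, and the caveat that convergence of the log-likelihood sequence does not imply convergence of the iterates $\{\theta^t\}$ or global optimality --- are sensible refinements the paper leaves implicit, but they do not change the approach.
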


\begin{remark}[\textbf{Tightness via KL Divergence}]
The ELBO can be interpreted as a tight bound of $\log p(y \mid x; \theta)$ with the following identity:
\begin{equation}
\log p(y \mid x; \theta) = \elbo(\theta, \theta^t) + \text{KL}(q^*(\z) \parallel p(\z \mid x, y; \theta)),
\end{equation}
where $q^*(\z) \propto w(\z) \cdot p(\z \mid x; \theta^t)$ is the induced sampling distribution.  
Therefore, under mild regularity conditions (e.g., bounded support, continuity of log-likelihood), as $q^*$ approaches the true posterior, the KL term vanishes, and the ELBO becomes tight. This strengthens the convergence result and characterizes the optimization gap.
\end{remark}

\subsection{Rethink What Does the LLM Learn Within \ours?}\label{sec:app:rethink}
In \S~\ref{sec:app:derivation} and main body of our paper, we demonstrate that the answer log-likelihood (i.e., $p(y|x,\z;\theta)$) serves as an end-to-end training signal in our self-incentivized training process, guiding the model to reason and search.  
Given the primary goal of information retrieval, especially in downstream tasks like retrieval-augmented generation, we typically train the model to generate accurate answers for users and evaluate its performance using correctness-oriented metrics, such as exact match score or accuracy, thereby ensuring the factuality.
However, a natural question arises:
\begin{quote}
    \textit{Does our self-improving framework also maximize the commonly used downstream metrics, such as accuracy, and why does it work or not?}
\end{quote}
In addition to the strong empirical results shown in our experiments, we provide a more interpretable and theoretical analysis in this section.  
Below, we first briefly highlight the learning objective introduced in \S~\ref{sec:app:derivation}, denoted as the \textbf{\textit{vanilla objective}}, which uses $w(\z) \propto p(y \mid x, \z; \theta)$ as the training signal. We then introduce a \textbf{\textit{goal-oriented learning objective}}, where the LLM is trained to maximize an evaluation metric using a similar Expectation-Maximization algorithm. 
Finally, we relate these two objectives and show their consistency in model training and optimization, illustrating why the vanilla objective aligns with maximizing the expected metric and downstream task performance.

\subsubsection{Reviewing Vanilla Learning Objective.}
The vanilla learning objective is defined as $\elbo = \mathbb{E}_{\z \sim p(\z \mid x; \theta^t)} \left[w(\z) \log p(y, \z \mid x; \theta)\right]$. Here, $w(\z)$ is the weighting function given by $w(\z) = \frac{p(\z \mid x, y; \theta^t)}{p(\z \mid x; \theta^t)}$.
This weighting function is derived from the ratio of the posterior distribution $p(\z \mid x, y; \theta^t)$ to the prior $p(\z \mid x; \theta^t)$, which reflects how well the trajectory $\z$ supports the final answer $y$.

\subsubsection{Introducing Goal-oriented Objective}
We now replace the weighting function $w(\z)$ with a evaluation metric $r(y)$ (\textit{also widely known as the reward function}), which evaluates the quality of the final answer $y$.  
Formally, given an evaluation metric $r(y)$ that rates the quality of an answer $y$, our goal is to optimize the model parameters $\theta$ to improve the expected performance. We define the expected learning objective under the model as:
\begin{equation}
    \begin{aligned}\label{eq:reward}
        \mathcal{J}(\theta) & = \mathbb{E}_{y \sim p(y \mid x; \theta)}[r(y)] = \sum_{y} r(y) p(y \mid x)
    \end{aligned}
\end{equation}
In \ours, since the model first generates a reasoning trajectory $\z$ and then outputs an answer $y$, we can rewrite Eq.~\ref{eq:reward} as:
\begin{equation}
    \begin{aligned}
        \mathcal{J}(\theta) & = \mathbb{E}_{y \sim p(y \mid x; \theta)}[r(y)] = \sum_{y} r(y) \sum_{\z} p(\z, y \mid x) = \sum_{\z, y} r(y) p(y, \z \mid x). \\
    \end{aligned}
\end{equation}
Here, $p(y, \z \mid x)$ denotes the LLM generating a reasoning path $\z$ followed by a final answer $y$.  
Marginalizing over all possible $(\z, y)$ is typically intractable due to the large action space of the LLM. We now derive a variational surrogate for optimizing such a goal-oriented objective through a tractable lower bound.

\begin{proposition}[Variational Lower Bound as a Proxy for Metric Maximization]
\label{prop:elbo-gap}
    Given a non-negative evaluation metric function $r$, let $\mathcal{J}(\theta) := \mathbb{E}_{\z, y \sim p(\z, y \mid x; \theta)}[r(y)]$ be the expected metric.
    We can introduce a proposal distribution $q(\z, y)$ over the $(\z, y)$ space to construct a more tractable evidence lower bound:
    \begin{equation}
        \elbo(\theta, q) := \sum_{\z, y} q(\z, y \mid x) \log \frac{r(y) p(\z, y \mid x; \theta)}{q(\z, y \mid x)},
    \end{equation}
    which is a function only related to $q$ and $\theta$. It satisfies:
    \begin{equation}
        \left\| \arg\max_{\theta} \elbo(\theta, q) - \arg\max_{\theta} \mathcal{J}(\theta) \right\| \leq c \cdot \left(\text{KL}(q \parallel r \cdot p_{\theta})\right)^{1/2}.
    \end{equation}, where $c$ is a constant.
    Under mild assumptions, the boundedness (=) holds when $q = q^{*} \approx r \cdot p_{\theta}$.
\end{proposition}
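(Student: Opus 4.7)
The plan is to derive the claimed bound via a variational identity followed by a local sensitivity argument.

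First, I would use the standard Jensen-inequality manipulation — multiplying and dividing by $q(\z,y\mid x)$ inside the logarithm, applied to the unnormalized goal-weighted measure $r(y)\,p(\z,y\mid x;\theta)$. This yields the exact identity
\begin{equation*}
\log \mathcal{J}(\theta) \;=\; \elbo(\theta, q) \;+\; \text{KL}\bigl(q \,\|\, \pi_\theta\bigr),
\end{equation*}
where $\pi_\theta(\z,y) := r(y)\,p(\z,y\mid x;\theta)/\mathcal{J}(\theta)$ is the normalized goal-reweighted posterior. This identity immediately delivers the equality case of the proposition: when $q = q^{\star} = \pi_\theta \propto r \cdot p_\theta$, the KL term vanishes, $\elbo(\theta, q^{\star}) = \log \mathcal{J}(\theta)$, and the two optimization problems coincide — this is the boundedness/tightness claim stated at the end of the proposition. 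Up to the harmless additive normalization $\log \mathcal{J}(\theta)$, the divergence $\text{KL}(q \,\|\, \pi_\theta)$ is interchangeable with the $\text{KL}(q \,\|\, r \cdot p_\theta)$ appearing in the statement.

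Second, to handle general $q$, I would view $\elbo(\theta, q)$ as a perturbation of $\log \mathcal{J}(\theta)$ whose pointwise deviation is exactly $\text{KL}(q \,\|\, \pi_\theta)$. Under a local strong-concavity assumption on $\log \mathcal{J}(\theta)$ near its maximizer $\theta^{\star}$ with curvature constant $\mu > 0$ (equivalently, a local quadratic-growth condition — this is the "mild regularity" hypothesis), a standard argmax-stability lemma for $M$-estimators gives
\begin{equation*}
\|\theta^{\elbo} - \theta^{\star}\|^{2} \;\leq\; \tfrac{2}{\mu}\,\bigl(\log \mathcal{J}(\theta^{\star}) - \log \mathcal{J}(\theta^{\elbo})\bigr) \;\leq\; \tfrac{2}{\mu}\,\text{KL}\bigl(q \,\|\, \pi_\theta\bigr),
\end{equation*}
where $\theta^{\elbo} := \arg\max_\theta \elbo(\theta, q)$ and the second inequality uses the variational identity together with the fact that $\theta^{\elbo}$ maximizes $\elbo(\theta,q)$. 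Taking $c := \sqrt{2/\mu}$ yields the claimed $c \cdot \bigl(\text{KL}(q \,\|\, r \cdot p_\theta)\bigr)^{1/2}$ bound.

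The main obstacle I expect is justifying the local strong-concavity hypothesis: the landscape of $\log \mathcal{J}(\theta)$ for a deep LLM is globally non-concave, so the argmax-stability step must be confined to a neighborhood of a stationary point, and the constant $c$ absorbs the local curvature. A secondary technical point is ensuring $\pi_\theta$ is a well-defined probability distribution, which requires $r \geq 0$ and $\mathcal{J}(\theta) > 0$; both hold whenever $r$ is a bounded nonnegative evaluation metric such as exact match or accuracy. If a strong-concavity assumption proves too restrictive, I would fall back on Pinsker's inequality to convert $\text{KL}$ to total variation and then apply a Lipschitz-gradient argument on $\log \mathcal{J}$ to recover the same $O(\sqrt{\text{KL}})$ rate, which is why the $1/2$ exponent appears naturally in the bound.
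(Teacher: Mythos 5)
Your proposal follows the same overall architecture as the paper's proof in Appendix~\ref{sec:app:detailed-proposition}: a Jensen/variational step relating $\log\mathcal{J}(\theta)$ to $\elbo(\theta,q)$ with a KL gap, the observation that the gap vanishes at $q^*\propto r\cdot p_\theta$ (which settles the tightness claim), and a curvature-based argmax-stability argument to convert the objective gap into a bound on $\|\hat\theta-\theta^*\|$. Your phrasing of the first step as the exact identity $\log\mathcal{J}(\theta)=\elbo(\theta,q)+\mathrm{KL}(q\,\|\,\pi_\theta)$ with the \emph{normalized} target $\pi_\theta$ is a cleaner packaging of what the paper writes as an inequality plus a separate gap computation, but it is the same content. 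The one substantive divergence is the regularity hypothesis, and here your version is the correct one: you assume local strong concavity (a curvature \emph{lower} bound $\mu>0$, i.e.\ quadratic growth at the maximizer), which is exactly what is needed to deduce $\|\hat\theta-\theta^*\|^2\le\tfrac{2}{\mu}\bigl(\log\mathcal{J}(\theta^*)-\log\mathcal{J}(\hat\theta)\bigr)$. The paper instead assumes $L$-smoothness (a curvature \emph{upper} bound) and then ``rearranges'' the descent-lemma inequality $\log\mathcal{J}(\theta^*)\le\log\mathcal{J}(\hat\theta)+\tfrac{L}{2}\|\theta^*-\hat\theta\|^2+\mathrm{KL}$ into an upper bound on $\|\theta^*-\hat\theta\|^2$; that rearrangement reverses the inequality and only yields a \emph{lower} bound on the distance, so the paper's constant $c=\sqrt{2/L}$ is not actually justified by its stated hypothesis. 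Your proof repairs this by using the right tool (strong concavity or, as you note, a P{\L}-type quadratic-growth condition), at the cost of an admittedly strong local assumption on the LLM loss landscape, which you correctly flag as the main obstacle. Your fallback via Pinsker plus a Lipschitz-gradient argument would need care for the same directional reason, but the primary route is sound.
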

We provide the detailed proof for this proposition in \S~\ref{sec:app:detailed-proposition}.
This proposition indicates that we can optimize the $\elbo$ as a proxy to improve the $\mathcal{J}(\theta)$, following a similar Expectation-Maximization algorithm as introduced in vanilla \ours (\S~\ref{sec:app:derivation}).
In more details, this involves alternating between the following steps: (i) \estep: sampling $(\z, y)$ trajectories; and (ii) \mstep: updating the model parameters.

\paragraph{E-step: Sampling Trajectories.}  
To approximate the true posterior distribution $q^*$, we sample $(\z, y)$ from the current model $p(\z, y \mid x; \theta^{t})$ using an importance sampling strategy.  
The corresponding importance weight is formulated as $\frac{q^*(\z, y \mid x)}{p(\z, y \mid x; \theta^{t})} = r(y)$, which is obtained using the evaluation metric.
\begin{equation}
\begin{aligned}
    \elbo  & =  \mathbb{E}_{(\z,y)\sim q^{*}(\z,y\mid x)}[\log p(\z,y\mid x;\theta)] + c \\
                    & = \mathbb{E}_{(\z,y)\sim p(\z,y\mid x;\theta^{t})}[\frac{q^{*}(\z,y)}{p(\z, y\mid x;\theta^{t})}\log p(\z,y\mid x;\theta)] + c  \\
                    & = \mathbb{E}_{(\z,y)\sim p(\z,y\mid x;\theta^{t})}[r(y)\log p(\z,y\mid x;\theta)] + c  \\
\end{aligned}
\end{equation}

\paragraph{M-step: Update the Model Parameters.}
Given the weighted samples, we update $\theta$ by maximizing the weighted log-likelihood:
\begin{equation}
    \theta = \arg \max_\theta \mathbb{E}_{(\z, y) \sim p(\cdot \mid \theta^{t})} [r(y) \log p(\z, y \mid x; \theta)].
\end{equation}

By alternating between the above E-step and M-step, we can train the LLM to maximize the given metric.  
This can be seen as a reward-weighted generalization of expectation-maximization~\cite{emrl}.  
This formulation naturally integrates downstream evaluation metric (via $r(y)$) into a likelihood-based training framework.

\subsubsection{Relating Vanilla Learning Objective with Goal-oriented Objective.}

In the goal-oriented objective, the evaluation metric $r(y)$ directly evaluates the quality of the answer $y$ to return a training signal.  
Similarly, the training signal $w(\z) \propto p(y \mid x, \z; \theta)$ represents the probability of generating a correct answer, and the higher the probability of generating the correct answer, the greater the expected metric. Therefore, we have $w(\z) \propto r(y)$.  

Reviewing the optimization objectives in the \textit{vanilla objective} and the \textit{goal-oriented objective}:  
In the vanilla objective, we have $\theta = \arg \max_{\theta} \mathbb{E}_{\z \sim p(\z \mid x; \theta^t)} \left[ w(\z) \log p(y, \z \mid x; \theta) \right]$.  
In the goal-oriented objective, we have $\hat{\theta} = \arg \max_{\theta} \mathcal{J}(\theta) \approx \arg \max_\theta \mathbb{E}_{(\z, y) \sim p(\cdot \mid \theta^{t})} [r(y) \log p(\z, y \mid x; \theta)]$.  
Thus, when $w(\z) \propto r(y)$, we have $\theta \approx \hat{\theta}$. That is, under mild assumptions, the two optimization objectives are theoretically equivalent.

\textit{In summary}, for a non-negative metric or reward $r(\cdot)$ that encourages the model to generate high-quality, correct answers, training with the vanilla objective also maximizes that the evaluation metric in downstream tasks.

\begin{figure}[t]
  \centering
    \includegraphics[width=\linewidth]{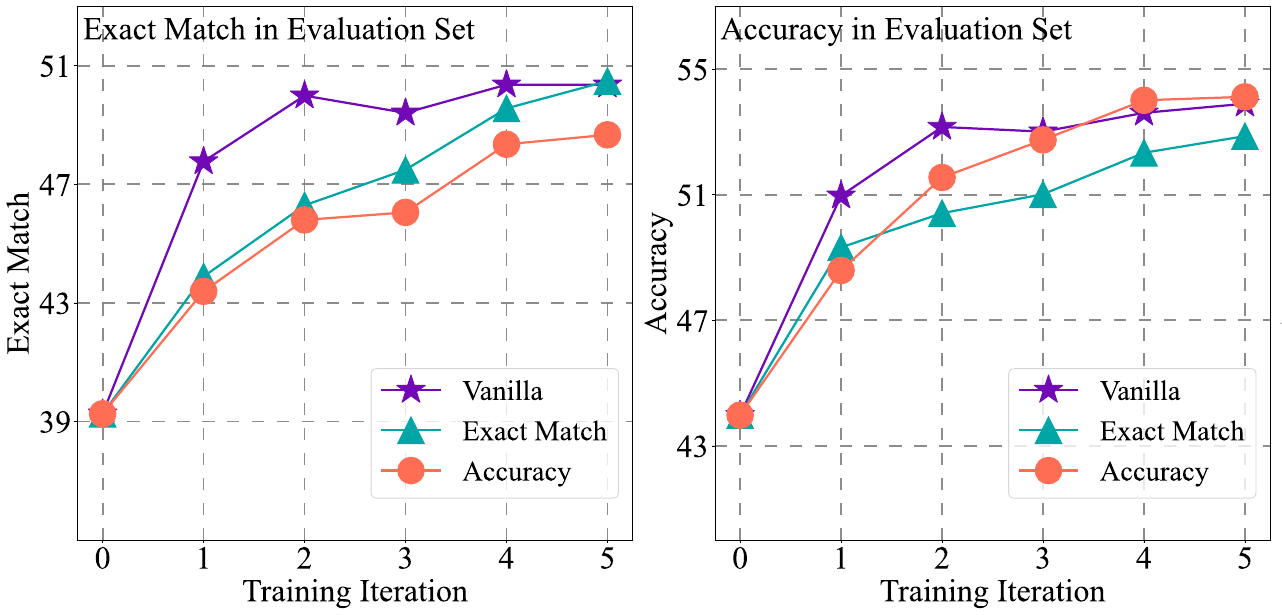}
\caption{Comparing vanilla training process in our method with variants trained using Exact Match or Accuracy as the training signal.}\label{fig:reward-eval}
\end{figure}

\subsubsection{Experimental Results}

To validate the theoretical analysis presented above, we implement the goal-oriented objective to train the LLM. Following prior work, such as DeepSeek-R1~\cite{guo2025deepseek}, we adopt two rule-based metrics, namely Exact Match (EM) and Accuracy (Acc.), as the function \(r(y)\) in Proposition~\ref{prop:elbo-gap}. These metrics are commonly used in open-domain QA and provide direct, interpretable supervision.

Figure~\ref{fig:reward-eval} presents the performance of models trained using both the vanilla objective and the extended goal-oriented objective. We observe that models trained with different objectives (e.g., answer log-likelihood or specific metrics) converge to similar performances on the corresponding metric. For instance, the curves labeled \textit{vanilla} and \textit{w/ Exact Match} on the left side of Figure~\ref{fig:reward-eval} both achieve a score of 50 in exact match after 5 iterations of training.  
A similar trend can also be found on the right side of Figure~\ref{fig:reward-eval}, where both the curves labeled \textit{vanilla} and \textit{w/ Accuracy} achieve a score of 53 in accuracy.  
This observation supports and aligns with the theoretical analysis presented above.

Additionally, we observe that the model trained using the vanilla answer log-likelihood converges faster and achieves the best average performance across both exact match and accuracy metrics. We analyze that there are two potential reasons.
\textbf{First}, the \( p(y \mid \z, x; \theta) \) is proportional to exact match and accuracy, which enhances the probability of generating correct answers, thus improving both metrics.  
\textbf{Second}, compared to rule-based metric that only evaluate the final result, \(p(y \mid \z, x; \theta)\) provides a denser and continuous signal. This offers more effective guidance on how intermediate steps in the reasoning process influence the final outcome, allowing the model to fine-tune its reasoning trajectory toward the optimal solution.

\subsubsection{Detailed Derivation for Proposition~\ref{prop:elbo-gap}}\label{sec:app:detailed-proposition}

Let $r(y) \geq 0$ be an evaluation metric, and define the expected objective as $\mathcal{J}(\theta) := \mathbb{E}_{(\z, y) \sim p(\z, y \mid x; \theta)}[r(y)]$. For any proposal distribution $q(\z, y)$ over reasoning-answer trajectories, we can construct a tractable evidence lower bound (ELBO) as:
\begin{equation}
    \elbo(\theta, q) := \sum_{\z, y} q(\z, y \mid x) \log \frac{r(y) p(\z, y \mid x; \theta)}{q(\z, y \mid x)}.
\end{equation}
To prove Proposition~\ref{prop:elbo-gap}, we show that there exists a constant $c > 0$ such that:
\begin{equation}
    \left\| \arg\max_{\theta} \elbo(\theta, q) - \arg\max_{\theta} \mathcal{J}(\theta) \right\| \leq c \cdot \left( \text{KL}(q \parallel r \cdot p_{\theta}) \right)^{1/2},
\end{equation}
where $r \cdot p_\theta$ denotes the unnormalized target distribution $q^*(\z, y) \propto r(y) p(\z, y \mid x; \theta)$.

\begin{proof}
We first apply Jensen’s inequality to construct the ELBO:
\begin{equation}
\begin{aligned}
    \mathcal{J}(\theta) &= \sum_{\z, y} r(y) p(\z, y \mid x; \theta) = \sum_{\z,y} q(\z, y \mid x) \cdot \frac{r(y) p(\z, y \mid x; \theta)}{q(\z, y \mid x)} \\
    &\geq \exp\left( \sum_{\z,y} q(\z, y \mid x) \log \frac{r(y) p(\z, y \mid x; \theta)}{q(\z, y \mid x)} \right) = \exp(\elbo(\theta, q)).
\end{aligned}\label{eq:app:jenson}
\end{equation}
Taking logarithms (which preserves ordering due to its monotonicity), we obtain $ \log \mathcal{J}(\theta) \geq \elbo(\theta, q)$ and note that $\arg\max_\theta \mathcal{J}(\theta) = \arg\max_\theta \log \mathcal{J}(\theta)$ due to the monotonicity of $\log(\cdot)$.
To relate the maximization of $\elbo(\theta, q)$ and $\mathcal{J}(\theta)$, we begin by denoting $\theta^* := \arg\max_\theta \log \mathcal{J}(\theta) =  \arg\max_\theta \mathcal{J}(\theta)$ and $\hat{\theta} := \arg\max_\theta \elbo(\theta, q)$. We aim to bound the distance $\|\theta^* - \hat{\theta}\|$.

We assume the objective function $\log \mathcal{J}(\theta)$ is $L$-smooth, i.e., it has $L$-Lipschitz continuous gradients\footnote{\url{https://en.wikipedia.org/wiki/Lipschitz_continuity}}. This implies that for any $\theta_1, \theta_2$:
\begin{equation}
\log \mathcal{J}(\theta_1) \leq \log \mathcal{J}(\theta_2) + \nabla \log \mathcal{J}(\theta_2)^\top (\theta_1 - \theta_2) + \frac{L}{2} \| \theta_1 - \theta_2 \|^2.
\end{equation}

Applying this inequality with $\theta_1 = \theta^*$ and $\theta_2 = \hat{\theta}$, and using the fact that $\nabla \elbo(\hat{\theta}, q) = 0$ at the maximizer $\hat{\theta}$, we can write:
\begin{equation}
\begin{aligned}
    \log \mathcal{J}(\theta^*) &\leq \log \mathcal{J}(\hat{\theta}) + \nabla \log \mathcal{J}(\hat{\theta})^\top (\theta^* - \hat{\theta}) + \frac{L}{2} \| \theta^* - \hat{\theta} \|^2 \\
    &\leq \log \mathcal{J}(\hat{\theta}) + \frac{L}{2} \| \theta^* - \hat{\theta} \|^2,
\end{aligned}
\end{equation}
where the last inequality assumes that the inner product term vanishes at first-order optimality.
We now relate $\log \mathcal{J}(\hat{\theta})$ and $\elbo(\hat{\theta}, q)$ via:
\begin{equation}
\log \mathcal{J}(\hat{\theta}) - \elbo(\hat{\theta}, q) = \text{KL}(q \,\|\, r(y) \cdot p(\z, y \mid x; \hat{\theta})) + \log \mathcal{Z}
\end{equation}
where $\mathcal{Z} = \sum_{\z,y} r(y) p(\z, y \mid x; \hat{\theta}) = \mathcal{J}(\hat{\theta})$, so the gap equals:
\begin{equation}
\log \mathcal{J}(\hat{\theta}) - \elbo(\hat{\theta}, q) = \text{KL}(q \,\|\, q^*)
\end{equation}
with $q^*(\z,y) \propto r(y) p(\z,y \mid x; \hat{\theta})$.

Putting it all together, we can obtain the following equation:
\begin{equation}
\log \mathcal{J}(\theta^*) - \elbo(\hat{\theta}, q) \leq \frac{L}{2} \|\theta^* - \hat{\theta}\|^2 + \text{KL}(q \,\|\, q^*)
\end{equation}

By rearranging this, we have:
\begin{equation}
\|\theta^* - \hat{\theta}\|^2 \leq \frac{2}{L} \cdot \left( \log \mathcal{J}(\theta^*) - \elbo(\hat{\theta}, q) \right) \leq \frac{2}{L} \cdot \text{KL}(q \,\|\, r \cdot p_{\hat{\theta}})
\end{equation}
Taking square roots gives:
\begin{equation}
\|\theta^* - \hat{\theta}\| \leq c \cdot \left( \text{KL}(q \,\|\, r \cdot p_{\hat{\theta}}) \right)^{1/2},
\end{equation}
where $c=\sqrt{\tfrac{2}{L}}$.
This completes the proof.
\end{proof}

\textit{In summary}, this proposition suggests that we can maximize $\elbo$ as a surrogate for $\mathcal{J}$. Below, we examine when this bound is tight and show that optimizing $\elbo$ under a specific posterior $q^*$ is equivalent to maximizing the expected objective.
\begin{lemma}[\textbf{Tightness of the Lower Bound}]
The lower bound in Eq.~\ref{eq:app:jenson} becomes tight if and only if the proposal distribution satisfies:
\begin{equation}
q(\z, y \mid x) = q^*(\z, y \mid x) \propto r(y) \cdot p(\z, y \mid x; \theta)
\end{equation}
\label{lemma:reward-tight}
\end{lemma}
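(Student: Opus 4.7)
The plan is to prove this tightness lemma by isolating the single application of Jensen's inequality that produced the bound in Eq.~\ref{eq:app:jenson} and invoking the strict concavity of $\log(\cdot)$, which gives equality in Jensen if and only if the random variable being averaged is constant almost surely over the support of $q$. Since the construction in Eq.~\ref{eq:app:jenson} is precisely $\log \mathbb{E}_q[f(\z,y)] \geq \mathbb{E}_q[\log f(\z,y)]$ with $f(\z,y) = r(y) p(\z, y \mid x; \theta) / q(\z, y \mid x)$, both directions of the iff will follow from this single fact once the algebra is arranged correctly.

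First, I would rewrite the lower bound derivation cleanly: applying $\log$ to both sides of the inequality chain in Eq.~\ref{eq:app:jenson}, we get $\log \mathcal{J}(\theta) \geq \elbo(\theta, q)$, and the gap is exactly Jensen's gap for the function $f$ above under the distribution $q$. Next, I would invoke the standard fact that, because $\log$ is strictly concave, equality in $\log \mathbb{E}_q[f] \geq \mathbb{E}_q[\log f]$ holds if and only if $f(\z,y)$ is $q$-almost-surely equal to some constant $C$ on $\mathrm{supp}(q)$. Setting $r(y) p(\z,y \mid x; \theta)/q(\z,y \mid x) = C$ and solving yields $q(\z,y \mid x) = r(y) p(\z,y \mid x; \theta)/C$, which upon normalization (summing both sides over $(\z,y)$) forces $C = \sum_{\z,y} r(y) p(\z,y \mid x; \theta) = \mathcal{J}(\theta)$. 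Hence
\begin{equation*}
q^*(\z,y \mid x) = \frac{r(y)\, p(\z, y \mid x; \theta)}{\mathcal{J}(\theta)} \propto r(y)\, p(\z, y \mid x; \theta),
\end{equation*}
which is the forward implication of the lemma.

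For the converse direction, I would substitute $q = q^*$ directly into $\elbo(\theta, q)$ and compute: the integrand becomes $\log(r(y) p(\z,y \mid x;\theta)) - \log q^*(\z,y \mid x) = \log \mathcal{J}(\theta)$, a constant in $(\z,y)$, so $\elbo(\theta, q^*) = \log \mathcal{J}(\theta)$, matching the left-hand side and certifying tightness. This recovers the familiar ELBO/posterior duality with $r \cdot p_\theta$ playing the role of the unnormalized joint and $\mathcal{J}(\theta)$ the role of the evidence.

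The main obstacle, such as it is, is being careful with the support condition: Jensen's equality is an almost-sure statement with respect to $q$, so any $(\z,y)$ outside $\mathrm{supp}(q)$ is unconstrained and the claim should be read modulo such trivial modifications. A secondary subtlety is ensuring $r(y) > 0$ on the support of $q^*$ so that the normalization is well-defined; this is handled by the standing assumption $r(y) \geq 0$ together with the convention that $q^*$ is supported where $r(y) p(\z,y \mid x;\theta) > 0$. Everything else reduces to elementary manipulation of the expectation.
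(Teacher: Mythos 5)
Your proposal is correct and follows essentially the same route as the paper: both rest on the standard equality condition for Jensen's inequality applied to the strictly concave $\log$, concluding that the argument $r(y)\,p(\z,y\mid x;\theta)/q(\z,y\mid x)$ must be constant and that normalization forces $q^* \propto r(y)\,p(\z,y\mid x;\theta)$ with normalizer $\mathcal{J}(\theta)$. Your treatment is somewhat more careful than the paper's (you handle the converse direction explicitly and note the $q$-almost-sure support caveat), but there is no substantive difference in approach.
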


\begin{proof}
Equality in \textit{Jensen’s inequality} holds when the log argument is constant over $\z, y$:
\begin{equation}
\frac{r(y) p(\z, y \mid x; \theta)}{q(\z, y \mid x)} = \text{const.}, \quad \forall \z, y.
\end{equation}
Solving for $q(\z, y \mid x)$ yields:
\begin{equation}
q^*(\z, y \mid x) = \frac{r(y) p(\z, y \mid x; \theta)}{\mathcal{Z}}, \quad \mathcal{Z} = \sum_{\z, y} r(y) p(\z, y \mid x; \theta)
\end{equation}
i.e., the normalized metric-weighted joint distribution.
\end{proof}

\begin{lemma}[\textbf{Optimality of ELBO Maximization}]
If $q = q^*(\z, y \mid x) \propto r(y) p(\z, y \mid x; \theta)$, then:
\begin{equation}
\arg\max_\theta \elbo(\theta, q^*) = \arg\max_\theta \log \mathcal{J}(\theta)
\end{equation}
\end{lemma}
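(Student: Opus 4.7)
The plan is to leverage Lemma~\ref{lemma:reward-tight} directly: once the proposal distribution is set to $q^{*}$, Jensen's inequality collapses into an equality, so the ELBO coincides with $\log \mathcal{J}(\theta)$ pointwise in $\theta$. From there, the claimed argmax identity is immediate after invoking monotonicity of the logarithm. Because the statement is essentially the tightness corollary of the variational bound, I expect no deep obstacle; the work is mostly bookkeeping about how $q^{*}$ depends on $\theta$.

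First, I would substitute $q^{*}(\z,y\mid x) = r(y)\, p(\z,y\mid x;\theta)/\mathcal{J}(\theta)$ into $\elbo(\theta, q^{*})$. The log-ratio inside the ELBO simplifies to
\begin{equation*}
\log \frac{r(y)\, p(\z,y\mid x;\theta)}{q^{*}(\z,y\mid x)} = \log \mathcal{J}(\theta),
\end{equation*}
which is constant in $(\z,y)$. Pulling the constant out of the expectation and using $\sum_{\z,y} q^{*}(\z,y\mid x) = 1$, I obtain $\elbo(\theta, q^{*}) = \log \mathcal{J}(\theta)$. This recovers the tightness condition already established in Lemma~\ref{lemma:reward-tight} but now as a functional identity in $\theta$ rather than an equality at a single point.

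Second, I would address the only subtlety worth flagging: $q^{*}$ itself is a function of $\theta$ through the normalizer $\mathcal{J}(\theta)$ and the joint $p(\z,y\mid x;\theta)$. If $q^{*}$ were frozen at some $\theta_{0}$, tightness would hold only at $\theta_{0}$ and the argmax equivalence would fail in general. The correct reading, consistent with the coordinate-ascent interpretation of EM used throughout this work, is that $q^{*}$ is re-instantiated at each $\theta$ so that the pointwise identity $\elbo(\theta, q^{*}) = \log \mathcal{J}(\theta)$ holds for every $\theta$ in the parameter space. This is the main obstacle to a clean write-up, and I would handle it with a short clarifying sentence before the substitution step.

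Finally, with $\elbo(\theta, q^{*}) = \log \mathcal{J}(\theta)$ as functions of $\theta$, taking $\arg\max_{\theta}$ on both sides yields $\arg\max_{\theta} \elbo(\theta, q^{*}) = \arg\max_{\theta} \log \mathcal{J}(\theta)$. Since $r(\cdot) \geq 0$ implies $\mathcal{J}(\theta) \geq 0$, and since $\log$ is strictly monotonic on $(0,\infty)$, the argmax of $\log \mathcal{J}(\theta)$ coincides with the argmax of $\mathcal{J}(\theta)$ itself (on the support where $\mathcal{J}(\theta) > 0$, which is the only region of interest since a degenerate $\theta$ with $\mathcal{J}(\theta) = 0$ is never a maximizer). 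This closes the chain and delivers the stated equivalence, confirming that ELBO maximization under the optimal proposal is exactly metric maximization.
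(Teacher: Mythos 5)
Your proposal is correct and follows essentially the same route as the paper's own proof: both invoke the tightness of Jensen's inequality at $q^{*}$ (Lemma~\ref{lemma:reward-tight}) to obtain the pointwise identity $\elbo(\theta, q^{*}) = \log \mathcal{J}(\theta)$ and then take the argmax. The one place you go beyond the paper is in explicitly flagging that $q^{*}$ must be re-instantiated at each $\theta$ rather than frozen at a single $\theta_{0}$ for the identity to hold as a function of $\theta$; the paper's proof silently assumes this, so your clarification is a welcome but non-essential refinement of the same argument.
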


\begin{proof}
From Lemma~\ref{lemma:reward-tight}, if $q = q^*$, then:
\begin{equation}
\elbo(\theta, q^*) = \log \mathcal{J}(\theta) \Longrightarrow \arg\max_\theta \elbo(\theta, q^*) = \arg\max_\theta \mathcal{J}(\theta)
\end{equation}
Thus, maximizing the ELBO is equivalent to maximizing the expected objective.
\end{proof}

\section{More Experiment Details}\label{sec:app:details}

\subsection{Experimental Datasets}\label{sec:app:dataset}

Following prior work~\citep{Lewis2020RetrievalAugmentedGF, li2024corpuslm, zamani2024stochastic, yu2024rankrag}, we evaluate our method on a wide range of knowledge-intensive benchmarks, including Natural Questions (NQ)~\cite{kwiatkowski-etal-2019-natural}, HotpotQA~\citep{yang2018hotpotqa}, MuSiQue~\citep{trivedi2021musique},  
and 2WikiMultihopQA (2WikiQA)~\citep{xanh2020_2wikimultihop}.
Table~\ref{tab:dataset} summarizes key statistics of these experimental datasets.

\subsection{Evaluation Metrics}

Following previous studies~\citep{xu2023searchinthechain, ren2023investigating, yu2024rankrag}, we use the following metrics from KILT~\cite{petroni-etal-2021-kilt} for evaluation: \textit{F1}, \textit{Exact Match} (EM), and \textit{Accuracy} (Acc.).
\textit{Exact Match (EM)} checks whether the predicted string exactly matches the ground truth.
\textit{Accuracy}  checks whether the ground truth answer is included in the generated answer, often referred to as cover-Exact Match.
\textit{F1 Score} measures the overlap between the generated answer and the ground truth answer. It represents the harmonic mean of token-level precision and recall between these two sequences.

\subsection{Baselines}\label{sec:app:baseline}

For a comprehensive evaluation, we compare \ours with a range of competitive baselines, categorized into three groups based on their use of retrieval and reasoning integration strategies:
\begin{enumerate*}[label=(\roman*)]
    \item \textbf{Direct Reasoning without Retrieval};
    \item \textbf{Advanced Retrieval-Augmented Generation}; and
    \item \textbf{Iterative Retrieval-Augmented Generation}.
\end{enumerate*}

\paragraph{Direct Reasoning without Retrieval.}  
These methods rely solely on the LLM’s internal parametric knowledge to reason over the input and generate answers, without incorporating any external information. We evaluate several recently released models, including both closed-source models, such as GPT-4o~\cite{gpt4} and GPT-3.5~\cite{chatgpt}, as well as strong open-source models, such as DeepSeek-R1~\cite{guo2025deepseek}, Qwen2.5~\cite{yang2024qwen2}, QwQ-32B~\cite{qwq-32b-preview}, LLaMA-3.3-70B~\citep{dubey2024llama}, and Mistral-8x7B~\cite{mistral}.  
All of these models exhibit strong instruction-following and chain-of-thought reasoning capabilities, achieving remarkable performance on a wide range of natural language processing tasks.

\paragraph{Advanced Retrieval-Augmented Generation.}  
These models retrieve relevant documents from an external corpus, followed by optional document filtering mechanisms such as re-ranking or summarization, and concatenate the useful information into the LLM’s input context for answer generation.
Specifically, we evaluate several widely used approaches, including:
\begin{enumerate*}[label=(\roman*)]
    \item \textbf{ChatQA}~\cite{liu2024chatqa} and \textbf{RankRAG}~\cite{yu2024rankrag}, which unify various knowledge-intensive tasks (e.g., knowledge-grounded dialogue, document re-ranking, question answering) into a single framework, training LLMs on large-scale datasets;
    \item \textbf{InstructRAG}~\cite{wei2024instructrag}, which inserts retrieved documents into the LLM’s input and trains the model to generate chains of thought to identify useful content for answer generation;
    \item \textbf{RetRobust}~\cite{yoran2023making}, which trains the LLM to generate accurate answers conditioned on documents containing both correct and distracting information;
    \item \textbf{Recomp}~\cite{xu2023recomp}, which uses extractive or abstractive summarization modules to filter out irrelevant content from retrieved documents, employing a large model (e.g., Flan-UL2, 20B)\footnote{\url{https://huggingface.co/google/flan-ul2}} to generate answers from the remaining information. In our experiments, we use the abstractive summarization module, as it serves as a more competitive baseline than its extractive counterpart.
\end{enumerate*}
\paragraph{Iterative Retrieval-Augmented Generation.}  
These approaches allow LLMs to actively interact with retrieval modules as needed. For a comprehensive evaluation, we include the following methods:
\begin{enumerate*}[label=(\roman*)]
    \item \textbf{GenGround}~\cite{shi2024generate}, which allows the LLM to iteratively retrieve external documents and refine its generated answer;
    \item \textbf{DSPy}~\cite{khattab2023dspy}, a programming framework that enables an LLM to decompose input queries and call external retrievers through structured prompting;
    \item \textbf{SearchChain}~\cite{xu2023searchinthechain}, which guides the LLM to generate a chain of queries and invoke retrieval at each step;
    \item \textbf{Iter-RetGen}~\cite{shao2023enhancing} and \textbf{IRCoT}~\cite{trivedi-etal-2023-interleaving}, which prompt the LLM to interact with the retriever in an iterative, few-shot manner;
    \item \textbf{Verify-and-Edit}~\cite{zhao2023verify}, which adaptively determines when to stop retrieval and finalize the answer based on the generation logits of the LLM;
    \item \textbf{Generator-Retriever-Generator}~\cite{abdallah2023generator} (abbreviated as \textit{Gen-Ret-Gen}), which instructs the LLM to answer a question using both model-generated and retrieved documents concatenated as context.
\end{enumerate*}
All of the above methods are implemented on GPT-3.5, following their officially released reproducible settings. In addition, we include the following open-source baselines:
\begin{enumerate*}[label=(\roman*)]
    \item \textbf{Search-o1}~\cite{li2025search}, which prompts the LLM to interleave query decomposition and document retrieval steps iteratively;
    \item \textbf{Search-R1}~\cite{jin2025search}, which trains the LLM to use external search engines via outcome rewards and Proximal Policy Optimization (PPO~\cite{schulman2017proximal});
    \item \textbf{Self-RAG}~\cite{asai2023self}, which trains an LLM to retrieve documents on demand, assess their relevance, and generate final answers. This method is fine-tuned on 170k synthetic examples generated by a proprietary LLM.
\end{enumerate*}

\begin{table}[!t]
\caption{Statistics of our experimental datasets, where we provide the amount of training and evaluation dataset, the average length of input query (word) as well as the retrieval corpus.}
\label{tab:dataset}
\centering
\setlength\tabcolsep{3pt}
\begin{tabular}{l cc cc c}
\toprule
\begin{tabular}[c]{@{}l@{}} \textbf{Experimental} \\ \textbf{benchmarks} \end{tabular}
& \begin{tabular}[c]{@{}c@{}} \textbf{Training} \\ \textbf{data size} \end{tabular} 
& \begin{tabular}[c]{@{}c@{}} \textbf{Query length} \\ \textbf{(Train)} \end{tabular} 
& \begin{tabular}[c]{@{}c@{}} \textbf{Evaluation} \\ \textbf{data size} \end{tabular} 
& \begin{tabular}[c]{@{}c@{}} \textbf{Query length} \\ \textbf{(Evaluation)} \end{tabular} 
& \begin{tabular}[c]{@{}c@{}} \textbf{Retrieval} \\ \textbf{corpus} \end{tabular}
\\
\midrule
Nature Question~\cite{kwiatkowski-etal-2019-natural}
& \phantom{0}58,622 & \phantom{0}9.21
& \phantom{0}6,489  & \phantom{0}9.16
& Wiki2018 
\\

Hotpot QA~\cite{yang2018hotpotqa}
& \phantom{0}90,185 & 17.85
& \phantom{0}7,384  & 15.63 
& Wiki2018
\\

MusiQue QA~\cite{trivedi2021musique} 
& \phantom{0}19,938 & 15.96
& \phantom{0}2,417  & 18.11
& Wiki2018
\\
 
2WikiMultiHopQA~\cite{xanh2020_2wikimultihop} 
& 167,454 & 12.74
& 12,576  & 11.97
& Wiki2018
\\
\bottomrule
\end{tabular}
\end{table}

\subsection{Data Collection for Warm-Up}\label{sec:app:warmup}

\paragraph{Data Collection Procedure.}  
To construct high-quality training trajectories that reflect realistic search behavior, we design a two-stage process: (1) first, we use GPT-4o to simulate step-by-step reasoning traces in the form of interleaved sub-queries; (2) then, each sub-query is paired with a retrieved document, simulating the retrieval process of a real system rather than directly relying on officially annotated golden documents.

To synthesize pseudo training trajectories, we leverage existing datasets, such as HotpotQA~\cite{yang2018hotpotqa}, to generate pseudo training data for our method, as it is a widely used multi-hop QA dataset similar to the setting of our agentic search method.  
In HotpotQA, each example consists of a complex multi-hop query, a final answer, and a set of officially annotated sub-queries along with their corresponding supporting documents.  
For each question, we prompt GPT-4o to simulate a step-by-step reasoning process that interleaves sub-query generation, document retrieval, and intermediate evidence extraction. Specifically, we instruct the model to act as an intelligent search agent. Given the original multi-hop question, its gold answer, and the full set of supporting Wikipedia passages, the model is asked to restructure the reasoning path into an interleaved sequence of three special operations:
\begin{itemize}
    \item \texttt{<THINK>} to denote a generated sub-query;
    \item \texttt{<SEARCH>} to indicate the citation ID of the passage used to answer the sub-query; and
    \item \texttt{<RECORD>} to provide the answer to the sub-query.
\end{itemize}
The output continues in this format until the final answer is reached, which is prefixed by a \texttt{<Final>} tag. An example is included in the prompt to illustrate the expected format. This approach generates high-quality, interpretable reasoning trajectories that align with the structure of our method. All synthesized outputs are included in the supplementary material for reference.
\begin{lstlisting}[basicstyle=\small\ttfamily, breaklines=true, breakindent=0em, commentstyle=\color{red!50!green!50!blue!50}, frame=single, rulesepcolor=\color{red!20!green!20!blue!20},numbers=none,literate={`}{\textasciigrave}1]
You are an intelligent search agent that can simulate the question-answering process based on my question and answer.

Given an open-domain query about Wikipedia, I have already marked the correct answer at the end of the question and provided all the reference Wikipedia passages needed to answer the question.
Your task is to reformat my provided question and references into a detailed question-answering process.
Specifically, there should be three types of special tokens in your output:
1. <THINK>, followed by a sub-query
2. <SEARCH>, followed by the citation ID
3. <RECORD>, followed by the answer to the sub-query

Since this is a multi-hop question, your output should interleave the `<THINK>`, `<SEARCH>`, and `<RECORD>` tokens until you reach the final answer.
Please start with a special token `<Final>` followed by the final answer.

Here is a concrete example to demonstrate the output format:
```example
Question: Which magazine was started first, Arthur's Magazine or First for Women? (Answer: Arthur's Magazine)
Reference:
[1] Arthur's Magazine | Arthur's Magazine (1844-1846) was an American literary periodical published in Philadelphia in the 19th century. Edited by T.S. Arthur, it featured works by Edgar A. Poe, J.H. Ingraham, Sarah Josepha Hale, Thomas G. Spear, and others. In May 1846, it was merged into "Godey's Lady's Book".
[2] First for Women | First for Women is a women's magazine published by Bauer Media Group in the USA. The magazine was started in 1989. It is based in Englewood Cliffs, New Jersey. In 2011, the circulation of the magazine was 1,310,696 copies.
Your Output:
<THINK> When did the magazine "Arthur's Magazine" start?
<SEARCH> [1]
<RECORD> 1844
<THINK> When did the magazine "First for Women" start?
<SEARCH> [2]
<RECORD> 1989
<Final> Arthur's Magazine
```

Starting below, for the question "{question}", please complete your output following the above requirements.

Question: {question}
Reference:
{golden doc}
Your Output:
\end{lstlisting}
After obtaining the simulated reasoning trajectories from GPT-4o, we pair each generated sub-query (\texttt{<THINK>} entry) with a retrieved document. To maintain consistency with our main experiments, we use the same retrieval setup, i.e., ColBERTv2.0 as the retriever and the 2018 Wikipedia dump as the retrieval corpus.  
For each sub-query, we first locate its corresponding gold supporting document (provided in the official HotpotQA dataset) and use it as a reference. We then apply ColBERTv2.0 to retrieve the most similar document from the full corpus based on its proximity to the gold reference. This ensures that the retrieved documents closely reflect what a real system would retrieve while remaining grounded in the original supervision signal.  
The resulting data consists of interleaved sub-queries and paired retrieved documents, effectively mimicking real multi-hop retrieval trajectories. All synthesized trajectories and retrieval pairs are included in the supplementary material.

\paragraph{Cost for Data Collection.}\label{sec:app:cost}
The primary cost of our data synthesis arises from using GPT-4o to transform human-annotated sub-query paths into simulated search trajectories, following the pattern used in \ours.  
In our main experiment, we generate 1,000 examples, with an average input length of 5{,}095.29 tokens and an average output length of 732.20 tokens.  
Based on OpenAI's GPT-4o pricing\footnote{\url{https://platform.openai.com/docs/pricing}}, the total cost for constructing these 1,000 examples is approximately \$12.73 for input and \$7.30 for output, totaling around \$20.  
\textcolor{black}{\textit{Thus, the cost per example is \$$\frac{20}{1000} = 0.02$}}.  
Token counts are directly obtained from OpenAI’s API call messages. See the official OpenAI API documentation\footnote{\url{https://platform.openai.com/docs/api-reference/introduction}} for more details.

\subsection{Implementation Details and Hyperparameter}\label{sec:app:hyper}

During the training stage, we trained the models with a learning rate of $2 \times 10^{-6}$, using DeepSpeed Zero 3 for efficient distributed optimization. The batch size was set to 4 for the 3B, 7B, and 8B models, and reduced to 2 for the 24B model due to memory constraints. We applied a linear warm-up (10\% of total steps), followed by a cosine learning rate scheduler. All experiments used BF16 mixed-precision training with a sequence length cutoff of 8192 tokens.
For each training example, we sampled 2 search trajectories (We also experimented with varying the sampling number, choosing $\{1,2,4,6,8\}$, but observed no significant difference in final performance).
Table~\ref{tab:training-settings} summarizes the hyperparameters in our implementation.
The models used in our experiments can be directly downloaded from HuggingFace, an open-source platform for machine learning and deep learning.

During the inference stage, the maximal step $T$ for the \first $\rightarrow$ \second  $\rightarrow$ \third iteration is set to 5.

\begin{table}[ht]
\centering
\caption{Experimental Settings for Model Training}
\begin{tabular}{l|c c c c c}
\toprule
\textbf{Model}
& \begin{tabular}[c]{@{}c@{}} \textbf{Batch} \\ \textbf{Size} \end{tabular}
& \begin{tabular}[c]{@{}c@{}} \textbf{Learning} \\ \textbf{Rate} \end{tabular}
& \begin{tabular}[c]{@{}c@{}} \textbf{Cutoff} \\ \textbf{Length} \end{tabular}
& \textbf{Scheduler} 
& \begin{tabular}[c]{@{}c@{}} \textbf{Gradient} \\ \textbf{Accumulation} \end{tabular} \\ 
\midrule
Qwen-2.5-3B-instruct & 4 & $2 \times 10^{-6}$ & 8192 tokens & Cosine & 16 \\
Qwen-2.5-7B-instruct & 4 & $2 \times 10^{-6}$ & 8192 tokens & Cosine & 16 \\
Llama-3.2-3B-instruct & 4 & $2 \times 10^{-6}$ & 8192 tokens & Cosine & 16 \\
Llama-3.1-8B-instruct & 4 & $2 \times 10^{-6}$ & 8192 tokens & Cosine & 16 \\
Mistral-7B-instruct-v0.3 & 4 & $2 \times 10^{-6}$ & 8192 tokens & Cosine & 16 \\
Mistral-24B-small-2501 & 2 & $2 \times 10^{-6}$ & 8192 tokens & Cosine & 16 \\
\bottomrule
\end{tabular}
\label{tab:training-settings}
\end{table}

The training process of \ours is relatively straightforward to implement. Skeleton PyTorch code for \ours is demonstrated below.
\begin{verbatim}
import torch
import torch.nn.functional as F

def compute_causal_weight(ref_model, tokenizer, trajectory, answer):
    """
    Compute the reward weight w(z) = p(answer | trajectory).
    
    This function estimates how likely the reference model (at step t) would 
    produce the correct answer y given a sampled reasoning trajectory z.
    
    Args:
        ref_model: The frozen language model at current iteration.
        tokenizer: The tokenizer corresponding to the model.
        trajectory: A list of messages (e.g., in chat format).
        answer: The gold answer string.
    
    Returns:
        weight (float): Estimated likelihood p(y | x, z), as a scalar reward.
    """
    input_ids = tokenizer.apply_chat_template(trajectory, 
                                              return_tensors="pt").input_ids
    label_ids = tokenizer(answer, return_tensors="pt").input_ids

    with torch.no_grad():
        logits = ref_model(input_ids).logits[:, -label_ids.size(1):, :]
        log_probs = F.log_softmax(logits, dim=-1)
        answer_logp = torch.gather(log_probs, 2, label_ids.unsqueeze(-1))
        answer_logp = answer_logp.squeeze(-1)
        weight = answer_logp.sum().exp().item()  # Likelihood as scalar
    return weight

def M_step_learning(ref_model, input_ids, label_ids, weights):
    """
    Compute re-weighted cross-entropy loss for a given trajectory.
    
    This function performs the forward computation of the M-step by applying 
    a scalar reward to the log-likelihood loss, encouraging trajectories 
    that lead to correct answers.
    
    Args:
        ref_model: The language model to be updated.
        input_ids: The sampled trajectories (token IDs), shape (B, L)
        label_ids: The gold answers (token IDs), shape (B, L)
        weights: Pre-computed weights, shape (B)
    
    Returns:
        loss (Tensor): A scalar loss value used for gradient update.
    """
    logits = ref_model(input_ids).logits[:, :-1, :]
    labels = label_ids[:, 1:].to(logits.device)

    loss_fct = torch.nn.CrossEntropyLoss(ignore_index=-100, reduction="mean")
    loss = loss_fct(logits.view(-1, logits.size(-1)), labels.view(-1))
    
    # Apply pre-computed weights
    weights = torch.exp(weights - weights.max()) / weights.sum()
    loss = loss * weights.mean()  # Weight the loss by the average of the weights
    return loss
\end{verbatim}


\begin{table*}[htbp]
  \centering
    \caption{Precision@K (K=3,5) for our method (\textit{w/} 8B Llama3.1 and 7B Qwen2.5) and strong baselines.}
\label{tab:precision}
\begin{adjustbox}{width=\columnwidth,center}
\setlength\tabcolsep{7pt}
\begin{tabular}{@{} p{2.3cm}  cc cc cc  cc cc@{}}
\toprule
\textbf{\blue{Tasks}}
& \multicolumn{2}{c}{\textbf{NQ}} 
& \multicolumn{2}{c}{\textbf{HotpotQA}} 
& \multicolumn{2}{c}{\textbf{MusiQue}} 
& \multicolumn{2}{c}{\textbf{2WikiQA}} 
& \multicolumn{2}{c}{\textbf{Avg.}} 

\\
\midrule
\textbf{Metrics}
& P@3  & P@5
& P@3  & P@5
& P@3  & P@5
& P@3  & P@5
& P@3  & P@5
\\ 
\midrule

ColBERTv2.0  &  41.36  &  35.97  &  28.71  &  25.20  &  6.41  &  5.44  &  17.96  &  16.10  &  23.61  &  20.68
\\

\hline
\rowcolor{Gainsboro} \multicolumn{11}{l}{\textit{Re-ranking}} \\
MonoT5~\cite{nogueira2020document}  &  42.44  &  36.73  &  31.48  &  26.86  &  6.62  &  5.85  &  19.27  &  16.93  &  24.95  &  21.59
\\

BGE~\cite{bge_embedding}  &  49.55  &  40.29  &  34.50  &  28.67  &  7.70  &  6.50  &  20.45  &  18.00  &  28.05  &  23.36
\\

RankVicuna~\cite{pradeep2023rankvicuna} &  42.68  &  36.97  &  29.49  &  26.71  &  7.09  &  6.02  &  18.75  &  17.71  &  24.50  &  21.85

\\
RankZepyhr~\cite{pradeep2023rankvicuna}   &  41.54  &  37.77  &  30.76  &  27.35  &  8.98  &  7.82  &  18.87  &  16.57  &  25.04  &  22.38
\\

\hline
\rowcolor{Gainsboro} \multicolumn{11}{l}{\textit{Query decomposition}} \\
Search-o1~\cite{li2025search}  &  42.42  &  37.01  &  42.01  &  34.11  &  17.23  &  13.13  &  25.34  &  18.24  &  31.75  &  25.62
\\
Search-r1~\cite{jin2025search}   &  40.17  &  36.86  &  27.12  &  32.67  &  4.07  &  8.30  &  16.25  &  23.49  &  21.90  &  25.33
\\

\midrule

\rowcolor{blue!12} \multicolumn{1}{l}{Ours-Qwen2.5-7B} 
 &  43.50  &  37.58  &  44.47  &  37.49  &  18.85  &  15.91  &  29.60  &  23.33  &  34.11  &  28.58
\\

\rowcolor{red!12} \multicolumn{1}{l}{Ours-Llama3.1-8B}  &  43.56  &  37.49  &  43.76  &  36.91  &  19.33  &  16.30  &  28.22  &  22.20  &  33.72  &  28.22
\\

\bottomrule
\end{tabular}
\end{adjustbox}
\end{table*}

\subsection{Supplementary Experimental Results}\label{sec:app:results}

\paragraph{Supplementary Results for Retrieval Performance}\label{sec:app:retrieval}
In the main body of our paper, we have report the recall@K score for the proposed \ours and strong baselines.
Below, we  supplements the performance in terms of precision@K score in Table~\ref{tab:precision}.
These results further indicate that \ours, by dynamically expanding the search as reasoning unfolds, can also improve retrieval performance in addition to enhancing end-to-end answer accuracy.

\begin{figure}[!t]
    \centering
	\includegraphics[width=\linewidth]{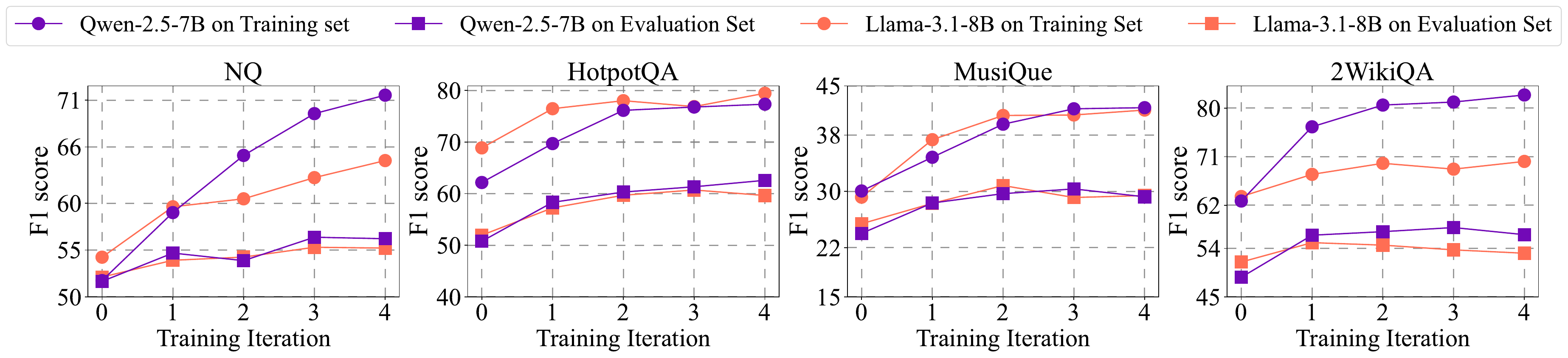}
    \caption{Training convergence of \bluetextblock{Qwen-2.5-7B} and \redtextblock{Llama-3.1-8B}, where we report the \textit{F1} score for checkpoints in each iteration. The $0$th iteration indicates the initial warm-up training.}
    \label{fig:convergence_f1}
\end{figure}

\begin{figure}[!t]
    \centering
	\includegraphics[width=\linewidth]{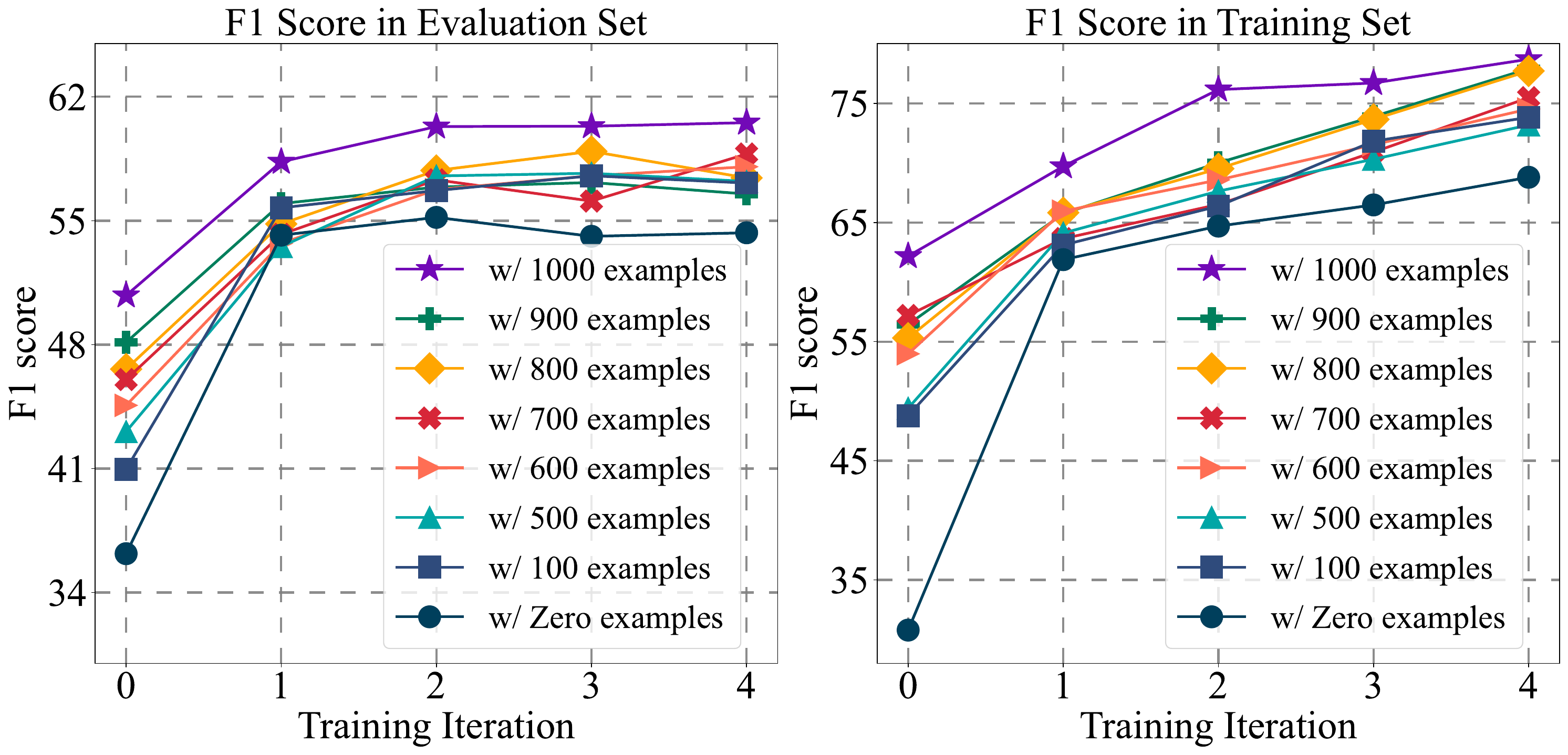}
\caption{F1 score for \ours-Qwen-2.5-7B that was initially empowered by different amounts of warm-up data.}
    \label{fig:warmup_f1}
\end{figure}

\paragraph{Supplementary Results for Training Convergence Experiments.}\label{sec:app:convergence}
The theoretical analysis in \S~\ref{sec:app:convergence-analysis} guarantees that our training objective is non-decreasing across iterations.  
To empirically validate this, we evaluate model checkpoints after each iteration on both the training and test sets.  
In the main body of our paper, we report the performance in terms of the exact match scores.
In this appendix, we further supplements the performance in terms of F1 metrics in Figure~\ref{fig:convergence_f1} for a more comprehensive comparison.
We observe consistent performance improvement over iterations, eventually stabilizing, confirming the expected non-decreasing convergence.  
On evaluation sets, models typically peak by the second or third iteration, demonstrating rapid practical convergence.  
These results align with our theoretical guarantees and highlight the efficiency of our approach.

\begin{figure}[htbp]
    \centering
	\includegraphics[width=\linewidth]{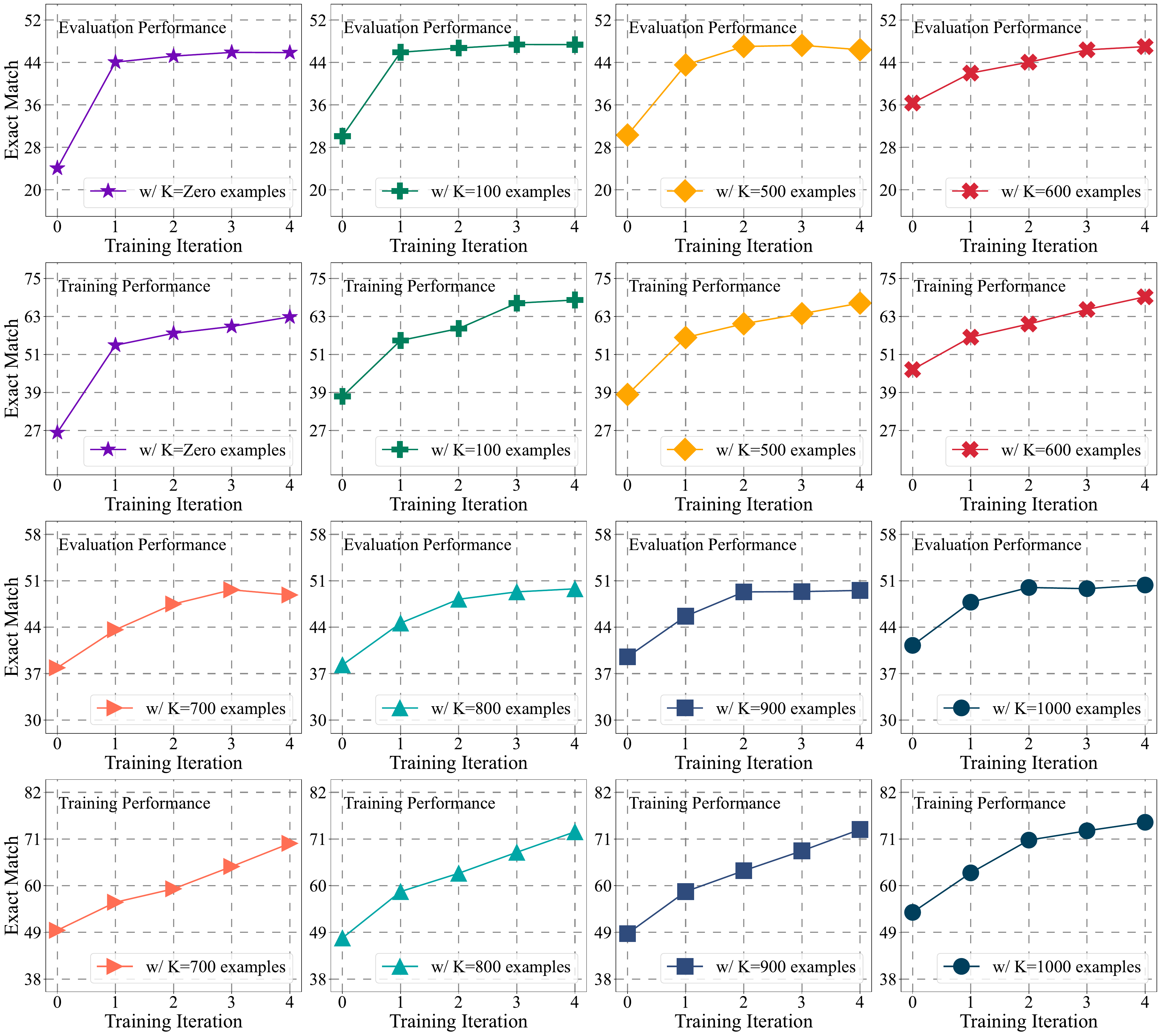}
\caption{Exact match score for \ours-Qwen-2.5-7B, which is initially empowered by varying amounts of warm-up data, during the iterative training process in \ours.}
    \label{fig:4x4_em}
\end{figure}

\begin{figure}[htbp]
    \centering
    \includegraphics[width=\linewidth]{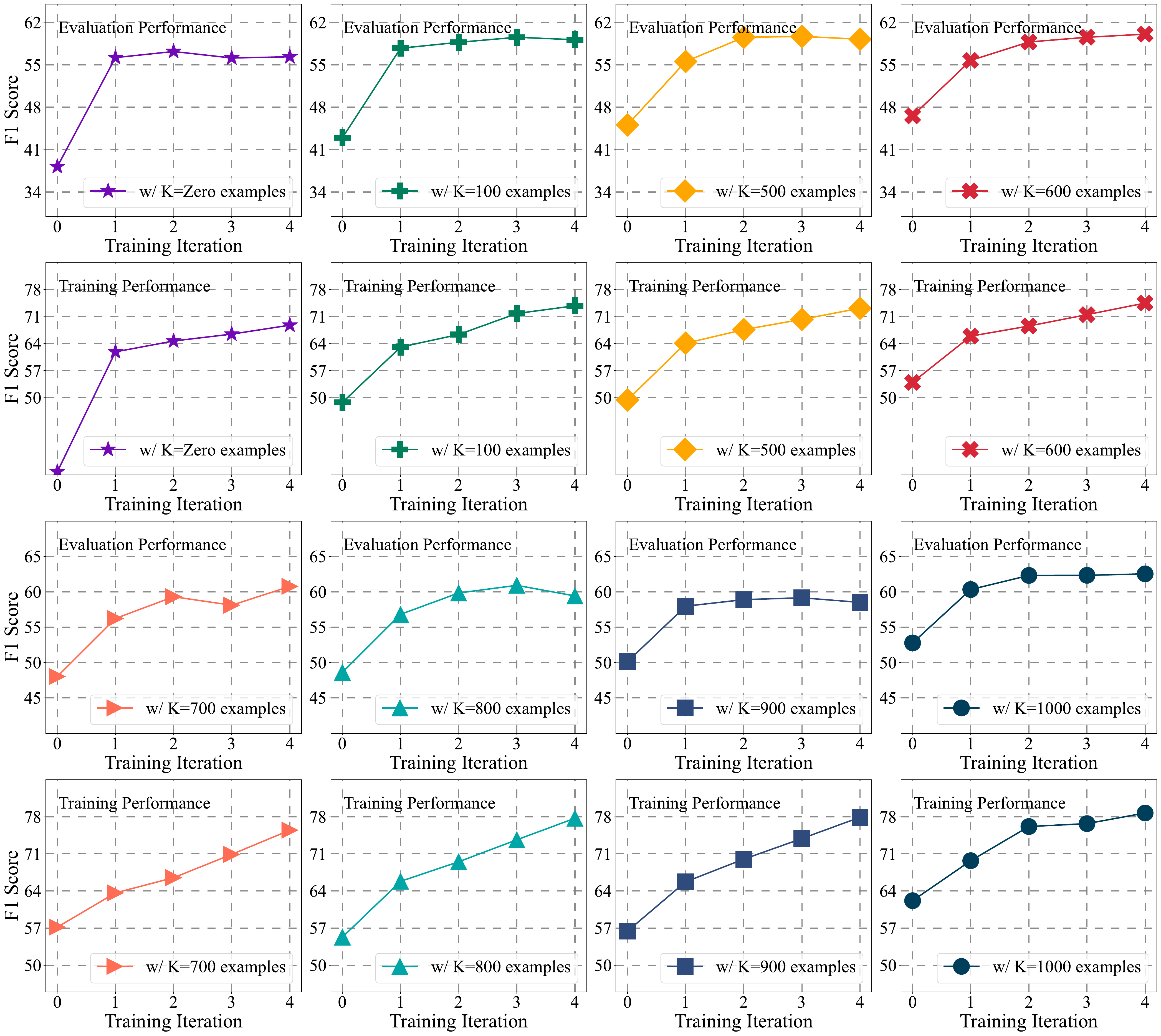}
\caption{F1 score for \ours-Qwen-2.5-7B, which is initially empowered by varying amounts of warm-up data, during the iterative training process in \ours. }
    \label{fig:4x4_f1}
\end{figure}

\paragraph{Supplementary Results for \ours with Cold Start}\label{sec:app:cold}
To investigate the role of warm-up supervision, we train separate models with varying amounts of supervised fine-tuning (SFT) data, denoted by $K$, and apply \ours to each independently.  
The exact match (EM) scores has been reported in the main body of our paper as the primary evaluation metric.
In this appendix, the corresponding F1 scores are presented in Figure~\ref{fig:warmup_f1} as a supplementary result.  
We observe that all models benefit from iterative self-training, with larger $K$ values leading to faster convergence and better final performance.  
Remarkably, even with only $K=100$ or no warm-up supervision ($K=0$), the models still achieve substantial gains, demonstrating the robustness of \ours in low-resource and cold-start scenarios.  
Based on the cost analysis in \S~\ref{sec:app:warmup}, synthesizing each example costs only \$0.02 on average.  
Thus, empirically, in our experiment, synthesizing 100 examples incurs an approximate cost of \$2.

\paragraph{Supplementary Results for Human Evaluation.}
Considering the potential bias of automatic metrics~\citep{Shi2023RADERD}, we conduct a human evaluation with three educated individuals assessing the \textit{correctness} of 100 randomly sampled cases from five benchmarks, using a two-scale rating (1 for correct; 0 for incorrect).
Each query is paired with the corresponding golden documents and ground truth answers from the original datasets, which serve as references for the human evaluators. 
We ask at least two annotators to evaluate the same case repeatedly. If there is a discrepancy between two annotators, ask a third annotator to recheck it.
The results are presented in Table~\ref{tab:human}, where we found that our method achieve the highest correctness, further indicating its effectiveness.
In our human evaluation, the overall Kappa value is 0.771, demonstrating substantial agreement among the annotators. This indicates the reliability of the evaluation process.

\begin{table}[htbp]
\centering
\small
\caption{Human evaluation on 100 randomly sampled cases.}
\begin{adjustbox}{width=0.8\columnwidth,center}
\setlength\tabcolsep{5pt}
\begin{tabular}{@{}c ccccc@{}}
\toprule
 & \textbf{GPT-4o} & \textbf{InstructRAG} & \textbf{Search-o1}  & \textbf{Search-R1} & \textbf{\ours} \\
 \midrule
\textbf{Correctness} & 48/100  & 40/100 & 46/100   & 50/100  & 54/100 \\ 
\bottomrule
\end{tabular}
\label{tab:human}
\end{adjustbox}
\end{table}

\newpage
\clearpage

\section{\ours-Zoo: Extending \ours for Diverse Scenarios}\label{sec:app:zoo}

Extensive experiments on a wide range of knowledge-intensive benchmarks demonstrate the state-of-the-art performance of \ours, as detailed in the main body of the paper.
This strong performance motivates us to extend \ours to more diverse scenarios. We introduce \ours-Zoo, a comprehensive framework that enhances \ours along two key dimensions:
\begin{enumerate*}[label=(\roman*)]
    \item Diverse backbone LLMs across model families (LLaMA, Qwen, Mistral) and scales (7B-24B parameters);
    \item Extended actions, such as document re-ranking, to enrich the existing reasoning actions (\first, \second, and \third).
\end{enumerate*}
Below, we describe how \ours is extended along each of these dimensions.

\begin{table*}[!t]
\centering
\caption{Experiment results for applying our method to various LLMs.}
\label{tab:scaling-app}
\begin{adjustbox}{width=\columnwidth,center}
\setlength\tabcolsep{4.0pt}

\begin{tabular}{@{}l | ccc ccc ccc ccc | ccc@{}}
\toprule
{Tasks}
& \multicolumn{3}{c}{{NQ}} 
& \multicolumn{3}{c}{{HotpotQA}} 
& \multicolumn{3}{c}{{MuSiQue}} 
& \multicolumn{3}{c}{{2WikiQA}} 
& \multicolumn{3}{c}{{Avg.}}   \\
\cmidrule(l){2-4} \cmidrule(l){5-7} \cmidrule(l){8-10} \cmidrule(l){11-13} \cmidrule(l){14-16}
{Metrics} & F1  & EM  & Acc.
& F1  & EM  & Acc.
& F1  & EM  & Acc.
& F1  & EM  & Acc.
& F1  & EM  & Acc.
\\ 
\midrule

Ours-Qwen-2.5-3B
&  46.23  &  36.76  &  39.12
&  54.32  &  42.22  &  46.08
&  19.44  &  13.76  &  13.94
&  43.39  &  37.24  &  44.78
&  40.85  &  32.50  &  35.98
\\

Ours-Qwen-2.5-7B 
&  {56.37} & {47.07}  &  {51.75}
&  {62.59}  &  {50.35}  &  {54.32}
&  29.68  & {22.03}  &  {24.34}
&  {57.14}  &  {52.62} & {54.37}
&  {51.45}  &  {43.02} & {46.20} \\

\midrule

Ours-Llama-3.3-3B
&  41.42  &  33.49  &  35.17
&  44.12  &  33.53  &  36.14
&  17.64  &  11.23  &  11.82
&  41.73  &  36.28  &  42.22
&  36.23 &  28.63 &  31.34
\\

Ours-Llama-3.1-8B 
&  {55.21} &  43.71  &  50.76
&  {60.72}  &  {47.59}  &  {53.59}
&  {30.83}  &   20.98  &  {24.65}
&  {54.62}  &  47.48  &  {54.21}
&  {50.35}  &  {39.94} &  {45.80}
\\ 

\midrule

Ours-Mistral-7B-instruct 
&  56.83  &  45.13  &  52.05
&  59.65  &  50.35  &  54.78
&  30.32  &  23.47  &  24.98
&  53.93  &  47.38  &  54.82
&  50.18  &  41.58  &  46.66
\\

Ours-Mistral-2501-24B
&  59.89  &  47.62  &  56.39
&  67.03  &  54.51  &  59.98
&  35.84  &  23.68  &  28.54
&  60.81  &  53.19  &  61.59
&  55.89  &  44.75  &  51.63
\\
											
\bottomrule
\end{tabular}
\end{adjustbox}
\end{table*}

\subsection{Diverse Model Families and Scales}\label{sec:app:family}
We apply \ours to a range of LLMs with varying parameter sizes and model families.  
As shown in \bluetextblock{Table~\ref{tab:scaling-app}}, the results exhibit a clear scaling-law pattern. More specifically, we highlight two key observations:
First, there is a consistent performance improvement as the model size increases;
Second, even models with as few as 3B parameters achieve strong performance when augmented with our method.
These findings suggest that \ours is broadly applicable and scales favorably across different model families.

\subsection{Extended Retrieval Strategy}\label{sec:app:ranking}
In \ours, when faced with complex information needs, the LLM iteratively infers missing knowledge, evaluates retrieved evidence, and adapts its search strategies as new information is acquired. 
This behavior is formalized as a reasoning-interleaved search trajectory consisting of three core actions: \textbf{think} $\rightarrow$ \textbf{seek} $\rightarrow$ \textbf{record}, as introduced in \S~\ref{sec:app:derivation}.  
Although this pattern is general, for more complex tasks, we may also need to introduce additional actions to improve the end-to-end performance.
To illustrate the extensibility of \ours, we introduce an additional document re-ranking action to the vanilla \ours framework. 
The re-ranking step acts as a filtering mechanism, allowing the model to discard irrelevant content and focus on cleaner, more useful evidence before generating intermediate reasoning steps.

\paragraph{Example: Re-ranking as a Reasoning Action.}  
We introduce a document re-ranking step between retrieval and evidence selection, resulting in a four-step reasoning pattern:  
\textbf{think} $\rightarrow$ \textbf{seek} $\rightarrow$ \textbf{rank} $\rightarrow$ \textbf{record}.  
Specifically, \textit{we implement this re-ranking following generative re-ranking techniques~\cite{sun2023chatgpt,pradeep2023rankvicuna}, where the LLM reads the retrieved documents and autoregressively generates a ranked list of selected identifiers (e.g., \texttt{[1] > [2] > [3]})}.

The updated reasoning process consists of:  
(1) generating a sub-query $x_i$;  
(2) retrieving candidate documents $\doc_i$ using the retriever $\mathcal{R}$;  
(3) re-ranking the retrieved documents to select the most relevant ones, denoted as $\hat{\doc_i}$; and  
(4) reflecting on the selected documents $\hat{\doc_i}$ by extracting an intermediate answer $e_i$ to the sub-query $x_i$.

Formally, we represent the full trajectory as a sequence of triplets $\z = \{(x_i, \hat{\doc_i}, e_i)\}_{i=1}^{|\z|}$, and define its likelihood conditioned on input $x$ and parameters $\theta$ as:
\begin{equation}
\begin{aligned}
    p(\z \mid x; \theta) &= \prod\nolimits_{i=1}^{|\z|} p((x_i, \hat{\doc_i}, e_i) \mid x, \z_{<i}; \theta) \\
    &= \prod\nolimits_{i=1}^{|\z|} 
    \underbrace{p(x_i \mid \z_{<i}; \theta)}_{\text{thinking}} \cdot 
    \underbrace{p(\hat{\doc_i} \mid x, \mathcal{R}(x_i); \theta)}_{\text{retrieval and re-ranking}} \cdot 
    \underbrace{p(e_i \mid x_i, \hat{\doc_i}; \theta)}_{\text{recording}}.
\end{aligned}
\end{equation}
After finalizing a full trajectory $\z$, the LLM generates a final answer $y$ based on all intermediate reasoning steps via $y \sim p(y \mid x, \z; \theta)$.

Compared to the vanilla \ours, the only modification in the above variant is the additional document re-ranking action to the E-step, while the overall EM training framework remains unchanged.  
Thus, we can apply the same EM-style optimization in Eq.~\ref{eq:app:loss} for this variant, with the learning objective formulated as:
\begin{equation}
    \theta = \arg\max_\theta \mathbb{E}_{\z \sim p(\z \mid x; \theta^{t})} \left[ w(\z) \log p(\z \mid x; \theta) + w(\z) \log p(y \mid x,\z; \theta)\right],
\end{equation}
where $w(\z)$ is the importance weight based on how well $\z$ supports the correct answer.  
The log-likelihood decomposes into two components: $\mathcal{L_R} = \sum_{i=1}^{|\z|} \log p(x_i \mid \z_{<i}; \theta) + \log p(\hat{\doc_i} \mid x, \mathcal{R}(x_i); \theta) + \log p(e_i \mid x_i, \hat{\doc_i}; \theta)$ and $\mathcal{L_A} = \log p(y \mid x, \z; \theta)$.
The former $\mathcal{L_R}$ corresponds to learning iterative search, while the latter $\mathcal{L_A}$ corresponds to learning how to aggregate information for answer generation.

\begin{figure}[!t]
    \centering
    \includegraphics[width=\linewidth]{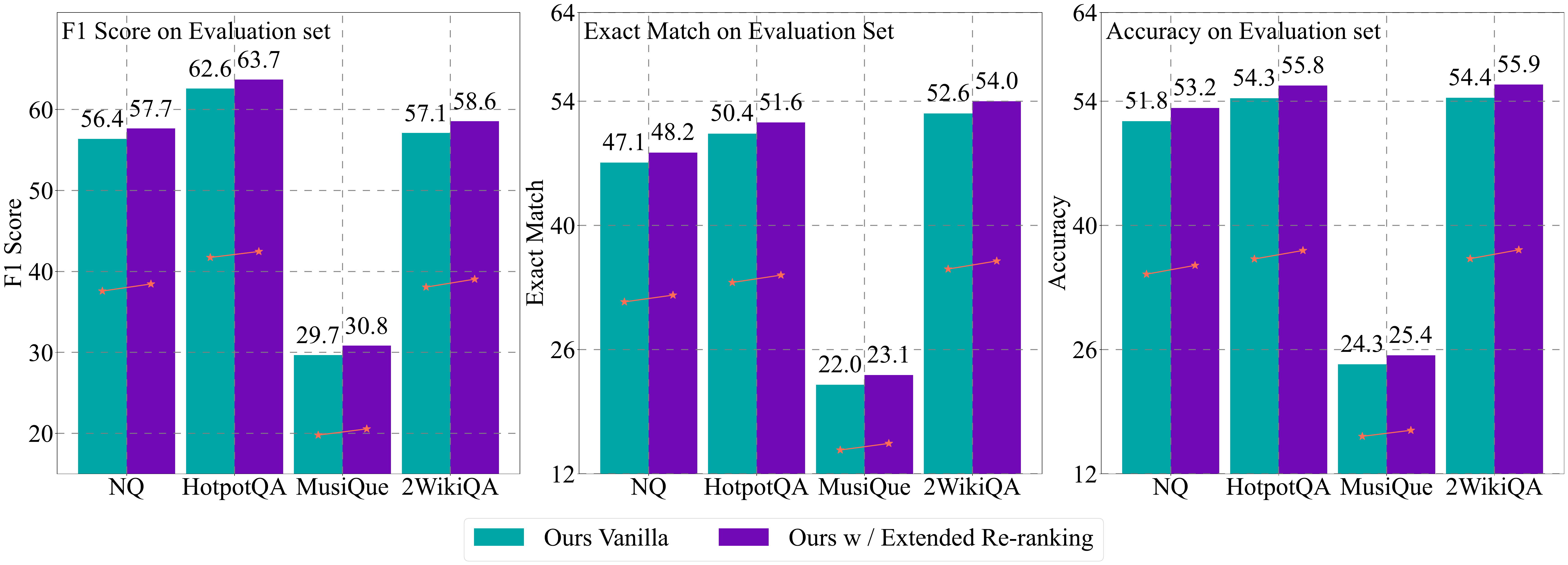}
    \caption{Performance of \ours when extended with an additional document re-ranking action, where we report the exact match score, F1 score and accuracy for a comprehensive comparison.}
    \label{fig:ranks}
\end{figure}

\paragraph{Experimental Results}
Figure~\ref{fig:ranks} reports the results of incorporating re-ranking into the reasoning trajectory of \ours.  
We observe that adding a re-ranking step consistently improves performance across all evaluated datasets.  
For example, on MuSiQue and 2WikiQA, we observe average gains of +1.9 and +2.4 points in F1 score, respectively, indicating that selective document filtering can further enhance the model's ability to reason over relevant information within \ours.
These results highlight the extensibility of our framework: by augmenting the action space with an additional step, we can effectively adapt the search-reasoning loop to more complex settings.  
This enables us to move beyond fixed action templates and incorporate new reasoning operations based on the specific requirements of downstream tasks.  
We believe this opens avenues for extending \ours to specialized tasks, such as retrieval-augmented fact verification, open-domain multi-hop reasoning, or tool-integrated planning, by incorporating additional task-specific actions.

\section{Prompt and Case Study}\label{sec:app:show}

\subsection{System Prompt in \ours}\label{sec:app:system-prompt}
To enable step-wise reasoning and evidence-aware retrieval, we design a structured system prompt for \ours that guides the model to simulate an intelligent search agent.  
The detailed content is shown below.  
This prompt instructs the model to decompose a complex query into sub-queries (\first), retrieve relevant documents from Wikipedia based on each sub-query (\second), and extract factual answers from the retrieved content (\third).  
This system prompt aims to encourage the LLM to perform multi-hop reasoning via an interleaved search-and-read loop, rather than attempting to answer the question in a single step.  
It also enforces an interpretable action trace, allowing us to diagnose the model’s behavior at each stage of the reasoning process.  
Finally, the generation ends with a \texttt{<FINAL>} token and the final answer, enabling seamless integration into downstream pipelines and evaluation.

\begin{lstlisting}[basicstyle=\small\ttfamily, breaklines=true, breakindent=0em, commentstyle=\color{red!50!green!50!blue!50}, frame=single, rulesepcolor=\color{red!20!green!20!blue!20},numbers=none,literate={`}{\textasciigrave}1]
You are an intelligent search agent capable of simulating a question-answering process by actively seeking information from Wikipedia to answer a given question.

Specifically, given an open-domain query, please iteratively: (1) Formulate a sub-query to search on Wikipedia; (2) Select useful documents from the search results and (3) Extract supporting facts from the selected documents.
Your output should include three types of special actions corresponding to the above steps:
(1) <THINK>: Formulate a sub-query.
(2) <SEARCH>: Retrieve and carefully read the documents using the formulated sub-query.
(3) <RECORD>: Extract the answer to the sub-query from the documents.

Since this is a multi-hop question, your output should interleave <THINK>, <SEARCH> and <RECORD> actions until reaching the final answer. Conclude your output with the special token <FINIAL> followed by the final answer.

Below is the task for you to complete:

<USER QUERY> {THE_INPUT_TASK}
Your Output:
\end{lstlisting}

\subsection{Human Evaluation}\label{sec:app:human}

In order to assess the end-to-end performance of the proposed model, we conducted human evaluations to verify whether the model's outputs align with the ground-truth answers. The evaluation procedure involves presenting human evaluators with a series of tasks, where they are asked to judge whether the model's prediction is consistent with the correct answer.  
The task provided to the human evaluators is as follows, where placeholders \{question\}, \{model output\}, and \{answer\} are replaced with the actual question, model's response, and the ground-truth answer, respectively. The evaluators are asked to determine whether the model’s prediction implies the correct ground-truth answer.
\begin{tcolorbox}[colback=black!1!white,colframe=black!57!white,title=Guidance Human to Evaluate the Correctness of a Model Output]

In the following task, you are given a question, a model Prediction for the question, and a ground-truth Answer to the question. You should decide whether the model's Prediction implies the Ground-truth Answer.\\
\\
Question\\
\{question\}\\
\\
Prediction\\
\{model output\}\\
\\
Ground-truth Answer\\
\{answer\}\\
\\
Does the Prediction imply the Ground-truth Answer? Output Yes or No:
\end{tcolorbox}\label{tab:prompt}

To guarantee annotation quality, we ask at least two annotators to evaluate the same questions repeatedly. If there is a discrepancy between the two annotators (i.e., when two annotators give a different correctness score), a third annotator is asked to review and resolve the inconsistency.  
To quantify the reliability of the annotations, we compute Cohen’s Kappa score to measure inter-annotator agreement. A Kappa value close to 1 indicates strong agreement, while a value near 0 suggests random agreement.  
In our human evaluation, the overall Kappa value is 0.771, demonstrating substantial agreement among the annotators. This indicates that our evaluation process is reliable and that the results can be confidently used to assess the model's performance.

\subsection{Case Studies}\label{sec:app:case-studies}

\paragraph{Good Case: Correct Runtime Case of \ours.}\label{sec:app:runtime-case}
This example demonstrates a successful reasoning trajectory executed by \ours for a compositional multi-hop question: \textit{Navarone Garibaldi is the half-brother of a singer who has been married how many times?}  
The task requires (i) identifying the half-sibling relationship; and (ii) retrieving the number of marriages of the referenced singer.  
The model first identifies Lisa Marie Presley as Navarone Garibaldi’s half-sister, then retrieves the number of times she has been married. It correctly terminates the search after the evidence is obtained and outputs the gold answer, i.e., four.

Below, in each block, the content behind \texttt{<THINK>} and \texttt{<RECORD>} represents model-generated outputs, while \texttt{<SEARCH>} denotes the retrieved document snippets from the external corpus.

\begin{lstlisting}[basicstyle=\small\ttfamily, breaklines=true, breakindent=0em, commentstyle=\color{red!50!green!50!blue!50}, frame=single, rulesepcolor=\color{red!20!green!20!blue!20},numbers=none,literate={`}{\textasciigrave}1]
INPUT QUERY: Navarone Garibaldi is the half-brother of a singer who has been married how many times? (Golden answer: four times)
\end{lstlisting}

\begin{lstlisting}[basicstyle=\small\ttfamily, breaklines=true, breakindent=0em, commentstyle=\color{red!50!green!50!blue!50}, frame=single, rulesepcolor=\color{red!20!green!20!blue!20},numbers=none,literate={`}{\textasciigrave}1]
============ Turn 1 ============
<THINK> Who is Navarone Garibaldi's half-brother?
<SEARCH> [0] Title: Navarone Garibaldi. Content: Navarone Garibaldi Navarone Anthony Garibaldi (born March 1, 1987) is an American musician. He is the frontman of the band Them Guns. Garibaldi is the son of Marco Garibaldi and Priscilla Presley, and also half-brother of Lisa Marie Presley. Navarone Anthony Garibaldi was born to actress Priscilla Presley and Marco Garibaldi on March 1, 1987 in Santa Monica, California. With his mother Priscilla being the former wife of the late Elvis Presley, media scrutiny was often focused on the family. Garibaldi's half-sister is Lisa Marie Presley, Elvis's only child. He is the uncle of Riley and Benjamin Keough and Harper... [2]... [3]...
<RECORD> Lisa Marie Presley
\end{lstlisting}

\begin{lstlisting}[basicstyle=\small\ttfamily, breaklines=true, breakindent=0em, commentstyle=\color{red!50!green!50!blue!50}, frame=single, rulesepcolor=\color{red!20!green!20!blue!20},numbers=none,literate={`}{\textasciigrave}1]
============ Turn 2 ============
<THINK> How many times has Lisa Marie Presley been married?
<SEARCH> [0] Title: Lisa Marie Presley. Content: Lisa Marie Presley Lisa Marie Presley (born February 1, 1968) is an American singer-songwriter. She is the only child of singer and actor Elvis Presley and actress Priscilla Presley, as well as the sole heir to her father's estate. Presley has developed a career in the music business and has issued three albums. She has been married four times, including to singer Michael Jackson and actor Nicolas Cage, before marrying music producer Michael Lockwood, father of her twin girls. Lisa Marie was born on February 1, 1968, to Elvis and Priscilla Presley at Baptist Memorial Hospital in Memphis, Tennessee, nine
<RECORD> four
\end{lstlisting}

\begin{lstlisting}[basicstyle=\small\ttfamily, breaklines=true, breakindent=0em, commentstyle=\color{red!50!green!50!blue!50}, frame=single, rulesepcolor=\color{red!20!green!20!blue!20},numbers=none,literate={`}{\textasciigrave}1]
============ Turn 3 (answer aggregation) ============
<FINAL>
<SEARCH> end search. 
The final answer to the input question "Navarone Garibaldi is the half-brother of a singer who has been married how many times?"
Output: four
\end{lstlisting}

\paragraph{Bad Case: Over-Searching due to Missed Reasoning.}
We present a concrete example below, where the LLM gives a correct answer but overly relies on the retrieval module.  
In Turn 1 and Turn 2, the model successfully retrieves the occupations of both individuals via two well-formed sub-queries.  
However, instead of reasoning over the retrieved facts to answer the question, the model repeatedly issues similar external queries (Turn 3–5), such as \textit{Are there any shared occupations between Maher Sabry and Til Schweiger?}  
These repeated queries yield no new information and reflect an over-dependence on retrieval, rather than using previously acquired knowledge for local inference.  
This highlights a limitation in current retrieval-augmented models: they often fail to consolidate retrieved evidence before issuing additional queries, resulting in unnecessary search steps.

\begin{lstlisting}[basicstyle=\small\ttfamily, breaklines=true, breakindent=0em, commentstyle=\color{red!50!green!50!blue!50}, frame=single, rulesepcolor=\color{red!20!green!20!blue!20},numbers=none,literate={`}{\textasciigrave}1]
USER QUERY: What occupations are shared by Maher Sabry and Til Schweiger? (Golden answer: director, and producer)
\end{lstlisting}

\begin{lstlisting}[basicstyle=\small\ttfamily, breaklines=true, breakindent=0em, commentstyle=\color{red!50!green!50!blue!50}, frame=single, rulesepcolor=\color{red!20!green!20!blue!20},numbers=none,literate={`}{\textasciigrave}1]
============ Turn 1 ============
<THINK> What is Maher Sabry's occupation?
<SEARCH> [1] Title: Maher Sabry. Content: Maher Sabry is an Egyptian theater director, playwright, film director, producer and screenwriter, poet, writer and cartoonist. A gay activist...; [2]...; [3]...
<RECORD> Maher Sabry's occupations include theater director, playwright, film director, producer, screenwriter, poet, writer, and cartoonist.
\end{lstlisting}

\begin{lstlisting}[basicstyle=\small\ttfamily, breaklines=true, breakindent=0em, commentstyle=\color{red!50!green!50!blue!50}, frame=single, rulesepcolor=\color{red!20!green!20!blue!20},numbers=none,literate={`}{\textasciigrave}1]
============ Turn 2 ============
<THINK> What is Til Schweiger's occupation?
<SEARCH> [1] Title: Til Schweiger. Content: Til Schweiger Tilman Valentin Til Schweiger (born 19 December 1963) is a German actor, voice actor, film director, film producer, and film editor. He runs his own production company, ...; [2] ... [3] ...
<RECORD> Til Schweiger's occupations include actor, voice actor, film director, film producer, and film editor.
\end{lstlisting}

\begin{lstlisting}[basicstyle=\small\ttfamily, breaklines=true, breakindent=0em, commentstyle=\color{red!50!green!50!blue!50}, frame=single, rulesepcolor=\color{red!20!green!20!blue!20},numbers=none,literate={`}{\textasciigrave}1]
============ Turn 3 ============
<THINK> Are there any shared occupations between Maher Sabry and Til Schweiger?
<SEARCH> [1] Title: Maher Sabry. Content: Maher Sabry is an Egyptian theater director, playwright, film director, producer and screenwriter, poet, writer and cartoonist. A gay activist,... [2] ... [3] ...
<RECORD> Maher Sabry's occupations include theater director, playwright, film director, producer, screenwriter, poet, writer, and cartoonist.
\end{lstlisting}

\begin{lstlisting}[basicstyle=\small\ttfamily, breaklines=true, breakindent=0em, commentstyle=\color{red!50!green!50!blue!50}, frame=single, rulesepcolor=\color{red!20!green!20!blue!20},numbers=none,literate={`}{\textasciigrave}1]
============ Turn 4 ============
<THINK> Are there any shared occupations between Maher Sabry and Til Schweiger?
<SEARCH> [1] Title: Maher Sabry. Content: Maher Sabry is an Egyptian theater director, playwright, film director, producer and screenwriter, poet, writer and cartoonist. A gay activist,... [2] ... [3] ...
<RECORD> Maher Sabry's occupations include theater director, playwright, film director, producer, screenwriter, poet, writer, and cartoonist.
\end{lstlisting}

\begin{lstlisting}[basicstyle=\small\ttfamily, breaklines=true, breakindent=0em, commentstyle=\color{red!50!green!50!blue!50}, frame=single, rulesepcolor=\color{red!20!green!20!blue!20},numbers=none,literate={`}{\textasciigrave}1]
============ Turn 5 ============
<THINK> Are there any shared occupations between Maher Sabry and Til Schweiger?
<SEARCH> [1] Title: Maher Sabry. Content: Maher Sabry is an Egyptian theater director, playwright, film director, producer and screenwriter, poet, writer and cartoonist. A gay activist,... [2]... [3]...
<RECORD> Maher Sabry's occupations include theater director, playwright, film director, producer, screenwriter, poet, writer, and cartoonist.
\end{lstlisting}

\begin{lstlisting}[basicstyle=\small\ttfamily, breaklines=true, breakindent=0em, commentstyle=\color{red!50!green!50!blue!50}, frame=single, rulesepcolor=\color{red!20!green!20!blue!20},numbers=none,literate={`}{\textasciigrave}1]
============ Turn 6 ============
<THINK> Are there any shared occupations between Maher Sabry and Til Schweiger?
<SEARCH> [1] Title: Maher Sabry. Content: Maher Sabry is an Egyptian theater director, playwright, film director, producer and screenwriter, poet, writer and cartoonist. A gay activist,... [2] ... [3] ...
<RECORD> Maher Sabry's occupations include theater director, playwright, film director, producer, screenwriter, poet, writer, and cartoonist.
\end{lstlisting}

\begin{lstlisting}[basicstyle=\small\ttfamily, breaklines=true, breakindent=0em, commentstyle=\color{red!50!green!50!blue!50}, frame=single, rulesepcolor=\color{red!20!green!20!blue!20},numbers=none,literate={`}{\textasciigrave}1]
============ Turn 7 ============
<THINK> Are there any shared occupations between Maher Sabry and Til Schweiger?
<SEARCH> [1] Title: Maher Sabry. Content: Maher Sabry is an Egyptian theater director, playwright, film director, producer and screenwriter, poet, writer and cartoonist. A gay activist,... [2] ... [3] ...
<RECORD> Maher Sabry's occupations include theater director, playwright, film director, producer, screenwriter, poet, writer, and cartoonist.
\end{lstlisting}

\begin{lstlisting}[basicstyle=\small\ttfamily, breaklines=true, breakindent=0em, commentstyle=\color{red!50!green!50!blue!50}, frame=single, rulesepcolor=\color{red!20!green!20!blue!20},numbers=none,literate={`}{\textasciigrave}1]
============ Turn 8 (Up to the maximum search times) ============
<FINAL> theater director, playwright, film director, producer, screenwriter
please give the final answer to the input question: What occupations are shared by Maher Sabry and Til Schweiger?
Playwright, film director, producer, screenwriter
\end{lstlisting}

\paragraph{Bad Case: Under-Searching due to Premature Entity Binding.}
In this example, the user asks a seemingly straightforward compositional question: \textit{Into Dust is a song by the alternative rock band formed in which city?}  
The correct answer is Santa Monica, California, as the song *Into Dust* is performed by the alternative rock band Mazzy Star.  
However, the model prematurely assumes an entity match based on a partial overlap with the query phrase "Into Dust," mistakenly linking it to a different band named *Cities in Dust*, a Canadian punk group. As shown in the first retrieval result, this band has no association with the queried song but is erroneously selected due to superficial string-level similarity and genre proximity.  
The model then terminates the search after this single retrieval step and outputs Hamilton, Ontario, as the final answer, based solely on the retrieved band's origin. This reflects a classic under-searching failure mode, where the model fails to verify whether the retrieved entity is semantically compatible with the query. In this case, a simple clarification query, such as “Who performed *Into Dust* (song)?” would have revealed the correct band and corresponding city.  
This example highlights the need for disambiguation-aware reasoning mechanisms that delay answer generation until entity identity is sufficiently grounded.

\begin{lstlisting}[basicstyle=\small\ttfamily, breaklines=true, breakindent=0em, commentstyle=\color{red!50!green!50!blue!50}, frame=single, rulesepcolor=\color{red!20!green!20!blue!20},numbers=none,literate={`}{\textasciigrave}1]
USER QUERY: Into Dust is a song by the alternative rock band formed in which city? (Golden answer: Santa Monica, California)
\end{lstlisting}

\begin{lstlisting}[basicstyle=\small\ttfamily, breaklines=true, breakindent=0em, commentstyle=\color{red!50!green!50!blue!50}, frame=single, rulesepcolor=\color{red!20!green!20!blue!20},numbers=none,literate={`}{\textasciigrave}1]
========== 
<THINK> Which city was the alternative rock band that performed "Into Dust" formed in?
<SEARCH> [1] Title: Cities in Dust (band). Content: Week in Toronto. Independent record label Paper Bag Records signed Cities in Dust in April 2006, in part due to the onstage antics of Frank. Soon after, their debut album Night Creatures was released. Cities in Dust appeared at the World Electronic Music Festival in June 2006. The band was nominated for three awards at the 2006 Hamilton Music Awards: Record of the Year, New Artist/Group of the Year, Alternative Recording of the Year. Night Creatures was named by Exclaim! magazine as one of the Top 10 Punk albums of 2006 in Canada. and it received heavy rotation on CBC
[2] ... [3]...
<RECORD> Hamilton, Ontario
\end{lstlisting}

\begin{lstlisting}[basicstyle=\small\ttfamily, breaklines=true, breakindent=0em, commentstyle=\color{red!50!green!50!blue!50}, frame=single, rulesepcolor=\color{red!20!green!20!blue!20},numbers=none,literate={`}{\textasciigrave}1]
USER QUERY: Into Dust is a song by the alternative rock band formed in which city?
please give the final answer to the input question: Into Dust is a song by the alternative rock band formed in which city?
Hamilton, Ontario 
\end{lstlisting}

\begin{table}[h]
\centering
\small
\caption{Case Study of Under-Searching Caused by Entity Confusion and Early Termination.}\label{tab:undersearching-case}
\begin{tabular}{p{3.2cm}  p{10cm}}
\toprule
\textbf{Aspect} & \textbf{Observation} \\
\midrule
\textbf{User Query} & Into Dust is a song by the alternative rock band formed in which city? \\
\midrule
\textbf{Gold Answer} & Santa Monica, California (Mazzy Star) \\
\textbf{Retrieved Fact} & Hamilton, Ontario (based on Cities in Dust band) \\
\midrule
\textbf{Search Count} & 1 \\
\midrule
\textbf{Entity Linking Error} & Mistook "Into Dust" as a song by "Cities in Dust" \\
\midrule
\textbf{Failure Mode} & \textcolor{red}{Under-searching}: Incorrect assumption based on first retrieved entity, no disambiguation step. \\
\midrule
\textbf{Suggested Fix} & Add entity verification query: \textit{“Who performed Into Dust?”} or \textit{“Into Dust band name”} before inferring location. \\
\bottomrule
\end{tabular}
\end{table}

\begin{table}[h]
\centering
\small
\caption{Case study illustrating over-searching due to lack of intermediate reasoning.}\label{tab:oversearching-case}
\begin{tabular}{p{3.2cm}  p{10cm}}
\toprule
\textbf{Aspect} & \textbf{Observation} \\
\midrule
\textbf{User Query} & What occupations are shared by Maher Sabry and Til Schweiger? \\
\midrule
\textbf{Query Type} & Compositional: Requires comparing two entities’ attributes. \\
\midrule
\textbf{Sub-Queries Issued} & 
Turn 1: What is Maher Sabry’s occupation? \newline
Turn 2: What is Til Schweiger’s occupation? \\
\midrule
\textbf{Knowledge Retrieved} & 
All relevant occupations were retrieved correctly for both individuals by Turn 2. \\
\midrule
\textbf{Expected Behavior} & 
The model should compare two sets of occupations and output the shared ones. \\
\textbf{Observed Behavior} & 
Model redundantly issues semantically similar queries in Turn 3–5, such as: \newline
\textit{“Are there any shared occupations between Maher Sabry and Til Schweiger?”} \\
\midrule
\textbf{Failure Mode} & 
\textcolor{red}{Over-searching}: The model repeatedly queries despite having sufficient information to answer. \\
\midrule
\textbf{Impact} & 
Increased search cost, unnecessary API calls, and longer latency with no gain in answer quality. \\
\midrule
\textbf{Insight} & 
Reasoning over retrieved evidence should be prioritized once coverage is sufficient; redundant querying should be suppressed. \\
\bottomrule
\end{tabular}
\end{table}

\begin{table}[h]
\centering
\small
\caption{Comparison between a successful multi-hop reasoning trajectory and a failure case due to over-searching. The successful case stops searching after gathering sufficient evidence, while the failure case redundantly queries instead of reasoning.}\label{tab:success-vs-oversearching}
\begin{adjustbox}{width=\columnwidth,center}
\setlength\tabcolsep{4.0pt}
\begin{tabular}{p{3.2cm}  p{6.2cm}  p{6.2cm}}
\toprule
\textbf{Step} & \textbf{Successful Reasoning} & \textbf{Over-Searching Failure} \\
\midrule
\textbf{Input Query} &
\textit{Navarone Garibaldi is the half-brother of a singer who has been married how many times?} &
\textit{What occupations are shared by Maher Sabry and Til Schweiger?} \\
\midrule
\textbf{Turn 1} &
\texttt{<THINK>} Who is Navarone Garibaldi’s half-sister? \newline
\texttt{<SEARCH>} → Lisa Marie Presley \newline
\texttt{<RECORD>} Lisa Marie Presley &
\texttt{<THINK>} What is Maher Sabry's occupation? \newline
\texttt{<SEARCH>} → Maher Sabry: [occupations] \newline
\texttt{<RECORD>} theater director, screenwriter, etc. \\
\midrule
\textbf{Turn 2} &
\texttt{<THINK>} How many times has she been married? \newline
\texttt{<SEARCH>} → [four marriages] \newline
\texttt{<RECORD>} four &
\texttt{<THINK>} What is Til Schweiger's occupation? \newline
\texttt{<SEARCH>} → Til Schweiger: [occupations] \newline
\texttt{<RECORD>} actor, producer, etc. \\
\midrule
\textbf{Turn 3} &
\texttt{<FINAL>} The answer is: four. \newline
Search terminated successfully. &
\texttt{<THINK>} Are there shared occupations?  \newline
\texttt{<SEARCH>} Maher Sabry (again)  \newline
\texttt{<RECORD>} (repetition) \\
\midrule
\textbf{Failure Mode} &
N/A – correct answer produced with minimal hops. &
\textcolor{red}{Over-searching}: fails to reason over retrieved evidence, keeps querying. \\
\midrule
\textbf{Insight} &
Success relies on using retrieved facts to trigger answer generation. &
The model needs an early stop or a reasoning trigger mechanism. \\
\bottomrule
\end{tabular}
\end{adjustbox}
\end{table}


\end{document}